\definecolor{darkblue}{rgb}{0.1,0.1,0.75}
\newlength\aftertitskip     \newlength\beforetitskip
\newlength\interauthorskip  \newlength\aftermaketitskip
\def\maketitle{\par
 \begingroup
   \def\thefootnote{\fnsymbol{footnote}}
   \def\@makefnmark{\hbox to 4pt{$^{\@thefnmark}$\hss}}
   \@maketitle \@thanks
 \endgroup
\setcounter{footnote}{0}
 \let\maketitle\relax \let\@maketitle\relax
 \gdef\@thanks{}\gdef\@author{}\gdef\@title{}\let\thanks\relax}
\def\@startauthor{\noindent \normalsize\bf}
\def\@endauthor{}
\def\@starteditor{\noindent \small {\bf Editor:~}}
\def\@endeditor{\normalsize}
\def\@maketitle{\vbox{\hsize\textwidth
 \linewidth\hsize \vskip \beforetitskip
 {\begin{center} \LARGE\@title \par \end{center}} \vskip \aftertitskip
 {\def\and{\unskip\enspace{\rm and}\enspace}%
  \def\addr{\small\it}%
  \def\email{\hfill\small\tt}%
  \def\name{\normalsize\bf}%
  \def\AND{\@endauthor\rm\hss \vskip \interauthorskip \@startauthor}
  \@startauthor \@author \@endauthor}
}}
\newcommand{\Xc}{\mathcal{X}}
\newcommand{\Hc}{\mathcal{H}}
\newcommand{\Sc}{\mathcal{S}}
\newcommand{\Yc}{\mathcal{Y}}
\newcommand{\Wc}{\mathcal{W}}
\newcommand{\R}{\mathbb{R}}
\newcommand{\B}{\mathbb{B}}
\renewcommand{\L}{\mathbb{L}}
\newcommand{\Pb}{\mathbb{P}}
\newcommand{\Kc}{\mathcal{K}}
\newcommand{\ve}{\bm{e}}
\newcommand{\vw}{\bm{w}}
\newcommand{\vx}{\bm{x}}
\newcommand{\vy}{\bm{y}}
\newcommand{\vz}{\bm{z}}
\newcommand{\vv}{\bm{v}}
\newcommand{\vu}{\bm{u}}
\newcommand{\ip}[2]{\langle {#1},\, {#2} \rangle}
\newcommand{\norm}[1]{\|{#1}\|}
\DeclareMathOperator*{\argmin}{argmin}
\DeclareMathOperator*{\argmax}{argmax}
\DeclareMathOperator{\sgn}{sgn}
\DeclareMathOperator{\acosh}{acosh}
\DeclareMathOperator{\asinh}{asinh}
\renewcommand{\H}{\mathbb{H}}
\newtheorem{theorem}{Theorem}[section]
\newtheorem{lem}[theorem]{Lemma}
\theoremstyle{definition}
\newtheorem{ass}{Assumption}
\newtheorem{rmk}[theorem]{Remark}
\numberwithin{equation}{section}
\newcommand*\circled[1]{\tikz[baseline=(char.base)]{
            \node[shape=circle,draw,inner sep=1pt] (char) {#1};}}
\title{Robust large-margin learning in hyperbolic space}
\author{\name Melanie Weber$^1$ \email{mw25@math.princeton.edu}\\
\name Manzil Zaheer$^2$ \email{manzilzaheer@google.com} \\
\name Ankit Singh Rawat$^2$ \email{ankitsrawat@google.com} \\
\name Aditya Menon$^2$ \email{adityakmenon@google.com} \\
  \name Sanjiv Kumar$^2$ \email{sanjivk@google.com}\\
  \addr{$^1$Princeton University}\\
  \addr{$^2$Google Research}
}
\begin{document}
\maketitle

\begin{abstract}
 Recently, there has been a surge of interest in representation learning in hyperbolic spaces, driven by their ability to represent hierarchical data with significantly fewer dimensions than standard Euclidean spaces.
However, the viability and benefits of hyperbolic spaces for downstream machine learning tasks have received less attention. In this paper, we present, to our knowledge, the first theoretical guarantees for learning a classifier in hyperbolic rather than Euclidean space.
Specifically, we consider the problem of learning a \emph{large-margin} classifier for data possessing a hierarchical structure.
We provide an algorithm to efficiently learn a {large-margin} hyperplane,
relying on the careful injection of \emph{adversarial examples}.
Finally, we prove that for hierarchical data that embeds well into hyperbolic space, the low embedding dimension ensures superior guarantees when learning the classifier directly in hyperbolic space.
\end{abstract}

\section{Introduction}
\label{intro}
Hyperbolic spaces have received sustained interest in recent years,
owing to their ability to compactly represent data possessing hierarchical structure (e.g., trees and graphs).
In terms of \emph{representation learning},
hyperbolic spaces offer a provable advantage over Euclidean spaces for such data: 
objects requiring an \emph{exponential} number of dimensions in Euclidean space
can be represented in a \emph{polynomial} number of dimensions in hyperbolic space~\citep{sarkar}.
This has motivated research into efficiently learning a suitable hyperbolic embedding for large-scale datasets~\citep{NK17,chamberlain,poincare-glove}.

Despite this impressive representation power, little is known about the benefits of hyperbolic spaces for downstream tasks.
For example, suppose we wish to perform classification on data that is intrinsically hierarchical.
One may na\"{i}vely ignore this structure, and use a standard Euclidean embedding and corresponding classifier (e.g., SVM).
However, can we design classification algorithms that exploit the structure of hyperbolic space,
and offer \emph{provable} benefits in terms of \emph{performance}? 
This fundamental question has received surprisingly limited attention.
While some prior work has proposed specific algorithms for learning classifiers in hyperbolic space~\citep{cho,hyp-clust}, these have been primarily empirical in nature,
and do not come equipped with theoretical guarantees on convergence and generalization.

In this paper, we take a first step towards addressing this question
for
the problem of learning a \emph{large-margin} classifier.
We provide a series of algorithms to \emph{provably} learn such classifiers in hyperbolic space,
and
establish their superiority over the classifiers na\"{i}vely learned in Euclidean space.
This shows that by using a hyperbolic space that better reflects the intrinsic
geometry of the data,
one can see gains in both representation size and performance. Specifically, our contributions are:
\begin{enumerate}[label=(\roman*),leftmargin=7mm, itemsep=0mm, partopsep=0pt,parsep=0pt]
    \item We establish that the suitable injection of \emph{adversarial examples} to \emph{gradient-based} loss minimization yields an algorithm which can \emph{efficiently} learn a \emph{large-margin} classifier (Theorem~\ref{thm:gd-conv}).
    We further establish that simply performing gradient descent or using adversarial examples alone does not suffice to efficiently yield such a classifier.
    \item we compare the Euclidean and hyperbolic approaches for hierarchical data and
analyze the trade-off between low embedding dimensions and low distortion (\emph{dimension-distortion trade-off}) when learning robust classifiers on embedded data.
For hierarchical data that embeds well into hyperbolic space, it suffices to use smaller embedding dimension while ensuring superior guarantees when we learn a classifier in hyperbolic space.
\end{enumerate}
Contribution (i) establishes that it is possible to design classification algorithms that provably converge to a \emph{large-margin} separator,
by suitably injecting adversarial examples.
Contribution (ii) shows that the adaptation of algorithms to the intrinsic geometry of the data can enable efficient utilization of the embedding space without affecting the performance.

\textbf{Related work}.
Our results can be seen as hyperbolic analogue of classic results for Euclidean spaces. The large-margin learning problem is well studied in the Euclidean setting.
Classic algorithms for learning classifiers include the perceptron~\citep{perceptron, novikoff_perceptron, Freund1999} and support vector machines~\citep{cortes}.
Robust margin-learning has been widely studied; notably by~\citet{lanckriet},~\citet{El-Ghaoui},~\citet{kim2006robust} and, recently, via adversarial approaches by~\citet{charles},~\citet{telgarsky},~\citet{Li2020Implicit} and~\citet{srebo}.
Adversarial learning has recently gained interest through efforts to train more robust deep learning systems~(see, e.g., \citep{madry,Fawzi}).

Recently, the representation of (hierarchical) data in hyperbolic space has gained a surge of interest.
The literature focuses mostly on learning representations in the Poincare~\citep{NK17,chamberlain,poincare-glove} and Lorentz~\citep{NK18} models of hyperbolic space, as well as on analyzing representation trade-offs in hyperbolic embeddings~\citep{sala,W20} . The body of work on performing downstream ML tasks in hyperbolic space is much smaller and mostly without theoretical guarantees.~\citet{hyp-clust} study hierarchical clustering in hyperbolic space.~\citet{cho} introduce a hyperbolic version of support vector machines for binary classification in hyperbolic space, albeit without theoretical guarantees.~\citet{ganea} introduce a hyperbolic version of neural networks that shows empirical promise on downstream tasks, but likewise without theoretical guarantees. {To the best of our knowledge, neither robust large-margin learning nor adversarial learning in hyperbolic spaces has been studied in the literature, including \citep{cho}. Furthermore, we are not aware of any theoretical analysis of dimension-distortion trade-offs in the related literature.}

\section{Background and notation}
\vspace{-2mm}
\begin{wrapfigure}{R}{0.25\linewidth}
\vspace{-8mm}
\centering
\includegraphics[width=2.2cm]{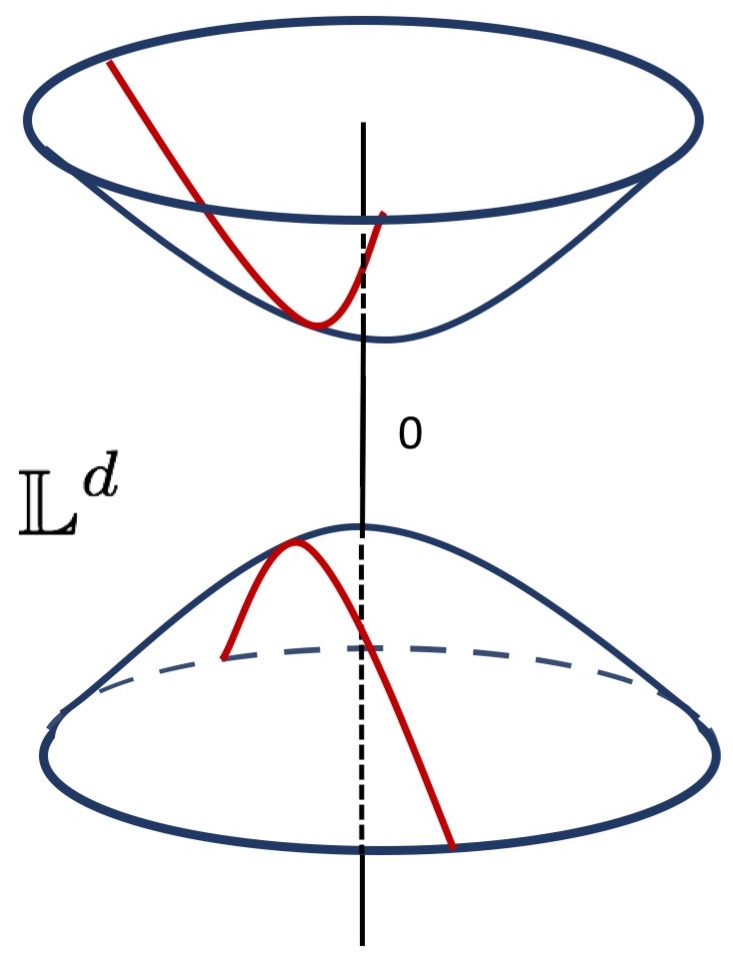}
\caption{Lorentz model (geodesics in red).}
\label{fig:hyp-model}
\vspace{-10mm}
\end{wrapfigure}
We begin by reviewing some background material on hyperbolic spaces, as well as embedding into and learning in these spaces.

\subsection{Hyperbolic space}

\label{sec:hyp-spaces-intro}
Hyperbolic spaces are smooth Riemannian manifolds
$\mathcal{M} = \H^d$ with constant negative curvature $\kappa$ and are as such locally Euclidean spaces. 
There are several equivalent models of hyperbolic space, each highlighting a different geometric aspect. In this work, we mostly consider the \emph{Lorentz model} (aka \emph{hyperboloid model}), which we briefly introduce below, with more details provided Appendix~\ref{appen:hs} (see \citep{bridson} for a comprehensive overview).

For $\vx, \vx' \in \R^{d+1}$, let $\vx \ast \vx' = x_0x'_0 - \sum_{i = 1}^{d}x_ix'_i$ denote their \emph{Minkowski product}. The Lorentz model is defined as 
$\L^d = \lbrace \vx \in \R^{d+1} : \; \vx*\vx = 1 \rbrace$ with distance measure 
$d_{\L} (\vx, \vx') = \acosh(\vx*\vx')$. Note that the distance $d_\L(\vx,\vx')$ corresponds to the length of the shortest line (\emph{geodesic}) along the manifold connecting $\vx$ and $\vx'$ (cf.~Fig.~\ref{fig:hyp-model}). We also point out that $(\L,d_{\L})$ forms a metric space. 

\subsection{Embeddability of hierarchical data}
\label{subsec: embed}
\noindent A map $\phi: X_1 \rightarrow X_2$ between metric spaces $(X_1,d_1)$ and $(X_2, d_2)$ is called an \emph{embedding}. {The \emph{multiplicative} distortion of $\phi$ is defined to be the smallest constant $c_M \geq 1$ such that, $\forall~\vx, \vx' \in X_1$,
$d_2\big(\phi(\vx),\phi(\vx'))\big) \leq d_1 (\vx,\vx') \leq c_M\cdot d_2\big(\phi(\vx),\phi(\vx')\big)$. When $c_M=1$, $\phi$ is termed an {\em isometric embedding}.} 
Since hierarchical data is tree-like, we can use classic embeddability results for trees as a reference point. 
\citet{bourgain, linial1995geometry} showed that an $N$-point metric $\Xc$ (i.e., $\vert \Xc \vert = N$) embeds into Euclidean space $\R^{O \left(\log^2 N\right)}$ with $c_M=O(\log N)$. This bound is tight for trees in the sense that embedding them in a Euclidean space (of any dimension) must have $c_m = \Omega(\log N)$~\citep{linial1995geometry}. In contrast, \citet{sarkar} showed that trees embed quasi-isometrically with $c_M = O(1+\epsilon)$ into hyperbolic space $\H^d$, even in the low-dimensional regime with the dimension as small as $d=2$.

\subsection{Classification in hyperbolic space}
We consider classification problems of the following form: $\Xc \subset \L^d$ denotes the feature space, $\Yc = \lbrace \pm 1 \rbrace$ the binary label space, and $\Wc \subset \R^{d+1}$ the model space. 
In the following, we denote the training set as $\Sc \subset \Xc \times \Yc$.

We begin by defining \emph{geodesic decision boundaries}. Consider the Lorentz space $\L^d$ with ambient space $\R^{d+1}$. Then every geodesic decision boundary is a hyperplane in $\R^d$ intersecting $\L^d$ and $\R^{d+1}$.
Further, consider the set of linear separators or \emph{decision functions} of the form
\begin{align}
\label{eq:decision_functions}
\Hc = \lbrace h_{\vw} \colon \vw \in \R^{d+1}, \vw*\vw <0 \rbrace,
~\text{  where  }~
h_{\vw}(\vx) = \begin{cases}
1, & \vw*\vx >0 \\
-1, &{\rm otherwise.} 
\end{cases}\;
\end{align}
Note that the requirement $\vw*\vw<0$ in \eqref{eq:decision_functions} ensures that the intersection of $\L^d$ and the decision hyperplane $h_{\vw}$ is not empty. The geodesic decision boundary corresponding to the decision function $h_{\vw}$ is then given by $\partial \Hc_{\vw} = \lbrace	\vz \in \L^d : \vw*\vz = 0 \rbrace.$
The distance of a point $\vx \in \L^d$ from the decision boundary $\partial\Hc_{\vw}$ can be computed as $d\big(\vx, {\partial \Hc_{\vw}}\big) = \big|\asinh \big({\vw*\vx}/{\sqrt{- \vw*\vw}}\big) \big|$~\citep{cho}.
\subsection{Large-margin classification in hyperbolic space}
In this paper, we are interested in learning a {\em large margin} classifier in a hyperbolic space. Analogous to the Euclidean setting, the natural notion of margin is the minimal distance to the decision boundary over all training samples:
\begin{align}\label{eq:hyp-margin}
{\rm margin}_{\Sc} (w) &= \inf_{(\vx,y) \in \Sc} yh_{\vw}(\vx)\cdot d(\vx, {\partial \mathcal{H}_w}) \;
= \inf_{(\vx,y) \in \Sc} \asinh \Big({y(\vw*\vx)}/{\sqrt{- \vw*\vw}}\Big) \;.
\end{align}
For \emph{large-margin classifier learning}, we aim to find 
$h_{\vw^*}$ defined by
$\vw^* = \argmax_{\vw \in \mathcal{C}}~{\rm margin}_{\Sc} (\vw)$, 
where $\mathcal{C} = \lbrace \vw \in \R^{d+1}: \; \vw*\vw <0 \rbrace$ imposes a \emph{nonconvex} constraint. This makes the problem computationally intractable using classical methods, unlike its Euclidean counterpart.

\section{Hyperbolic linear separator learning}
The first step towards the goal of learning a large-margin classifier is to establish that we 
can \emph{provably} learn \emph{some} separator.

\subsection{The hyperbolic perceptron algorithm}
\label{sec:hyp-per}
The hyperbolic perceptron (cf.~Alg.~\ref{alg:hyp-svm}) learns a binary classifier $\vw$ with respect to the Minkowski product.
This is implemented in the update rule
$\vv_t \gets \vw_t + y\hat{\vx}$,
similar to the Euclidean case.  We use the shorthand $\hat{\vx}=(x_0,-x_1, \dots, -x_n)$.
While intuitive,
it remains to establish that this algorithm converges,
i.e.,
finds a solution which correctly classifies all the training samples.
To this end, consider the following notion of \emph{hyperbolic linear separability with a margin}:
for $X, X' \subseteq \L^d$,
we say that $X$ and $X'$ are linearly separable with (hyperbolic) margin $\gamma_H$, if there exists a $\vw \in \R^{d+1}$ with $\sqrt{-\vw\ast\vw} = 1$ such that
$\vw*\vx > {\sinh}(\gamma_H)~\forall \; \vx \in X$ and $\vw*\vx' < -{\sinh}(\gamma_H)~\forall \; \vx' \in X'$. 
Assuming our training set is separable with a margin,
the hyperbolic perceptron has the following convergence guarantee:
\begin{theorem}[\citep{tabaghi}]
\label{prop:h-perceptron}
Assume that there is some $\bar{\vw} \in \R^{d+1}$ and some $\gamma_H>0$, such that $y_j (\bar{\vw}*\vx_j) \geq \sinh(\gamma_H)$~for $j=1, \dots, \vert \mathcal{S} \vert$, i.e., $\mathcal{S}$ is separable with margin $\gamma_H$. Then, Alg.~\ref{alg:hyp-svm} converges in $O\left({1}/{\sinh^2(\gamma_H)}\right)$ steps,  returning a solution with margin $\gamma_H$.
\end{theorem}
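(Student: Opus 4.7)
My plan is to mirror Novikoff's classical perceptron convergence argument, replacing the Euclidean inner product with the Minkowski product $\ast$ and explicitly tracking the normalization step in Alg.~\ref{alg:hyp-svm}. The natural potential to monitor is the alignment $\Phi_t := \vw_t \ast \bar{\vw}$; normalization maintains the invariant $\vw_t \ast \vw_t = -1$ for all $t \ge 1$.

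The per-update analysis is direct. On a mistake $(\vx,y)$ (so $y(\vw_t \ast \vx) \le 0$), set $\vv_t = \vw_t + y\vx$; expanding and using $\vw_t \ast \vw_t = -1$ and $\vx \ast \vx = 1$ gives
\begin{align*}
\vv_t \ast \bar{\vw} &= \Phi_t + y(\vx \ast \bar{\vw}) \;\ge\; \Phi_t + \sinh(\gamma_H), \\
\vv_t \ast \vv_t &= -1 + 2y(\vw_t \ast \vx) + 1 \;=\; 2y(\vw_t \ast \vx) \;\le\; 0.
\end{align*}
The first inequality invokes the margin hypothesis $y(\bar{\vw}\ast\vx) \ge \sinh(\gamma_H)$; the second shows $\vv_t$ stays spacelike, so the normalization $\vw_{t+1} = \vv_t/\sqrt{-\vv_t\ast\vv_t}$ is well-defined and preserves $\vw_{t+1}\ast\vw_{t+1} = -1$. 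Iterating the first bound, the auxiliary unnormalized iterate $\tilde{\vw}_t := \vw_0 + \sum_{s \in \mathrm{mistakes}} y_s \vx_s$ satisfies $\tilde{\vw}_t \ast \bar{\vw} \ge \vw_0 \ast \bar{\vw} + M_t \sinh(\gamma_H)$ after $M_t$ mistakes; combined with $\vw_0 \ast \bar{\vw} \le 0$, this linear accumulation drives the direction into the half-space selected by $\bar{\vw}$, and once $M_t = \Omega(1/\sinh(\gamma_H))$ the signs $\sgn(\vw_t \ast \vx_j) = y_j$ are forced on every training point, so the algorithm terminates.

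The main obstacle I foresee is the normalization step itself. In Euclidean Novikoff one couples the linear lower bound on $\langle\vw_t,\bar{\vw}\rangle$ with the at-most-linear growth of $\|\vw_t\|^2$ and closes via Cauchy--Schwarz; here $-\vw_t\ast\vw_t = 1$ is invariant, and crucially there is no uniform reverse Cauchy--Schwarz for spacelike--timelike pairs under $\ast$, so the denominator $\sqrt{-\vv_t\ast\vv_t}$ is not a priori bounded and could in principle absorb the per-step $\sinh(\gamma_H)$ progress. The technically delicate part is therefore relating the unnormalized trajectory back to the actual normalized iterates of Alg.~\ref{alg:hyp-svm} --- exploiting that each normalization is a positive rescaling and hence preserves $\sgn(\vw_t \ast \vx)$ on all training points --- so that the accumulated unnormalized alignment with $\bar{\vw}$ suffices to rule out further mistakes and yield the claimed $O(1/\sinh(\gamma_H))$ bound.
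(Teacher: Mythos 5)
Your setup matches the paper's opening moves: you track $\vw_t \ast \bar{\vw}$, establish the per-mistake increment of $\sinh(\gamma_H)$ under the margin hypothesis, and verify that $\vv_t \ast \vv_t < 0$ so the iterate stays timelike. But your proposal stops exactly where the real work starts. You write that the ``technically delicate part'' is relating the accumulated alignment $\tilde{\vw}_t \ast \bar{\vw}$ back to the normalized iterates and thereby forcing correct signs on every training point --- that is not a minor bookkeeping detail, it is the substance of the bound, and you do not supply it. Asserting that ``once $M_t = \Omega(1/\sinh(\gamma_H))$ the signs are forced'' is precisely the conclusion you are trying to prove, not an argument for it.

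The deeper issue is that you dismiss the very tool the paper uses to close the argument. You claim that ``there is no uniform reverse Cauchy--Schwarz for spacelike--timelike pairs under $\ast$,'' so the denominator $\sqrt{-\vv_t\ast\vv_t}$ might absorb the progress. But $\vw_t$ and $\bar{\vw}$ are both \emph{timelike} ($\vw\ast\vw < 0$ for both), and for a timelike--timelike pair in the same time cone the reverse Cauchy--Schwarz inequality \emph{does} hold: $\cosh(\measuredangle(\vu,\vu')) = -\vu\ast\vu'/(|\vu|\,|\vu'|) \ge 1$. The paper's proof exploits exactly this. Combined with the facts that $|\bar{\vw}| = 1$ and (because the update divides by $\min\{1,\sqrt{-\vv_t\ast\vv_t}\}$) $|\vw_t| \ge 1$ for all $t$, one gets a constant lower bound $1 \le -\vw_T\ast\bar{\vw}$, while the telescoped increment gives the upper bound $-\vw_T\ast\bar{\vw} \le -\vw_0\ast\bar{\vw} - T\sinh(\gamma_H)$; bounding $-\vw_0\ast\bar{\vw}$ by a constant and solving for $T$ yields the $O(1/\sinh(\gamma_H))$ rate. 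This is the missing piece in your proposal. Also note a minor misreading of the algorithm: it divides by $\min\{1,\sqrt{-\vv_t\ast\vv_t}\}$, so the invariant is $\vw_t\ast\vw_t \le -1$ (equivalently $|\vw_t| \ge 1$), not $\vw_t\ast\vw_t = -1$ as you assert --- this matters because it is exactly what makes the normalization harmless (it can only inflate $\vw_t\ast\bar{\vw}$, never shrink it), which is the paper's Case~2. Finally, you do not address the claim that the returned solution has margin $\gamma_H$.
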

\noindent The proof of Thm.~\ref{prop:h-perceptron} is a simple adaption of
the standard proof of the Euclidean perceptron.

\begin{figure}
\begin{minipage}[t]{.48\textwidth}
\begin{algorithm}[H]\small
  \caption{Hyperbolic perceptron}
  \label{alg:hyp-svm}  
  \begin{algorithmic}[1]
   \State Initialize $\vw_0 \in \R^{d+1}$.
   \For{$t=0,1,\dots, T-1$}
   		\For {$j=1, \dots, n$}
           \If {$\sgn(\vx_j * \vw_t) \neq y_j$}
               \State $\vv_t \gets \vw_t + y_j \hat{{\vx}}_j$
           \EndIf
       \EndFor
   \EndFor
   \State Output: $\vw_T$   		
  \end{algorithmic}
\end{algorithm}
\end{minipage}\hfill
\begin{minipage}[t]{.49\textwidth}
\begin{algorithm}[H]\small
  \caption{Adversarial Training}\label{alg:adv-mf}
  \begin{algorithmic}[1] 
    \State Initialize $\vw_0 = 0$, $\Sc' = \emptyset$.
    \For {$t=0,1,\dots, T-1$}
    	\State $\Sc_t \sim \Sc$ iid with $\vert \Sc_t \vert = m$; $\Sc_t' \gets \emptyset$.
     	\For {$i=0,1,\dots, m$}
     		\State $\tilde{\vx}_i \gets \argmax_{d_\L(\vx_i,\vz)\leq \alpha} l(\vz,y_i;\vw_t)$
     	\EndFor
        \State $\Sc_t' \gets \lbrace (\tilde{\vx}_i, y_i)	\rbrace_{i=1}^m$
        \State $\Sc' \gets \Sc' \cup \Sc_t'$
        \State $\vw_{t+1} \gets \mathcal{A}(\vw_t, \Sc, \Sc')$
    \EndFor
    \State Output: $\vw_T$   
   \end{algorithmic}
  \end{algorithm}
\end{minipage}
\vspace{0mm}
\end{figure}

\subsection{The challenge of large-margin learning}
Thm.~\ref{prop:h-perceptron} establishes that the hyperbolic perceptron converges to \emph{some} linear separator. However, for the purposes of generalization, one would ideally like to converge to a \emph{large-margin} separator. As with the classic Euclidean perceptron, no such guarantee is possible for the hyperbolic perceptron; this motivates us to ask whether a suitable modification can rectify this. 

Drawing inspiration from the Euclidean setting, a natural way to proceed is to consider the use of \emph{margin losses}, such as the logistic or hinge loss.
Formally, let $l \colon \Xc \times \{ \pm 1 \} \to \mathbb{R}_+$ be a loss function:  
\begin{align}
\label{eq:f_def}
l( \vx, y; \vw ) = f( y \cdot (\vw * \vx) ),
\end{align}
where $f \colon \mathbb{R} \to \mathbb{R}_+$ is some convex, non-increasing function, e.g., the hinge loss. The empirical risk of the classifier parameterized by $\vw$ on the training set $\Sc \subset \Xc \times \{\pm 1\}$ is
$ L( \vw; \Sc ) =  \sum\nolimits_{(\vx, y) \in \Sc} l( \vx, y; \vw )/{| \Sc|}$.
Commonly, we learn a classifier by minimizing $L( \vw; \Sc )$ via gradient descent with iterates 
\begin{align}
    \label{eqn:gd-updates}
    \vw' &\gets \vw_t - {\eta}\sum\nolimits_{(\vx, y) \in \Sc} \nabla l(\vx, y;\vw_t)/ {\vert \Sc \vert} \\
    \vw_{t+1} &\gets \frac{\vw'}{\sqrt{-\vw' \ast \vw'}} \; ,
\end{align}
where $\eta > 0$ denotes the learning rate. Unfortunately, while this will yield a large-margin solution,
the following result demonstrates that the number of iterations required may be prohibitively large.

\begin{theorem}\label{prop:gd-exp}
Let $\ve_i \in \R^{d+1}$ be the $i$-th standard basis vector. Consider the training set $\Sc=\lbrace (\ve_1, 1), (-\ve_1,-1) \rbrace$ and the initialization $\vw_0=\ve_2$. Suppose $\{\vw_t\}_{t \geq 0}$ is a sequence of iterates in \eqref{eqn:gd-updates}.
Then, the number of iterations needed to achieve margin $\gamma_H$ is $\Omega(\exp(\gamma_H))$.
\end{theorem}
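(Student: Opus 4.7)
The plan is to reduce the $(d{+}1)$-dimensional dynamics to a scalar recurrence and then analyze the trade-off between step size and feasibility. For both training points, $y(\vw * \vx) = \vw * \ve_1 =: w$, so each contributes $f(w)$ to the loss; using $\nabla_\vw(\vw*\vx) = J\vx$ with $J = \diag(1,-1,\ldots,-1)$, both also contribute the same gradient $f'(w)\ve_1$. Hence the GD update \eqref{eqn:gd-updates} only shifts $\vw$ along $\ve_1$, so with $\vw_0 = \ve_2$ the iterates take the form $\vw_t = w_t \ve_1 + \ve_2$, with scalar recurrence $w_0 = 0$ and $w_{t+1} = w_t - \eta f'(w_t)$. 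The sequence $\{w_t\}$ is monotonically non-decreasing because $-f' \geq 0$. The margin is then immediate: $\vw_t * \vw_t = w_t^2 - 1$, so the hyperplane is well-defined precisely when $|w_t| < 1$, and using $\asinh(u/\sqrt{1-u^2}) = \tanh^{-1}(u)$,
\begin{equation*}
\mathrm{margin}_\Sc(\vw_t) \;=\; \asinh\!\big(w_t/\sqrt{1-w_t^2}\big) \;=\; \tanh^{-1}(w_t).
\end{equation*}
Achieving margin $\gamma_H$ thus requires some iterate to land in $[\tanh(\gamma_H), 1)$, a window of width $\delta := 1 - \tanh(\gamma_H) = \Theta(e^{-2\gamma_H})$ for large $\gamma_H$.

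The key claim is that landing an iterate in this exponentially narrow window forces the step size $\eta$ to be exponentially small in $\gamma_H$. Indeed, by monotonicity, once $w_t \geq 1$ the iterate is infeasible forever, so the trajectory must enter the window before crossing $1$. At the first $T$ with $w_T \geq \tanh(\gamma_H)$, the requirement $w_T < 1$ gives $\eta|f'(w_{T-1})| < 1 - w_{T-1}$ while $w_T \geq \tanh(\gamma_H)$ gives $\eta|f'(w_{T-1})| \geq \tanh(\gamma_H) - w_{T-1}$, so the step length lies in an interval of width $\delta$. The cleanest illustration is the hinge loss, where $|f'(w)| = 1$ on $[0,1)$: iterates are $w_t = t\eta$, and for some $T\eta$ to fall in $[\tanh(\gamma_H), 1)$ without a specially aligned $\eta$ one needs $\eta \leq \delta$, yielding $T \geq \tanh(\gamma_H)/\eta \geq \tanh(\gamma_H)/\delta = \Omega(e^{2\gamma_H}) = \Omega(\exp(\gamma_H))$. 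For any convex non-increasing loss with $|f'|$ bounded below by a positive constant $c$ on $[0,1]$ (e.g.\ logistic with $c = 1/(1+e)$ or exponential with $c = 1/e$), the same step-length-in-width-$\delta$ constraint pinches $\eta$ to $O(\delta/c) = O(e^{-2\gamma_H})$, and the per-step progress bound $w_T \leq T \cdot \eta|f'(0)|$ together with $w_T \geq \tanh(\gamma_H) = \Theta(1)$ yields the same exponential lower bound on $T$.

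The technical crux is handling losses with nonconstant $|f'|$: because $|f'|$ varies on $[0,1]$, the landing iterate $w_{T-1}$ need not a priori be close to $\tanh(\gamma_H)$, so the one-step bound does not immediately pinch $\eta$. The cleanest resolution is a dichotomy in $\eta$: if $\eta \gg \delta/c$, then at the first iterate $w_{T-1}$ sufficiently close to the upper feasibility boundary the step $\eta|f'(w_{T-1})|$ exceeds $1 - w_{T-1}$, and the monotone trajectory leaps over the width-$\delta$ window to $w_T \geq 1$, so margin $\gamma_H$ is never attained (iteration count $= +\infty$, trivially $\Omega(\exp(\gamma_H))$); otherwise $\eta \leq O(\delta/c)$ and the progress bound above finishes the proof.
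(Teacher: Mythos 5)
Your scalar reduction is correct for the \emph{raw} GD update \eqref{eqn:gd-updates}: both training points contribute the same gradient $f'(w_t)\ve_1$, so $\vw_t = w_t\ve_1 + \ve_2$ with $w_{t+1}=w_t-\eta f'(w_t)$, and the identity $\mathrm{margin}_\Sc(\vw_t)=\asinh\!\big(w_t/\sqrt{1-w_t^2}\big)=\tanh^{-1}(w_t)$ is right. But here the two proofs diverge at the very first move: the paper's own proof (Appendix~\ref{appen:C5}) proves a \emph{different} dynamical system. In the appendix restatement the theorem's update rule explicitly includes a per-step renormalization $\vw_{t+1}\gets \vw_{t+1}/\sqrt{-\vw_{t+1}*\vw_{t+1}}$, and the iterates are written $\vw_t=(a_t,\sqrt{a_t^2+1},0,\dots,0)$, so $\vw_t*\vw_t=-1$ and the first coordinate $a_t$ is \emph{unbounded}. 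With normalization the margin is $\asinh(a_t)$, and since $-f'\le 1$ for the hyperbolic logistic loss, $a_t$ grows at most linearly; hence reaching margin $\gamma_H$ requires $a_t\ge\sinh(\gamma_H)$ and thus $t=\Omega(e^{\gamma_H})$. There is no narrow target window to hit — the target just recedes exponentially far. You work with the un-renormalized update, under which $w_t$ must stay in $[0,1)$ and the target is the window $[\tanh(\gamma_H),1)$ of width $\delta=\Theta(e^{-2\gamma_H})$, and that is where your argument has a genuine gap.

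The gap is the dichotomy step. You assert that if $\eta\gg\delta/c$ then the monotone trajectory necessarily leaps over the width-$\delta$ window; but whether the iterate that first exceeds $\tanh(\gamma_H)$ also exceeds $1$ depends on where the \emph{previous} iterate $w_{T-1}$ sits, which is itself determined by $\eta$. Nothing forces $w_{T-1}$ to be within $\delta$ of $\tanh(\gamma_H)$, so the constraint $\eta|f'(w_{T-1})|\in[\tanh(\gamma_H)-w_{T-1},\,1-w_{T-1})$ does not pinch $\eta$. Concretely, with the hinge loss ($|f'|\equiv 1$) and a step size tuned as $\eta=\tanh(\gamma_H)/n$ for a small integer $n$, the iterate $w_n=n\eta=\tanh(\gamma_H)$ lands exactly in the window in $n=O(1)$ steps, even though $\eta$ is of constant order and far larger than $\delta$. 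Your own hedge ``without a specially aligned $\eta$'' is precisely the hole: the theorem is a worst-case statement over the adversary achieving the margin, so the specially aligned $\eta$ is exactly what must be ruled out. The paper's renormalization is what makes the lower bound clean — each step adds a bounded amount to the unbounded quantity $a_t$, so no alignment can shortcut the $\Omega(\sinh(\gamma_H))$ steps. If you want to prove the un-renormalized version, you would need an additional hypothesis (e.g.\ that $\eta$ is fixed independently of $\gamma_H$, in which case for all large $\gamma_H$ the window is missed entirely and the iteration count is vacuously infinite) or a careful case analysis that your dichotomy does not supply.
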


While this result is disheartening,fortunately,
we now present a simple resolution: by suitably adding \emph{adversarial examples}, the gradient descent converges to a large-margin solution in \emph{polynomial time}.

\section{Hyperbolic large-margin separator learning via adversarial examples}
\label{sec:adv-learning}
\setlength{\textfloatsep}{10pt}
Thm.~\ref{prop:gd-exp} reveals that gradient descent on a margin loss is insufficient to efficiently obtain a large-margin classifier.
Adapting the approach proposed in \citep{charles} for the Euclidean setting, we show how to alleviate this problem by enriching the training set with \emph{adversarial examples} before updating the classifier (cf.~Alg.~\ref{alg:adv-mf}). 
In particular, we minimize a robust loss
\begin{align}\label{eq:L-rob}
\min_{\vw \in \R^{d+1}} \; 
L_{\rm rob}(\vw; \Sc) &:= \frac{1}{\vert \mathcal{S} \vert} \sum_{(\vx,y) \in \Sc} l_{\rm rob} (\vx,y;\vw) \; \\ 
l_{\rm rob} (\vx,y;\vw) &:= \max_{\substack{\vz \in \L^d:\\ d_{\L}(\vx,\vz) \leq \alpha}} l(\vz,y;\vw)\; .
\end{align}
The problem has a minimax structure, where the outer optimization minimizes the training error over $\mathcal{S}$. 
The inner optimization generates an adversarial example by perturbing a given input feature $\vx$ on the hyperbolic manifold. Note that the magnitude of the perturbation added to the original example is bounded by $\alpha$, which we refer to as the {\em adversarial budget}. In particular, we want to construct a perturbation that maximizes the loss $l$, i.e.,
$
\tilde{\vx} \gets \argmax_{d_\L(\vx,\vz)\leq \alpha} l(\vz,y;\vw).
$
{In this paper, we restrict ourselves to only those adversarial examples that lead to misclassification with respect to the current classifier, i.e., $h_{\vw}(\vx) \neq h_{\vw}(\tilde{\vx})$ (cf.~Remark~\ref{rem:adv-cert}).}

Adversarial example $\tilde{\vx}$ can be generated efficiently by reducing the problem to an (Euclidean) linear program with a spherical constraint:
\begin{align}\label{eq:cert}
({\rm CERT}) \quad &\max_{\vz \in \R^d} \; -w_0z_0 + \sum_{i=1}^{d} w_i z_i \\
&{\rm s.t.} \;  \sum_{i=1}^{d} -x_i z_i \leq \cosh(\alpha) - x_0 z_0, \;\norm{\vz_{\backslash 0}}^2 = z_0^2 - 1 \; .
\end{align}
Importantly, as detailed in Appendix~\ref{sec:C2} and summarized next, ({\rm CERT}) can be solved in closed-form.
\begin{theorem}
\label{thm:cert}
Given the input example $(\vx, y)$, let $\vx_{\backslash 0} = (x_1,\ldots, x_d)$. We can efficiently compute a solution to {\rm CERT} or decide that no solution exists. If a solution exists, then based on a guess of $z_0$, the solution has the form $\tilde{\vx} = \left(z_0, \sqrt{z_0^2 -1} \left(b_\alpha \check{\vx} + \sqrt{1-b^2_\alpha} \check{\vx}^{\perp} \right) \right)$. Here, $b_\alpha$ depends on the adversarial budget $\alpha$, and  $\check{\vx}^{\perp}$ is a unit vector orthogonal to $\check{\vx} = -{\vx_{\backslash 0}}/{\norm{\vx_{\backslash 0}}}$ along $\vw$.
\end{theorem}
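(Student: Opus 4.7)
The plan is to reduce \eqref{eq:cert} to a one-dimensional optimization in $z_0 \in [1,\infty)$ by solving the remaining ``spherical slice'' in closed form, and then to handle the scalar problem in $z_0$ by elementary calculus.

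First, the equality constraint $\norm{\vz_{\backslash 0}}^2 = z_0^2 - 1$ forces $z_0 \geq 1$ and admits the parametrization $\vz_{\backslash 0} = \sqrt{z_0^2-1}\,\vu$ with $\vu \in S^{d-1}\subset\R^d$. For each fixed guess of $z_0$, \eqref{eq:cert} becomes: maximize $\vw_{\backslash 0}^{\top}\vu$ over the unit sphere in $\R^d$, subject to the single affine constraint $\vx_{\backslash 0}^{\top}\vu \geq (x_0 z_0 - \cosh\alpha)/\sqrt{z_0^2-1}$, with the $z_0$-only term $-w_0 z_0$ carried to the outer problem. Both the objective and the constraint depend on $\vu$ only through its projections onto the two fixed vectors $\vw_{\backslash 0}$ and $\vx_{\backslash 0}$, so any component of $\vu$ orthogonal to $\mathrm{span}(\vw_{\backslash 0}, \vx_{\backslash 0})$ merely consumes unit-norm budget without affecting anything. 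Hence without loss of generality $\vu$ lies in this 2D plane.

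Writing $\vu = b\,\check{\vx} + \sqrt{1-b^2}\,\check{\vx}^{\perp}$, with $\check{\vx} = -\vx_{\backslash 0}/\norm{\vx_{\backslash 0}}$ and $\check{\vx}^{\perp}$ the in-plane unit vector orthogonal to $\check{\vx}$ chosen so that $\vw_{\backslash 0}^{\top}\check{\vx}^{\perp}\geq 0$, the affine constraint collapses to the scalar inequality $b \leq b_\alpha := (\cosh\alpha - x_0 z_0)/(\norm{\vx_{\backslash 0}}\sqrt{z_0^2-1})$, while the objective becomes $b\,(\vw_{\backslash 0}^{\top}\check{\vx}) + \sqrt{1-b^2}\,(\vw_{\backslash 0}^{\top}\check{\vx}^{\perp})$. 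A short first-order analysis then shows that the adversarial direction, which seeks to push the example across the decision boundary, forces the inequality to bind at $b = b_\alpha$: taking $b$ as large as permitted simultaneously exhausts the adversarial budget and maximizes the nonnegative coefficient $\sqrt{1-b^2}$ of the ``perpendicular'' component of the objective. Substituting back recovers exactly the stated form of $\tilde{\vx}$. Plugging this inner optimum into the original objective yields an explicit univariate function of $z_0$, which I would maximize (by one-dimensional calculus or line search) over the feasibility region $\{z_0 \geq 1 : -1 \leq b_\alpha(z_0) \leq 1\}$; if this region is empty we declare that no CERT solution exists.

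The main obstacle I anticipate is the clean justification of the tightness claim $b = b_\alpha$ in the 2D sub-problem: it requires a short case analysis on the sign of $\vw_{\backslash 0}^{\top}\check{\vx}$ together with the boundary cases $b_\alpha \in \{-1,1\}$, and consistent handling of the label $y$ (the stated CERT corresponds to $y=+1$; the $y=-1$ case is obtained by flipping the sign of $\vw$ in the objective). A secondary check is that the resulting $\tilde{\vx}$ does cross the decision boundary, enforcing the implicit misclassification condition noted in Remark~\ref{rem:adv-cert}; this should follow from taking the constraint to be active together with the sign convention on $\check{\vx}^{\perp}$.
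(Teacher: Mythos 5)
Your overall approach matches the paper's: reduce to a fixed-$z_0$ sub-problem on the sphere, observe that everything depends on $\vu$ only through its components in $\mathrm{span}(\vw_{\backslash 0},\vx_{\backslash 0})$, decompose along $\check{\vx}$ and $\check{\vx}^\perp$, and read off $\tilde{\vx}$. The paper performs exactly the same normalization to $\check{\rm CERT}$ and the same 2D decomposition ($\check{\vw} = \xi\check{\vx} + \zeta\check{\vx}^\perp$), so structurally you are aligned.

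However, the step where you justify $b = b_\alpha$ contains a genuine error. You write that ``taking $b$ as large as permitted \ldots\ maximizes the nonnegative coefficient $\sqrt{1-b^2}$,'' but $\sqrt{1-b^2}$ is \emph{decreasing} in $|b|$, so pushing $b$ up to $b_\alpha$ does the opposite of what you claim. The constraint $b \leq b_\alpha$ is not tight because it simultaneously helps both terms; it is tight (when it is) because the unconstrained maximizer $b^\ast = \xi/\sqrt{\xi^2+\zeta^2}$ of $\xi b + \zeta\sqrt{1-b^2}$ exceeds $b_\alpha$, forcing the optimum onto the boundary of the cap. The paper sidesteps this by simply \emph{asserting} that the optimizer lies on the cone $\ip{\check{\vx}}{\check{\vz}} = b$ and then arguing that, given this, all remaining mass should go along $\check{\vx}^\perp$ (where $\zeta = \ip{\check{\vw}}{\check{\vx}^\perp} \geq 0$ by the chosen sign). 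Neither the paper nor your proposal carefully handles the case where the cap constraint is slack (i.e., $\check{\vw}$ already satisfies $\ip{\check{\vx}}{\check{\vw}} \leq b$), but only your proposal offers an explicitly false reason for tightness. To fix your argument, replace the $\sqrt{1-b^2}$ claim with a KKT or direct comparison argument: if $b^\ast \leq b_\alpha$ the constraint is inactive and $\check{\vz}^\ast = \check{\vw}$ (no useful adversarial example, cf.\ Remark~\ref{rem:adv-cert}); otherwise the optimum sits at $b = b_\alpha$, and for fixed $b$ the objective $\xi b + \zeta\sqrt{1-b^2}$ is maximized by putting the remaining mass entirely along $\check{\vx}^\perp$ with the sign that makes $\zeta \geq 0$.
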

\begin{rmk}
\label{rem:adv-cert}
Note that according to Thm.~\ref{thm:cert}, it is possible that, for a particular guess of $z_0$, we may not be able to find an adversarial example $\tilde{\vx}$ that leads to a prediction that is inconsistent with $\vx$, i.e., $h_{\vw}(\vx) \neq h_{\vw}(\tilde{\vx})$. Thus, for some $t$, we may have $|\Sc_t'| < m$ in Alg.~\ref{alg:adv-mf}.  
\end{rmk}

{\renewcommand{\arraystretch}{2}
\begin{table*}[t]
\vspace{-5mm}
\footnotesize
\centering
\noindent\makebox[0.8\textwidth]{
    \begin{tabular}{lccccc}
      \toprule
      \thead{Space} & \thead{Perceptron} & \thead{Adversarial margin} &\thead{Adversarial ERM} & \thead{Adversarial GD} \\
      \midrule
      Euclidean (prior work) & $O\Big(\frac{1}{\gamma_E^2}\Big)$ & $\gamma_E - \alpha$ 
      & {$\Omega \left(\exp (d) \right)$}
      &$\Omega \left( {\rm poly}\left( \frac{1}{\gamma_E - \alpha} \right) \right)$\\
      Hyperbolic (this paper) & $O \Big(\frac{1}{\sinh^2(\gamma_H)}\Big)$ & $\gamma_H-\alpha$
      & {$\Omega \left(\exp(d)  \right)$}
      & $\Omega \Big( {\rm poly} \left( \frac{1}{\sinh(\gamma_H - \alpha)}  \right) \Big)$\\    
       \bottomrule
    \end{tabular}
    }
    \caption{Comparison between Euclidean and hyperbolic spaces for Perceptron (cf.~Alg.~\ref{alg:hyp-svm}) and adversarial training (cf.~Alg.~\ref{alg:adv-mf}). 
    Recall that $\gamma_{E/H}$, $\alpha$, and $d$ denote the (Euclidean/ hyperbolic) margin of the training data, the adversarial perturbation budget, and the underlying dimension, respectively. Note that the hyperbolic algorithms recover the rates of their Euclidean counterparts.} 
    \label{tab:perceptron}
  \end{table*}
}
\setlength{\textfloatsep}{10pt}
The minimization with respect to $\vw$ in \eqref{eq:L-rob} can be performed by an iterative optimization procedure, which generates a sequence of classifiers $\{ \vw_t \}$.
We update the classifier $\vw_t$ according to an update rule $\mathcal{A}$,
which accepts as input the current estimate of the weight vector, the original training set, and an adversarial perturbation of the training set. The update rule produces as output a weight vector which approximately minimizes the robust loss $L_{\rm rob}$ in \eqref{eq:L-rob}.

We now establish that for a gradient based update rule,
the above adversarial training procedure will efficiently converge to a large-margin solution.
Table~\ref{tab:perceptron} summarizes the results of this section and compares with the corresponding results in the Euclidean setting~\citep{charles}. 

\subsection{Fast convergence via gradient-based update}
Consider Alg.~\ref{alg:adv-mf} with 
$\mathcal{A}(\vw_t, \mathcal{S}, \mathcal{S}_t')$ being a gradient-based update
with learning rate $\eta_t > 0$:
\begin{align}\label{eq:adv-gd-1}
\vw' &\gets \vw_t - \frac{\eta_t}{\vert \mathcal{S}_t' \vert} \cdot\sum\nolimits_{(\tilde{\vx}, y) \in \Sc_t'} \nabla_{\vw_t} l(\tilde{\vx}, y ; \vw_t) \\
\vw_{t+1} &\gets \frac{\vw'}{\sqrt{- \vw' \ast \vw'}}
\; .
\end{align}

\noindent To compute the update, we need to compute gradients of the outer minimization problem, i.e., $\nabla_{\vw} \; l_{\rm rob}$ over $\mathcal{S}_t'$ (cf.~\eqref{eq:L-rob}). 
However, this function is itself a maximization problem. We therefore compute the gradient at the maximizer of this inner maximization problem. Danskin's Theorem~\citep{danskin, bertsekas} ensures that this gives a valid descent direction. 
Given the closed form expression for the adversarial example $\tilde{\vx}$ as per Thm.~\ref{thm:cert}, the gradient of the loss is 
$$\nabla_{\vw} \; l(\tilde{\vx},y;\vw) 
= f'(y(\vw*\tilde{\vx})) \cdot \nabla_{\vw} \; y(\vw*\tilde{\vx}) 
= f'(y(\vw*\tilde{\vx})) \cdot y \widehat{\tilde{\vx}}{}^T,$$
where 
$y\widehat{\tilde{\vx}}^{\rm T} = y(\tilde{x}_0, -\tilde{x}_1, \dots, -\tilde{x}_n)^{\rm T}$. With Danskin's theorem, $\nabla l(\tilde{\vx},y; \vw) \in \partial l_{\rm rob}(\vx,y; \vw)$, so we can compute the descent direction and perform the step in \eqref{eq:adv-gd-1}. We defer details to Appendix~\ref{appen:C5}. 

\subsubsection{Convergence analysis}
We now establish that the above gradient-based update converges to a large-margin solution in polynomial time.
For this analysis, we need the following assumptions:
\begin{ass}
\label{ass1}
\begin{enumerate}[leftmargin=5mm, itemsep=1mm, partopsep=0pt,parsep=0pt]
    \item The training set $\Sc$ is linearly separable with margin at least $\gamma_H$, i.e., there exists a $\bar{\vw} \in \R^{d+1}$, such that $y(\bar{\vw}*\vx) \geq \sinh(\gamma_H)$ for all $(\vx,y) \in \mathcal{S}$.
   \item There exists constants $R_x \geq 0$, such that (i) $\norm{\vx} \leq R_x^2$; (ii) all possible adversarial perturbations remain within this constraint, i.e., $\norm{\tilde{\vx}} \leq R_x^2$.
    \item the function $f(s)$, underlying the loss (cf.~\eqref{eq:f_def}), has the following properties: (i) $f(s) > 0 \; \forall \; s$; (ii) $f'(s)<0 \; \forall \; s$; (iii) $f$ is differentiable, and (iv) $f$ is $\beta$-smooth.
\end{enumerate}
\end{ass}
\noindent In the rest of the section, we work with the following hyperbolic equivalent of the \emph{logistic regression loss} that fulfills Assumption~\ref{ass1}:
\begin{align}
l(\vx,y; \vw) = \ln \left( 1 + \exp(- y(\vw*\vx) ) \right) \; .
\label{eq:hyp-logistic-reg-loss}
\end{align}
\noindent Other loss functions as well as the derivation of the hyperbolic logistic regression loss are discussed in Appendix~\ref{sec:C1}.

\noindent We first show that Alg.~\ref{alg:adv-mf} with a gradient update is guaranteed to converge to a large-margin classifier.
\begin{theorem}\label{thm:gd-conv}
With constant step size 
and $\mathcal{A}$ being the GD update
with an initialization $\vw_0$ with $\vw_0 * \vw_0 <0$, $\lim_{t \rightarrow \infty} L(\vw_t; \Sc'_t) = 0$.
\end{theorem}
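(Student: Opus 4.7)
The plan is to interpret the algorithm as a (projected) gradient descent on the robust loss $L_{\rm rob}$ and to combine a descent lemma with the fact that separability forces $\inf L_{\rm rob} = 0$. I would proceed as follows.

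First, I invoke Danskin's theorem to reduce the analysis to gradient descent on the robust loss. For each sampled point $(\vx,y) \in \Sc_t$ and its inner maximizer $\tilde{\vx}$ produced by CERT, one has $\nabla_{\vw_t} l(\tilde{\vx},y;\vw_t) \in \partial l_{\rm rob}(\vx,y;\vw_t)$, so the first part of the update \eqref{eq:adv-gd-1} is a stochastic (sub)gradient step on the mini-batch robust loss. In particular, $L(\vw_t;\Sc \cup \Sc'_t)$ equals the mini-batch robust loss at $\vw_t$ plus a nonnegative clean component that is itself upper bounded by $L_{\rm rob}(\vw_t;\Sc)$; hence it suffices to show the robust loss vanishes in the limit.

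Second, I show that $\inf_{\vw} L_{\rm rob}(\vw;\Sc) = 0$. By Assumption~\ref{ass1}(1) there is a separator $\bar{\vw}$ with $y(\bar{\vw}*\vx) \geq \sinh(\gamma_H)$. A direct calculation, using that the adversarial $\tilde{\vx}$ lies within hyperbolic distance $\alpha$ of $\vx$ together with the Minkowski Cauchy--Schwarz inequality already used in Theorem~\ref{prop:h-perceptron}, yields a robust margin lower bound of $\sinh(\gamma_H)/\cosh(\alpha)$, matching Table~\ref{tab:perceptron}. Rescaling along the unit Minkowski hyperboloid by sending $c\bar{\vw}$ with $c\to\infty$ drives the argument of the hyperbolic logistic loss \eqref{eq:hyp-logistic-reg-loss} to $+\infty$, so the loss at this family tends to $0$.

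Third, I invoke a smoothness-based descent lemma. Assumption~\ref{ass1}(3) together with the chain rule and the bounds in Assumption~\ref{ass1}(2) gives $\beta'$-smoothness of $l(\cdot,y;\vw)$ in $\vw$ for a constant $\beta'$ depending on $\beta$, $R_x$, $R_w$. The standard smooth-descent inequality
\begin{equation*}
L(\vw_{t+1};\Sc \cup \Sc'_t) \;\leq\; L(\vw_t;\Sc \cup \Sc'_t) - \tfrac{\eta}{2}\,\|\nabla L(\vw_t;\Sc \cup \Sc'_t)\|^2
\end{equation*}
then holds for the unnormalized step at sufficiently small $\eta$. Combining monotone decrease, the lower bound $L_{\rm rob} \geq 0$, and the observation that any accumulation point $\vw_\infty$ with $L_{\rm rob}(\vw_\infty) > 0$ must have nonzero gradient (since $f'<0$ by Assumption~\ref{ass1}(3)(ii) and the scores $y(\vw_\infty * \tilde{\vx})$ are finite), which would admit strict further descent contradicting convergence, yields $L(\vw_t;\Sc \cup \Sc'_t) \to 0$.

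The main obstacle is the Minkowski normalization in \eqref{eq:adv-gd-1}: the manifold $\{\vw : \vw*\vw = -1\}$ is noncompact and the normalization is not an orthogonal (Euclidean) projection, so it can in principle increase the loss by more than a second-order amount. Controlling this requires a first-order expansion of the normalization factor around the current iterate (which already lies on the hyperboloid by the previous step's normalization) to show the extra contribution is $O(\eta^2)$ and can be absorbed into the smoothness term. A secondary difficulty is that the composite loss $\ln(1+\exp(-\asinh(\cdot)))$ is not globally convex in its scalar argument, so the argument must rely on smoothness plus the separability-induced coercivity, rather than on convex optimization machinery.
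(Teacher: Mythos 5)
Your overall route is the paper's (Appendix~\ref{appen:C5}): via Danskin, treat the update as (sub)gradient descent on $L_{\rm rob}$, use separability plus $f'<0$ to rule out finite critical points, and conclude the iterates escape to infinity so all losses vanish. That is exactly how the paper's proof of Theorem~\ref{thm:gd-conv} proceeds.

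There is, however, a gap in how you establish that a finite accumulation point must have nonzero gradient. You justify this ``since $f'<0$ and the scores $y(\vw_\infty*\tilde{\vx})$ are finite.'' That does not suffice: $\nabla L(\vw;\Sc'_t) = \tfrac{1}{|\Sc'_t|}\sum f'(y(\tilde{\vx}*\vw))\, y\, \widehat{\tilde{\vx}}$ is a vector sum whose nonzero summands can in principle cancel. The argument the paper uses (and that you need) is to pair the gradient against the separator $\bar{\vw}$: since $\ip{\bar{\vw}}{\widehat{\tilde{\vx}}}=\bar{\vw}*\tilde{\vx}$ and, by the adversarial-margin lemma, $y(\bar{\vw}*\tilde{\vx}) \geq \sinh(\gamma_H)/\cosh(\alpha) > 0$ while $f'<0$ everywhere, every summand of $\ip{\bar{\vw}}{\nabla L(\vw;\Sc'_t)}$ has the same strict sign, so this inner product is nonzero for every finite $\vw$, whence $\nabla L(\vw;\Sc'_t)\neq 0$. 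You already derive the required robust-margin bound in your second step but never feed it into the nonzero-gradient conclusion. On the Minkowski normalization: your concern is reasonable, but note the paper's proof also applies the unnormalized smooth-descent machinery directly without explicitly controlling the renormalization step, so the obstacle you flag is present (and unaddressed) in the source as well.
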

\noindent The proof can be found in Appendix~\ref{appen:C5}. While this result guarantees convergence, it does not guarantee efficiency (e.g., by showing a polynomial convergence rate). The following result shows that Alg.~\ref{alg:adv-mf} with a gradient-based update obtains a max-margin classifier in polynomial time. 
\begin{theorem}
\label{conv:alg-2-main}
For a fixed constant $c \in (0, 1)$, let $\eta_t = \eta := c\cdot \frac{2\sinh^2(\gamma_H - \alpha)}{\beta \sigma_{\rm max}^2 R_x^2}$ with $\sigma_{\rm max}$ denoting an upper bound on the maximum singular value of the data matrix {$\sum_{\vx \in \Sc'}\vx\vx^T$}, and $\mathcal{A}$ the GD update as defined in \eqref{eq:adv-gd-1}.
Then, Alg.~\ref{alg:adv-mf} achieves the margin $\gamma_H - \alpha$ in $\Omega \big( {\rm poly} \left(\frac{1}{\sinh(\gamma_H - \alpha)}  \right) \big)$ steps.
\end{theorem}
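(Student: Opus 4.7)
I would follow the template of smooth gradient descent, instantiated for the robust hyperbolic risk $L_{\rm rob}$ of~\eqref{eq:L-rob}, adapting the Euclidean argument of~\citep{charles}. Three ingredients are needed: (a) a descent lemma for $L_{\rm rob}$, (b) a margin-based lower bound on $\|\nabla L_{\rm rob}\|$ whenever the target margin has not yet been attained, and (c) a conversion from cumulative loss decrement into margin. The hyperbolic adaptation requires replacing Euclidean inner products with Minkowski products, tracking the $\cosh(\alpha)$ factors that arise from the Lorentzian distance, and separately handling the nonconvex normalization step in~\eqref{eq:adv-gd-1}.

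\emph{Smoothness and descent.} Danskin's theorem, together with the closed form of $\tilde{\vx}$ from Theorem~\ref{thm:cert}, gives $\nabla_{\vw}\, l_{\rm rob}(\vx,y;\vw) = f'(y(\vw*\tilde{\vx}))\,y\,\widehat{\tilde{\vx}}$. The CERT constraint in~\eqref{eq:cert} combined with Assumption~\ref{ass1}(2) bounds the Euclidean norm of each adversarial example by a multiple of $\cosh(\alpha)\,R_x$; together with the $\beta$-smoothness of $f$ (Assumption~\ref{ass1}(3)) and the spectral bound $\sigma_{\max}$ on the adversarial data matrix, this yields an $L$-smoothness estimate for $L_{\rm rob}$ of order $\beta\,\sigma_{\max}^2\,\cosh^2(\alpha)\,R_\alpha^2 / \sinh^2(\gamma_H)$, where the $\sinh^{-2}(\gamma_H)$ factor enters through the $1/(2R_\alpha)$ scaling inside $\asinh$ in~\eqref{eq:hyp-logistic-reg-loss}. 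With the stated step size $\eta$, one obtains the standard descent inequality
\begin{equation*}
L_{\rm rob}(\vw_{t+1}) \;\leq\; L_{\rm rob}(\vw_t) - \tfrac{\eta}{2}\,\|\nabla_{\vw} L_{\rm rob}(\vw_t)\|^2.
\end{equation*}

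\emph{Gradient lower bound via separability.} Let $\bar{\vw}$ be the separator of Assumption~\ref{ass1}(1). Any adversarial $\tilde{\vx}$ lies within hyperbolic distance $\alpha$ of some $\vx \in \Sc$ with $d(\vx, \partial\Hc_{\bar{\vw}}) \geq \gamma_H$; writing $\tilde{\vx} = \cosh(d)\,\vx + \sinh(d)\,\vu$, where $\vu$ is a spacelike unit vector Minkowski-orthogonal to $\vx$ and $d \leq \alpha$, and pairing with $\bar{\vw}$ via Minkowski Cauchy--Schwarz, a direct calculation produces the lower bound $y(\bar{\vw}*\tilde{\vx}) \gtrsim \sinh(\gamma_H)/\cosh(\alpha)$---this is the precise source of the $1/\cosh(\alpha)$ attenuation of the margin. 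Contracting $\nabla_{\vw} L_{\rm rob}(\vw_t)$ with $\bar{\vw}$ then produces a sum of terms $f'(\cdot)\,y(\bar{\vw}*\tilde{\vx})$ of common sign (since $f' < 0$ by Assumption~\ref{ass1}(3)); as long as the target margin is missed, $|f'(\cdot)|$ remains bounded below, and Cauchy--Schwarz yields $\|\nabla_{\vw} L_{\rm rob}(\vw_t)\| \gtrsim \sinh(\gamma_H)/\cosh(\alpha)$.

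Combining the descent inequality with this lower bound gives a per-step loss decrement of $\Omega\bigl(\eta\,\sinh^2(\gamma_H)/\cosh^2(\alpha)\bigr)$, so starting from $L_{\rm rob}(\vw_0) = O(1)$ the target margin $\gamma_H/\cosh(\alpha)$ is reached within $O\bigl(\mathrm{poly}(\cosh(\alpha)/\sinh(\gamma_H))\bigr)$ iterations, matching the stated rate. The main obstacle is the normalization step in~\eqref{eq:adv-gd-1}: it is a retraction onto the nonconvex set $\{\vw*\vw = -1\}$ and does not obviously commute with the descent inequality. The cleanest workaround exploits scale-invariance---the margin of $\vw$ depends only on its direction---so one can track scale-invariant proxies such as $(\vw_t*\bar{\vw})/\sqrt{-\vw_t*\vw_t}$ whose evolution under the gradient-then-normalize update can be shown to increase monotonically toward its target value; verifying this is the technically delicate step. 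A secondary subtlety is that Danskin's theorem yields a subgradient rather than a gradient, so one must check that the descent inequality holds at the specific maximizer selected in Theorem~\ref{thm:cert}.
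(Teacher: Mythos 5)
Your plan shares with the paper its first ingredient (a descent lemma of the form $L_{\rm rob}(\vw_{t+1}) \leq L_{\rm rob}(\vw_t) - \eta(\cdot)\|\nabla L(\vw_t; \Sc'_t)\|^2$, derived by contracting the update against $\bar{\vw}$ using the Minkowski product and the adversarial-margin bound $y(\bar{\vw}*\tilde{\vx}) \geq \sinh(\gamma_H)/\cosh(\alpha)$, which is Lemma~\ref{lem:hyp-adv-margin}), but after that the two arguments genuinely diverge, and your step (b) is where a gap opens. You propose a Polyak--\L{}ojasiewicz-style argument: while the target margin is unattained, $\|\nabla L_{\rm rob}(\vw_t)\|$ is bounded below by roughly $\sinh(\gamma_H)/\cosh(\alpha)$, so the loss drops by a fixed amount per step and the iteration count is $O(L_{\rm rob}(\vw_0)\cosh^2(\alpha)/(\eta\sinh^2(\gamma_H)))$. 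The paper instead avoids any claim that the gradient norm stays bounded below. It proves a telescoping/regret inequality (Lemma~\ref{prop:tel-bound}) and instantiates it against a \emph{time-dependent} comparator $\vu_t = \frac{2R_\alpha\sinh(\ln t)\cosh(\alpha)}{\sinh(\gamma_H)}\vw^\ast$ whose norm grows, yielding a decaying rate $L_{\rm rob}(\vw_t) = O\bigl(\sinh^2(\ln t)/t \cdot (\sinh(\gamma_H)/\cosh(\alpha))^{-4}\bigr)$; the margin conclusion then follows from a separate implication in Theorem~\ref{thm:main-iteration-complexity}, that $L_{\rm rob}(\vw_t) \leq \varrho\,\ln(1 + e^{-\gamma_H/\cosh(\alpha)})$ already forces margin $\geq \gamma_H/\cosh(\alpha)$.

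The difficulty with your route is precisely the step you flag as ``technically delicate.'' The lower bound on $\|\nabla L_{\rm rob}\|$ rests on $|f'(\cdot)|$ staying bounded away from zero at the violating point, which is true only if the normalization keeps $\vw_t$ in a compact set. But the descent inequality in part (a) is proved for the \emph{unnormalized} update $\vw_{t+1} = \vw_t - \eta\nabla L$, and the normalization to $\{\vw*\vw = -1\}$ is not a nonexpansive projection onto a convex set, so it can increase $L_{\rm rob}$ and break per-step monotonicity. In other words, you need normalization for step (b) and its absence for step (a) --- these are in tension. The paper's regret-style argument sidesteps this: the telescoping bound does not require $L_{\rm rob}$ to decrease step by step, so it tolerates a gradient that shrinks along with the loss, at the cost of getting a $\tilde{O}(1/t)$ rate rather than a constant decrement. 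If you wanted to rescue the PL-style argument you would have to show both that the retraction does not increase $L_{\rm rob}$ beyond what the gradient step gains, and that the resulting lower bound on $\|\nabla L_{\rm rob}\|$ does not degrade with $|\Sc'_t|$; neither is immediate, and the paper does not attempt this route.
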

\noindent Below, we briefly sketch some of the proof ideas, but defer a detailed exposition to Appendix~\ref{appen:C5} (cf. Thm.~\ref{conv:alg-2} and \ref{thm:main-iteration-complexity}).
To prove the gradient-based convergence result, we first analyze the convergence of an ``adversarial perceptron'', that resembles the adversarial GD in that it performs updates of the form $\vw_{t+1} \gets \vw_t + y \widehat{\tilde{\vx}}$. We then extend the analysis to the adversarial GD, where the convergence analysis builds on classical ideas from convex optimization.

\noindent The following auxiliary lemma relates the adversarial margin to the max-margin classifier.
\begin{lem}
Let $\bar{\vw}$ be the max-margin classifier of $\mathcal{S}$ with margin $\gamma_H$. At each iteration of Algorithm~\ref{alg:adv-mf}, $\bar{\vw}$ linearly separates $\mathcal{S} \cup \mathcal{S}'$ with margin at least $\gamma_H-\alpha$.
\end{lem}
\noindent 
A proof of the lemma can be found in Section~\ref{sec:C4}. 
With the help of this lemma, we can show the following bound on the sample complexity of the adversarial perceptron:
\begin{theorem}
 Assume that there is some $\bar{\vw} \in \R^{d+1}$ and some $\gamma_H>0$, such that $y_j (\bar{\vw}*\vx_j) \geq \sinh(\gamma_H)$ for $j=1, \dots, \vert \mathcal{S} \vert$. Then, the adversarial perceptron (with adversarial budget $\alpha$) converges after $O\left(\frac{1}{(\gamma_H - \alpha)^2}\right)$ steps, at which it has margin of at least $\gamma_H-\alpha$.
\end{theorem}
\noindent The technical proof can be found in Section~\ref{sec:C4}.
%
\subsection{On the necessity of combining gradient descent and adversarial training}
We remark here that the enrichment of the training set with adversarial examples is critical for the polynomial-time convergence.
Recall first that by Thm.~\ref{prop:gd-exp}, without adversarial training, we can construct a simple max-margin problem that cannot be solved in polynomial time. Interestingly, merely using adversarial examples by themselves does not suffice for fast convergence either.

Consider Alg.~\ref{alg:adv-mf} with an ERM as the update rule $\mathcal{A}(\vw_t, \mathcal{S}, \mathcal{S}')$. In this case, the iterate $\vw_{t+1}$ corresponds to an ERM solution for $\Sc \cup \Sc'$, i.e.,
\begin{align}
\label{eq:alg-erm}
\vw_{t+1} \gets \argmin\nolimits_{\vw} \sum\nolimits_{(\vx,y) \in \Sc \cup \Sc'} l(\vx,y; \vw) \; .
\end{align}
Let $\mathcal{S}_t = \mathcal{S}$, i.e., we utilize the full power of the adversarial training in each step. The following result reveals that even under this optimistic setting, Alg.~\ref{alg:adv-mf} may not converge to a solution with a non-trivial margin in polynomial time:
\begin{theorem}\label{prop:erm}
Suppose Alg.~\ref{alg:adv-mf} (with an ERM update) outputs a linear separator of $\Sc \cup \Sc'$. In the worst case, the number of iteration required to achieve a margin at least $\epsilon$ is $\Omega\left( \exp(d) \right)$.
\end{theorem}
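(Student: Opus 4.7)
The plan is to exhibit a training set in $\L^d$, together with worst-case tie-breaking in the ERM update and in the CERT oracle, for which Alg.~\ref{alg:adv-mf} cycles through $\Omega(\exp(d))$ distinct low-margin separators before being forced to produce one with margin $\geq \epsilon$. The key structural observation, which I would adapt from the Euclidean construction of \citet{charles}, is that the ERM update is massively under-determined: any $\vw$ that strictly separates $\Sc\cup\Sc'_t$ drives every margin-based loss in \eqref{eq:f_def} to its infimum, so every such $\vw$ is a valid ERM minimizer. This gives the adversary the freedom to walk through a large packing of bad directions.

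First I would pick $\Sc = \{(\vx_+, +1), (\vx_-, -1)\}$ with $\vx_\pm \in \L^d$ placed symmetrically about the $\ve_0$-axis and well separated, so that the cone of admissible normals $\{\vw \in \R^{d+1} : \vw * \vw < 0,\; y(\vw * \vx) > 0\ \forall (\vx,y) \in \Sc\}$ is $(d{+}1)$-dimensional. Intersecting this cone with the sphere $\{-\vw*\vw = 1\}$ and with the set of $\vw$ whose hyperbolic margin on $\Sc$ lies in a thin band $[\epsilon/2, \epsilon)$, I would invoke a standard volumetric packing estimate to produce $N = \exp(\Omega(d))$ unit directions $\vu_1,\dots,\vu_N$ that are pairwise $\Theta(\epsilon)$-separated in Minkowski angle. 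Each $\vu_i$ is a separator of $\Sc$ whose margin is \emph{below} the target $\epsilon$, and each is an ERM optimizer whenever it is still consistent with the current enriched sample.

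Next, I would drive the algorithm to visit the $\vu_i$'s one at a time. Suppose $\vw_t = \vu_{i(t)}$. The CERT oracle (Thm.~\ref{thm:cert}) is free, among all $\tilde\vx$ with $d_\L(\vx,\tilde\vx) \leq \alpha$ that flip the label under $\vw_t$, to return one that flips the label under \emph{only a small cluster} of the $\vu_j$'s, because the set $\{\vw : \mathrm{sign}(\vw*\tilde\vx) \neq \mathrm{sign}(\vw*\vx)\}$ is an angular cap on the sphere $-\vw*\vw=1$ whose radius scales with $\alpha$ in the Minkowski geometry. Choosing the CERT tie-break adversarially makes this cap thin enough that it covers only $O(1)$ of the $\epsilon$-separated $\vu_j$'s. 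Then the next ERM call is free to pick any $\vu_{i(t+1)}$ not yet eliminated, and the process repeats. After $t$ steps at most $O(t)$ of the $N$ packing points have been removed from contention, so any iterate with margin $\geq \epsilon$ cannot appear until $t = \Omega(N) = \Omega(\exp(d))$.

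The main obstacle is the geometric claim in Step 3: showing that the ``dual cap'' of normals flipped by a single adversarial example has angular radius $O(\epsilon)$ in the Minkowski metric, independent of $d$. This reduces to estimating $|\vw*(\tilde\vx - \vx)|$ uniformly over the constraint sphere given $d_\L(\vx,\tilde\vx) \leq \alpha$, and then translating this into an angular bound via the Minkowski Cauchy--Schwarz inequality used in the proof of Thm.~\ref{prop:h-perceptron}. A secondary subtlety is verifying that the packing constructed on the constraint sphere intersected with the separation cone still has cardinality $\exp(\Omega(d))$; this follows from the fact that the cone has full-dimensional interior in $\R^{d+1}$ when $\vx_\pm$ are generic, so the intersection has positive measure and standard volume-comparison packing bounds apply.
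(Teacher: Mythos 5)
Your high-level strategy is essentially the paper's: pick a two-point training set in $\L^d$, construct an exponentially large family of ``bad'' candidate normals, and argue that worst-case tie-breaking in the ERM call lets the algorithm cycle through them. The paper makes exactly this argument via a spherical code, and the final $\Omega(\exp(d))$ is obtained from Shannon's lower bound on code size (Theorem~\ref{thm:sh-bound}).

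The concrete gap is in your Step~3. You claim the set of normals flipped by a single adversarial example is an angular cap of radius $O(\epsilon)$ (or scaling with $\alpha$), so that each step can only knock out $O(1)$ packing points; you flag this as the weakest step, and it is indeed wrong as stated. In the paper's explicit construction with $\Sc = \{(\ve_1,1),(-\ve_1,-1)\}$, classifiers $\vw_t = (\epsilon', \sqrt{1+\epsilon'^2}\,\vv_t)$ and adversarial points $\tilde{\vx}_{1t} = (\sqrt{1+\delta^2}, \delta\vv_t)$ with $\delta = \sqrt{\cosh^2\alpha - 1}$, a candidate $\vw = (\epsilon',\sqrt{1+\epsilon'^2}\,\vv)$ misclassifies $\tilde{\vx}_{1t}$ precisely when $\ip{\vv}{\vv_t} > \frac{\epsilon'\sqrt{1+\delta^2}}{\delta\sqrt{1+\epsilon'^2}}$, a threshold that is $O(\epsilon')$ when $\epsilon' = \sinh(\epsilon)$ is small. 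That cap has angular radius nearly $\pi/2$ --- it is almost a half-sphere, not a thin sliver, and there is no choice of CERT tie-break that shrinks it. What rescues the argument (and what your proposal is missing) is that the codewords themselves are placed with minimum separation $\theta = \arccos\bigl(\rho\cdot\frac{\epsilon'\sqrt{1+\delta^2}}{\delta\sqrt{1+\epsilon'^2}}\bigr)$ for $\rho<1$, so that $\ip{\vv_j}{\vv_t} \le \rho\cdot\frac{\epsilon'\sqrt{1+\delta^2}}{\delta\sqrt{1+\epsilon'^2}}$ lies just \emph{below} the cap threshold: every codeword other than $\vv_t$ sits barely outside $\vv_t$'s cap, so exactly one candidate is eliminated per round. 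A generic volumetric packing, as you propose, does not give this coordination between cap threshold and code separation; you need the separation to be tuned to the threshold, which is why the paper uses an explicit spherical code with that specific $\theta$ rather than an off-the-shelf packing bound. Once the construction is set up this way, verifying that each $\vw_t$ linearly separates $\Sc\cup\Sc'_{<t}$ and has margin exactly $\epsilon$ on $\Sc$ is a short elementary calculation, and the iteration count equals the code size $A(d,\theta)$, whence $\Omega(\exp(d))$.
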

We note that a similar result in the Euclidean setting appears in \citep{charles}. We establish Thm.~\ref{prop:erm} by extending the proof strategy of \citep[Thm. 4]{charles} to hyperbolic spaces. In particular, given a spherical code in $\R^d$ with $T$ codewords and $\theta \sim \sinh(\epsilon) \cosh(\alpha)$ minimum separation, we construct a training set $\Sc$ and subsequently the adversarial examples $\{\Sc'_t\}$ such that there exists a sequence of ERM solutions $\{\vw_t\}_{t \leq T}$ on $\Sc \cup \Sc'$ (cf.~\eqref{eq:alg-erm}) that has margin less than $\epsilon$. Now the result in Thm.~\ref{prop:erm} follows by utilizing a lower bound~\citep{cohn} on the size of the spherical code with $T =\Omega\left( \exp(d)  \right)$ codewords and $\theta \sim \sinh(\epsilon) \cosh(\alpha)$ minimum separation. The proof of Thm.~\ref{prop:erm} is in Appendix~\ref{sec:C3}.

\section{Dimension-distortion trade-off}
\label{sec:trade-off}
%
So far we have focused on classifying data that is given in either Euclidean spaces $\R^d$ or Lorentz space $\L^{d'}$. 
Now, consider data $(\Xc,d_{\Xc})$ with  similarity metric $d_{\Xc}$ that was embedded into the respective spaces.
We assume access to maps $\phi_E : \Xc \rightarrow \R^d$ and $\phi_H : \Xc \rightarrow \L_{+}^{d'}$
that embed $\Xc$ into the Euclidean space $\R^d$ and the upper sheet of the Lorentz space $\L_{+}^{d'}$, respectively (cf.~Remark~\ref{rmk:double-sheet}). Let $c_E$ and $c_H$ denote the multiplicative distortion induced by $\phi_E$ and $\phi_H$, respectively (cf.~\S~\ref{subsec: embed}). Upper bounds on $c_E$ and $c_H$ can be estimated based on the structure of $\Xc$ and the embedding dimensions. 
\begin{wrapfigure}{R}{0.6\linewidth}
\includegraphics[scale=0.14]{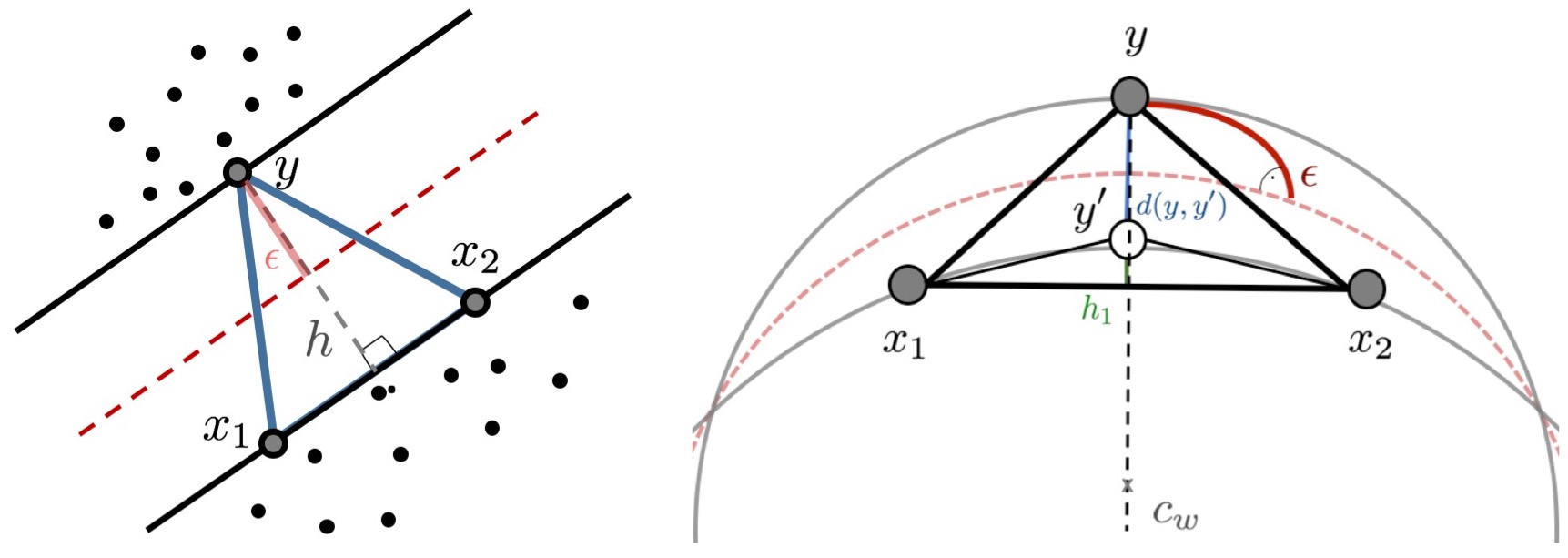}
\caption{Margin as distance between support vectors. \textbf{Left:} Euclidean. \textbf{Right:} Hyperbolic.}
\label{fig:margin}
\vspace{-3mm}
\end{wrapfigure}

In this section, we address the natural question: {\em How does the distortion $c_E, c_H$ impact our guarantees on the margin?} In the previous sections, we noticed that some of the guarantees scale with the dimension of the embedding space. Therefore, we want to analyze the trade-off between the higher distortion resulting from working with smaller embedding dimensions and the higher cost of training robust models due to working with larger embedding dimensions.

We often encounter data sets in ML applications that are intrinsically hierarchical. Theoretical results on the embeddability of trees (cf.~\S~\ref{subsec: embed}) suggest that hyperbolic spaces are especially suitable to represent hierarchical data. We therefore restrict our analysis to such data. Further, we make the following assumptions on the underlying data $\Xc$ and the embedding maps, respectively.
\begin{ass}\label{ass:dist}
(1) Both $\phi_{H}(\Xc)$ and $\phi_{E}(\Xc)$ are linearly separable in the respective spaces, and (2) $\Xc$ is hierarchical, i.e., has a partial order relation.
\end{ass}
\begin{ass}\label{ass:maps}
The maps $\phi_H, \phi_E$ preserve the partial order relation in $\Xc$ and the root is mapped onto the origin of the embedding space.
\end{ass}
Towards understanding the impact of the distortion of the embedding maps $\phi_H$ and $\phi_E$ on margin, we relate the distance between the support vectors to the size of the margin. The distortion of these distances via embedding then gives us the desired bounds on the margin. 
\subsection{Euclidean case}
\vspace{-1mm}
In the Euclidean case, we relate the distance of the support vectors to the size of the margin via triangle relations. Let $\vx,\vy \in \R^{d}$ denote support vectors, such that $\ip{\vx}{\vw} > 0$ and $\ip{\vy}{\vw} <0$ and ${\rm margin}(\vw)=\epsilon$. Note that we can rotate the decision boundary, such that the support vectors are not unique. So, without loss of generality, assume that $\vx_1, \vx_2$ are equidistant from the decision boundary and $\norm{\vw}=1$ (cf.~Fig.~\ref{fig:margin}(left)). In this setting, we show the following relation between the margin with and without the influence of distortion:
\begin{theorem}\label{prop:e-dist}
Let $\epsilon'$ and $\epsilon$ denote the margin with and without distortion, respectively. If $\mathcal{X}$ is a tree embedded into $\R^{O(\log^2 \vert \mathcal{X} \vert)}$, then $\epsilon'=O \left( {\epsilon}/{\log^3 \vert \mathcal{X} \vert}    \right)$. 
\end{theorem}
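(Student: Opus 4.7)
The plan is to upper-bound $\epsilon'$ by the Euclidean distance between a symmetric pair of support vectors, then invoke the multiplicative-distortion guarantee together with properties of the Bourgain--Linial embedding to extract a $\log^3 N$ factor, where $N=|\Xc|$. Following the setup in the paragraph immediately preceding the theorem, I would first formalize the symmetric configuration in Fig.~\ref{fig:margin}(left): with $\norm{\vw}=1$ and support vectors $\vx_1, \vx_2 \in \R^d$ equidistant from the decision boundary, the displacement $\vx_1 - \vx_2$ lies along $\vw$, so
\begin{align*}
2\epsilon' \;=\; \ip{\vw}{\vx_1 - \vx_2} \;=\; \norm{\vx_1 - \vx_2}.
\end{align*}
This reduces bounding $\epsilon'$ to bounding the embedded support-vector distance in terms of the undistorted reference margin $\epsilon$.

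Second, letting $u_1, u_2 \in \Xc$ be the tree-level preimages with $\phi_E(u_i) = \vx_i$, I would invoke the multiplicative-distortion inequality $\norm{\vx_1 - \vx_2} \leq d_{\Xc}(u_1, u_2) \leq c_E \norm{\vx_1 - \vx_2}$. The Bourgain--Linial bound~\citep{bourgain,linial1995geometry} that trees embed into $\R^{O(\log^2 N)}$ with $c_E = O(\log N)$ then supplies the first $\log N$ factor in the ratio $\epsilon/\epsilon'$: interpreting $\epsilon$ as the half-distance between the (tree-level) support vectors in the undistorted, isometric picture, the worst-case contraction of the embedding already yields $\epsilon' = O(\epsilon/\log N)$ at this level.

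Third, I would extract the remaining $\log^2 N$ factor by exploiting the specific Bourgain--Linial coordinates---distances to $d_E=O(\log^2 N)$ random subsets of vertices---together with Assumption~\ref{ass:maps}, which maps the root to the origin and preserves the hierarchical order. Once $\vw$ is normalized to unit length, its effective projection onto the $\vx_1-\vx_2$ direction inherits an $O(\log^2 N)$ shrinkage from how its mass must be spread over the $d_E$ coordinates in order to separate $\phi_E(\Xc)$. Combining this dimensional factor with step two would yield the claimed $\epsilon' = O(\epsilon / \log^3 N)$.

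The main obstacle is precisely this third step. The raw multiplicative distortion gives only a $\log N$ factor, and a naive Cauchy--Schwarz argument using $\norm{\vw}=1$ and the embedding dimension $d_E$ would supply only a further $\sqrt{d_E}=O(\log N)$, falling short of the claimed $\log^3 N$. Squeezing out the extra $\log N$ requires opening up the Bourgain--Linial construction and exploiting the hierarchical assumption on $\Xc$ (Assumption~\ref{ass:dist}) to rule out the possibility that $\vw$ concentrates its mass on a small number of coordinates without violating the separation of $\phi_E(\Xc)$. The symmetric support-vector reduction (via rotating the decision boundary) and the distortion-based first factor are routine and follow directly from the setup established above the theorem.
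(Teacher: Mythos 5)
Your proposal diverges from the paper's argument at the very first step, and that divergence is what creates the gap you identify.

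You reduce the problem to a \emph{single} embedded distance: taking $\vx_1,\vx_2$ on opposite sides of the hyperplane with displacement along $\vw$, you get $2\epsilon' = \norm{\vx_1-\vx_2}$. Applying multiplicative distortion to this one length gives exactly one factor of $c_E = O(\log N)$, and you then look for two more factors of $\log N$ inside the Bourgain--Linial coordinates. That hunt is a dead end: as you observe yourself, Cauchy--Schwarz against the embedding dimension only gets you $\sqrt{d_E}=O(\log N)$, and a hypothetical argument that $\vw$ must ``spread its mass'' over $O(\log^2 N)$ coordinates would require structural facts about the embedding and the data that the paper never assumes (Assumptions~\ref{ass:dist} and \ref{ass:maps} are not strong enough to enforce near-uniform weight vectors). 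So your third step is not a technical loose end but a missing idea.

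The paper avoids this entirely by setting up a \emph{three}-point configuration, which you have also misread from the figure: $\vx_1$ and $\vx_2$ are two (non-unique, equidistant) support vectors on the \emph{same} side of the hyperplane, and $\vy$ is a support vector on the opposite side, forming a triangle $\Delta(\vx_1,\vx_2,\vy)$ with side lengths $d_1,d_2,d_3$. The margin is half the altitude of this triangle onto the base $\vx_1\vx_2$, and Heron's formula expresses that altitude as
\begin{align*}
h = \frac{2}{d_3}\sqrt{s(s-d_1)(s-d_2)(s-d_3)}, \qquad s = \tfrac12(d_1+d_2+d_3).
\end{align*}
Each of the three embedded side lengths carries a $1/c_E$ contraction, and the cubic structure of this formula (a square root of a degree-4 product divided by a degree-1 length) is precisely what produces the $c_E^3$ dependence once the distortion inequalities are pushed through. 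No inspection of the Bourgain--Linial coordinates is needed; the $\log^3 N$ comes from $c_E^3$ with $c_E = O(\log N)$.

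In short: your two-point symmetric reduction cannot produce more than one $\log N$ factor by any distortion argument, and the extra $\log^2 N$ you hope to recover from the embedding internals is not supported by the hypotheses. The fix is to replace your single-distance reduction with the triangle/Heron's-formula relation between margin and the three support-vector side lengths, which is where the cubic power actually lives.
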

The proof of Thm.~\ref{prop:e-dist} follows from a simple side length-altitude relations in the Euclidean triangle between support vectors (cf.~Fig.~\ref{fig:margin}(left)) and a simple application of Bourgain's result on embedding trees into $\mathbb{R}^d$.
For more details see Appendix~\ref{sec:D1}.

\setlength{\textfloatsep}{10pt}
\subsection{Hyperbolic case}
As in the Euclidean case, we want to relate the margin to the pairwise distances of the support vectors. Such a relation can be constructed both in the original and in the embedding space, which allows us to study the influence of distortion on the margin in terms of $c_H$. In the following, we will work with the half-space model $\Pb^{d'}$ (cf. Appendix~\ref{sec:A1}). However, since the theoretical guarantees in the rest of the paper consider the Lorentz model $\L_{+}^{d'}$, we have to map between the two spaces. We show in Appendix~\ref{sec:D2} that such a mapping exists and preserves the Minkowski product, following ~\cite{cho}.

The hyperbolic embedding $\phi_H$ has two sources of distortion: (1) the multiplicative distortion of pairwise distances, measured by the factor ${1}/{c_H}$; and (2) the distortion of order relations, in most embedding models captured by the alignment of ranks with the Euclidean norm. 
Under Assumption~\ref{ass:maps}, order relationships are preserved and the root is mapped to the origin. Therefore, for $x \in \Xc$, the distortion on the Euclidean norms is given as  $\norm{\phi_H(x)} = d_E(\phi_H(x),\phi_H(0)) = {d_\Xc(x,0)}/c_H$,
i.e., the distortion on both pairwise distances and norms is given by a factor $1/{c_H}$.

In $\Pb^{d'}$, the decision hyperplane corresponds to a hypercircle $\Kc_w$. We express its radius $r_w$ in terms of the hyperbolic distance between a point on the decision boundary and one of the hypercircle's ideal points~\citep{cho}. The support vectors $
\vx,\vy$ lie on hypercircles $\Kc_x$ and $\Kc_y$, which correspond to the set of points of hyperbolic distance $\epsilon$ (i.e., the margin) from the decision boundary. We again assume, without loss of generality, that at least one support vector is not unique and let $\vx_1, \vx_2 \in \Kc_x$ and $\vy \in \Kc_y$ (cf.~Fig.~\ref{fig:margin}(right)). We now show that the distortion introduced by $\phi_H$ has a negligible effect on the margin.
\begin{theorem}\label{thm:h-dist-dim}
Let $\epsilon'$ and $\epsilon$ denote the margin with and without distortion, respectively. 
If $\mathcal{X}$ is a tree embedded into $\L_{+}^2$, then $\epsilon' \approx \epsilon$.
\end{theorem}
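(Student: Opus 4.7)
The plan is to mirror the strategy of Theorem~\ref{prop:e-dist}, relating the margin to pairwise hyperbolic distances between the support vectors in the half-space model $\mathbb{P}^2$, and then invoke Sarkar's near-isometric embedding of trees to conclude that the induced distortion on the margin is negligible. First I would transfer the setup from the Lorentz model $\L_+^2$ to $\mathbb{P}^2$ via the Minkowski-product-preserving map discussed in Appendix~D.2, so that the decision boundary becomes a hypercircle $\Kc_w$ and the two sides of support vectors sit on hypercircles $\Kc_x$ and $\Kc_y$ at common hyperbolic distance $\epsilon$ from $\Kc_w$. As in Figure~\ref{fig:margin}(right), I take $\vx_1,\vx_2 \in \Kc_x$ and $\vy \in \Kc_y$, so the triangle they span has $\epsilon$ as the hyperbolic ``altitude'' from the baseline $\overline{\vx_1\vx_2}$ to the opposite vertex $\vy$ measured against $\Kc_w$.

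Next I would derive a hyperbolic analogue of the Euclidean altitude-side relation used in the proof of Theorem~\ref{prop:e-dist}. Using the closed-form expression for the distance from a point to a hypercircle in $\mathbb{P}^2$ (as given by~\cite{cho} and invoked in Section~\ref{sec:adv-learning} for CERT), together with the hyperbolic law of cosines applied to the triangle $\vx_1\vx_2\vy$, I can write $\sinh(\epsilon)$ as an explicit monotone function of the pairwise hyperbolic distances $d_\H(\vx_1,\vx_2)$, $d_\H(\vx_1,\vy)$, $d_\H(\vx_2,\vy)$ and (via the half-space embedding) the Euclidean norms of the support vectors, which by Assumption~\ref{ass:maps} coincide with their distances to the root. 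This step is the hyperbolic counterpart of the side-altitude computation that appears in Appendix~\ref{sec:D1}.

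Finally, I would apply Sarkar's result from Section~\ref{subsec: embed}: any tree embeds quasi-isometrically into $\H^2$ with multiplicative distortion $c_H = 1+\varepsilon$ for arbitrarily small $\varepsilon>0$. Under Assumption~\ref{ass:maps}, every pairwise distance and every root-distance (hence every Euclidean norm) appearing in the margin formula is scaled by the same factor $1/c_H$. Substituting the distorted quantities into the expression from the previous paragraph and using the monotonicity of $\sinh$ and $\acosh$, the $1/c_H$ factor passes through to give $\sinh(\epsilon') = \sinh(\epsilon)/c_H + o(1)$, so that $\epsilon' \to \epsilon$ as $c_H \to 1$, which is the claim. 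The main obstacle I expect is the trigonometric step — unlike the Euclidean case where elementary similar-triangle arguments suffice, the hyperbolic altitude-side relation requires a careful application of hyperbolic trigonometry in the half-space model and tracking how the root-to-origin alignment in Assumption~\ref{ass:maps} lets the single factor $1/c_H$ govern every term; a secondary difficulty is verifying that the asymptotic $\epsilon' \approx \epsilon$ holds uniformly in the tree size $|\Xc|$, which is precisely what distinguishes this bound from the $1/\log^3|\Xc|$ degradation in Theorem~\ref{prop:e-dist}.
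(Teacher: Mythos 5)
Your plan follows the paper's setup through the transfer to $\Pb^2$, but it diverges at the crucial step, and the divergence opens a real gap. You propose to apply the hyperbolic law of cosines to the triangle $\vx_1\vx_2\vy$ and write $\sinh(\epsilon)$ as a monotone function of the pairwise \emph{hyperbolic} distances, then argue that rescaling those distances by $1/c_H$ ``passes through'' to give $\sinh(\epsilon') = \sinh(\epsilon)/c_H + o(1)$. That last step is not justified: hyperbolic space admits no dilations, so scaling all side lengths of a hyperbolic triangle by a common factor does \emph{not} produce any simple scaling of the derived altitude or of $\sinh$ of it. The hyperbolic law of cosines involves $\cosh$ and $\sinh$ of side lengths entering non-homogeneously, so a multiplicative distortion on the sides does not factor out. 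This is precisely the obstacle you flag as ``the main difficulty,'' but the proposal offers no mechanism to overcome it.

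The paper's actual proof sidesteps this by \emph{not} doing hyperbolic trigonometry at all. It reduces to Euclidean geometry via circle inversion on the decision hypercircle $\bar{\Kc}_w$: the support vectors on $\Kc_x$ and $\Kc_y$ and their mirror images determine Euclidean circles $\bar{\Kc}_x, \bar{\Kc}_y$, whose Euclidean centers and radii $(\bar{c}_x, \bar{r}_x), (\bar{c}_y, \bar{r}_y)$ are computed via ordinary (Euclidean) Heron's formula and Pythagoras. The margin is then expressed via a M\"obius normalization as
$\epsilon = \tfrac12\bigl|\log \tfrac{\bar{c}_y^{(2)}+\bar{r}_y}{\bar{c}_x^{(2)}+\bar{r}_x}\bigr|$,
a logarithm of a ratio of purely Euclidean quantities. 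Because Euclidean quantities \emph{do} scale multiplicatively under the distortion, the $c_H$-dependence pulls out of the ratio as an additive $\tfrac12|\log(1/c_H^2)|$ term, which vanishes when $c_H = O(1+\varepsilon)$ by Sarkar's theorem. This logarithmic cancellation — not a $\sinh$-level identity — is what makes the hyperbolic margin essentially distortion-free, and it is exactly what your plan is missing. If you want to salvage your route, you would need to replace the hyperbolic law-of-cosines step with some reduction that produces a scale-covariant (Euclidean) intermediate object; the circle-inversion construction is the paper's device for doing that.
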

The technical proof relies on a construction that reduces the problem to Euclidean geometry via circle inversion on the decision hypercircle. We defer all details to Appendix~\ref{sec:D2}.

\section{Experiments}
\label{sec:exp}
\begin{figure*}
    \centering
    \hfill
    \includegraphics[width=0.3\textwidth]{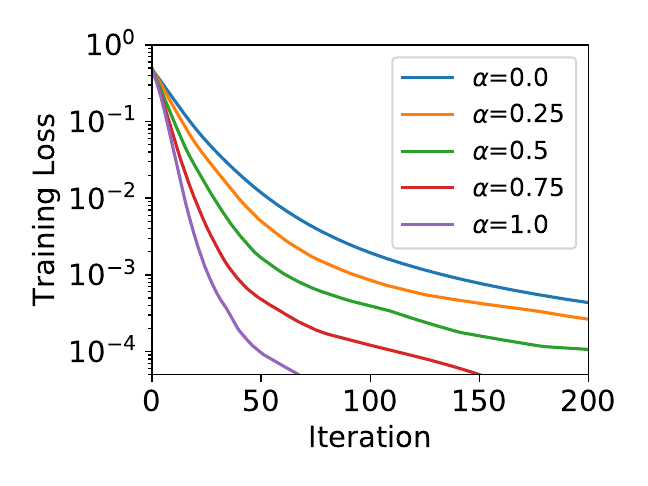}
    \hfill
    \includegraphics[width=0.3\textwidth]{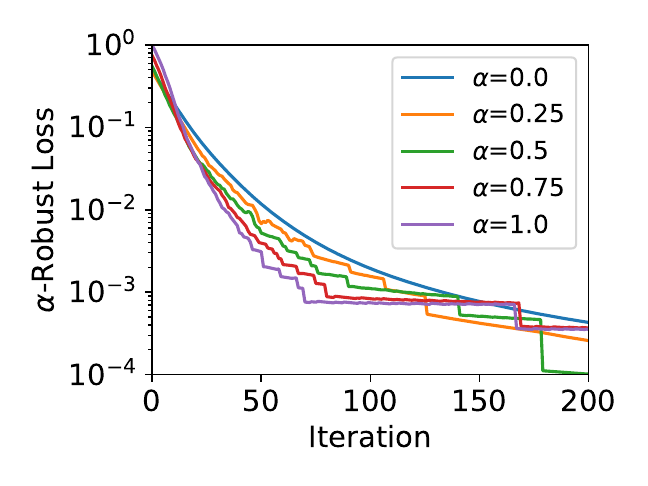}
    \hfill
    \includegraphics[width=0.3\textwidth]{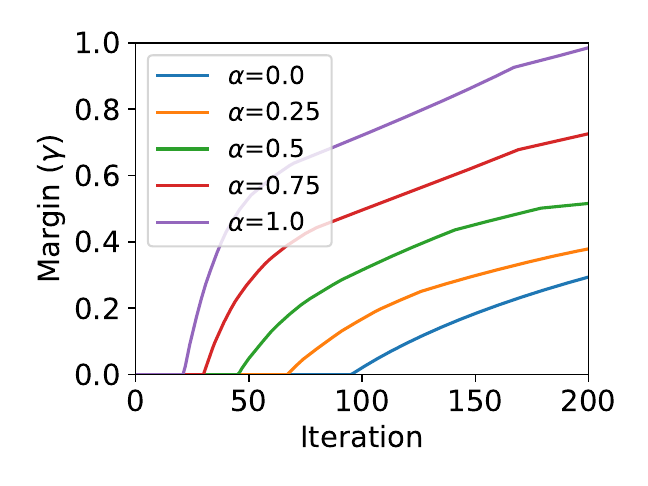}
    \hfill
    \caption{Performance of Adversarial GD. \textbf{Left:} Loss $L(\vw)$ on the original data. \textbf{Middle:} $\alpha$-robust loss $L_\alpha(\vw)$. \textbf{Right:} Hyperbolic margin $\gamma_H$.
We vary the adversarial budget $\alpha$ over $\{0, 0.25, 0.5, 0.75, 1.0\}$. Note that $\alpha=0$ corresponds to the state of the art~\citep{cho}.}
    \label{fig:adv-gd}
    \vspace{-2mm}
\end{figure*}
We now present empirical studies for hyperbolic linear separator learning to corroborate our theory. 
In particular, we evaluate our proposed Adversarial GD algorithm (\S\ref{sec:adv-learning}) on data that is linearly separable in hyperbolic space and 
compare with the state of the art~\citep{cho}.
Furthermore, we
analyze dimension-distortion trade-offs (\S\ref{sec:trade-off}). 
As in our theory, we train hyperbolic linear classifiers $\vw$ whose prediction on $\vx$ is $y = \sgn(\vw*\vx)$. 
Additional experimental results can be found in Appendix~\ref{sec:apx-exp}.

{We emphasise that 
our focus in this work is in theoretically understanding the benefits of hyperbolic spaces for classification.
The above experiments serve to illustrate our theoretical results, which as a starting point were derived for linear models and separable data.
While extensions to non-separable data and non-linear models are of  practical interest,
a detailed study is left for future work.}

\textbf{Data. }
We use the ImageNet ILSVRC 2012 dataset \cite{russakovsky2015imagenet} along with its label hierarchy from wordnet. Hyperbolic embeddings in Lorentz space are obtained for the internal label nodes and leaf image nodes using Sarkar's construction~\citep{sarkar}. For the first experiment, we verify the effectiveness of adversarial learning by picking two classes (n09246464 and n07831146), which allows for a data set that is linearly separable in hyperbolic space. In this set, there were 1,648 positive and 1,287 negative examples.
For the second experiment, to showcase better representational power of hyperbolic spaces for hierarchical data, we pick two disjoint subtrees (n00021939 and n00015388) from the hierarchy.
 
\textbf{Adversarial GD. }
In the following, we utilize the hyperbolic hinge loss~\eqref{eq:cho-hinge}, (see Appendix~\ref{sec:apx-exp} for other loss functions). 
To verify the effectiveness of adversarial training, we compute three quantities: (i) loss on original data $L(\vw)$, (ii) $\alpha$-robust loss $L_\alpha(\vw)$, and (iii) the hyperbolic margin $\gamma$. We vary the budget $\alpha$ over $\{0, 0.25, 0.5, 0.75, 1.0\}$, where $\alpha=0$ corresponds to the setup in~\citep{cho}. For a given budget, we obtain adversarial examples by solving the CERT problem (\ref{eq:cert}), which is feasible for $z_0 \in (x_0 \cosh(\alpha) - \Delta, \, x_0 \cosh(\alpha) + \Delta)$, where $\Delta =  \sqrt{(x_0^2-1)(\cosh^2(\alpha)-1)}$. We do a binary search in this range for $z_0$, solve CERT and check if we can obtain an adversarial example. We utilize the adversarial examples we find, and ignore other data points. In all experiments, we use a constant step-size $\eta_t=0.01~\forall t$. 
The results are shown in Fig.~\ref{fig:adv-gd}. As $\alpha$ increases the problem becomes harder to solve (higher training robust loss) but we achieve a better margin. Notably, we observe strong performance gains over the training procedures without adversarial examples~\citep{cho}.

\textbf{Dimensional efficiency. }
In this experiment, we illustrate the benefit of using hyperbolic space when the underlying data is truly hierarchical. To be more favourable to Euclidean setting, we subsample images from each subtree, such that in total we have 1000 vectors. We obtain Euclidean embeddings following the setup and code from~\citet{NK17}. 
The Euclidean embeddings in 16 dimensions reach a mean rank (MR) $\leq 2$, which indicates reasonable quality in preserving distance to few-hop neighbors. We observe superior classification performance at much lower dimensions by leveraging hyperbolic space (see Table~\ref{tab:my_label} in Appendix~\ref{apx:f-3}). In particular, our hyperbolic classifier achieves zero test error on 8-dimensional embeddings, whereas Euclidean logistic regression struggles even with 16-dimensional embeddings. This is consistent with our theoretical results (\S\ref{sec:trade-off}): Due to high distortion, lower-dimensional Euclidean embeddings struggle to capture the global structure among the data points that makes the data points easily separable.

\section{Conclusion and future work}

We studied the problem of learning robust classifiers with large margins in hyperbolic space. 
First, we explored multiple adversarial approaches to robust large-margin learning. 
In the second part of the paper we analyzed the role of geometry in learning robust classifiers. Here, we compared Euclidean and hyperbolic approaches with respect to the intrinsic geometry of the data. For hierarchical data that embeds 
well into hyperbolic space, the lower embedding dimension ensures superior guarantees when learning the classifier in hyperbolic space. This result suggests that it can be highly beneficial to perform downstream machine learning and
optimization tasks in a space that naturally reflects the intrinsic geometry of the data. Promising avenues for future research include (i) exploring the practicality of these results in broader machine learning and data science applications; and (ii) studying other related methods in non-Euclidean spaces, together with an evaluation of dimension-distortion trade-offs.

\bibliographystyle{plainnat}
\bibliography{ref}

\newpage
\appendix
\section{Hyperbolic Space}
\label{appen:hs}
Hyperbolic spaces are smooth Riemannian manifolds $\mathcal{M} = \H^d$ and as such locally Euclidean spaces. In the following we introduce basic notation for three popular models of hyperbolic spaces. For a comprehensive overview see~\citet{bridson}.\\

\subsection{Models of hyperbolic spaces}
\label{sec:A1}
\begin{figure}[ht]
    \centering
    \includegraphics[scale=0.1]{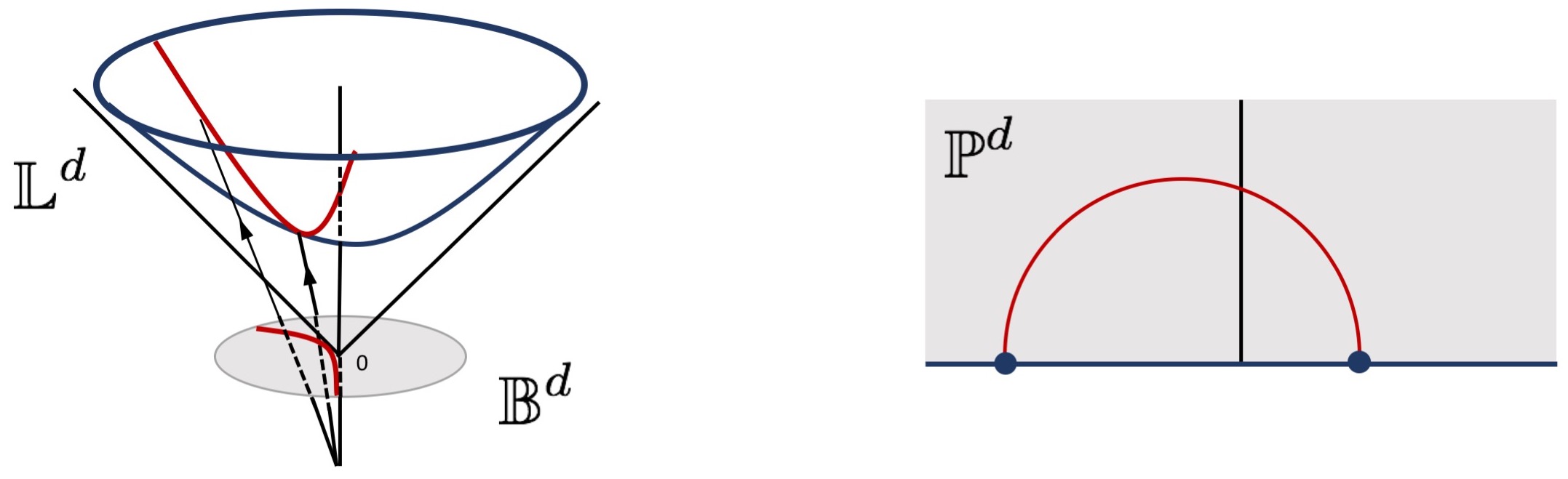}
    \caption{Models of hyperbolic space: The Lorentz model $\L^d$, the Poincare ball $\B^d$, and the Poincare half-plane $\Pb^d$.}
    \label{fig:models}
\end{figure}
\noindent The Poincare ball defines a hyperbolic space within the Euclidean unit ball, i.e.
\begin{align*}
\B^d &= \lbrace \vx \in \R^d : \; \norm{\vx} <1	\rbrace \\
d_{\B} (\vx,\vx') &= \acosh \left( 1 + 2 \frac{\norm{\vx-\vx'}^2}{(1-\norm{\vx}^2)(1-\norm{\vx'}^2)} \right) \; .
\end{align*}
Here, $\norm{\cdot}$ is the usual Euclidean norm.

The closely related Poincare half-plane model is defined as
\begin{align*}
\Pb^2 &= \lbrace \vx \in \R^2  \; \colon \;  x_1 >0  \rbrace \\
d_{\Pb} (\vx, \vx') &= \acosh \left( 1 + \frac{(x'_0 - x_0)^2 + (x'_1 - x_1)^2}{2 x_1 x'_1}  \right) \; .
\end{align*}
Note that if $x_0 = x'_0$, the metric simplifies as
\begin{align*}
d_{\Pb} (\vx, \vx') = d_{\Pb}((x_0,x_1), (x_0,x'_1)) = \Big\vert \ln \frac{x'_1}{x_1} \Big\vert \; .
\end{align*}
The model can be generalized to higher dimensions with
\begin{align*}
    \Pb^d = \lbrace (x_0, \dots, x_{d-1}) \in \R^d  \; \vert \;  x_{d-1} >0  \rbrace \; ,
\end{align*}
however, we will only use the two-dimensional model $\Pb^2$ here. We further define the hyperboloid as
\begin{align*}
\L^d &= \lbrace \vx \in \R^{d+1} : \; \vx*\vx = 1 \rbrace \\
d_{\L} (\vx, \vx') &= \acosh(\vx*\vx') \; ,
\end{align*}
where $\ast$ denotes the Minkowski product $\vx*\vx'=x_0 x'_0 - \sum_{i=1}^d x_i x'_i$. 
\begin{rmk}\label{rmk:double-sheet}
\normalfont
The Lorentz model
\begin{align*}
    \L^d = \lbrace x \in \R^{d+1}: \; \vx * \vx = 1 \rbrace \; .
\end{align*}
is also called double-sheet model.
We use this more general setting in sections 2-4. For simplicity, we restrict ourselves to the upper sheet
\begin{align*}
    \L_{+}^d = \lbrace \vx \in \R^{d+1}: \; \vx * \vx = 1, \; x_0 >0 \rbrace \; ,
\end{align*}
in section 5. All constructions of mappings between the different models of hyperbolic space can be extended to the double-sheet $\L^d$.
\end{rmk}


\subsection{Equivalence of different models of hyperbolic spaces}\label{sec:equiv}
The Poincare ball $\B^d$ and the Lorentz model $\L_{+}^d$ are equivalent models of hyperbolic space. A mapping is given by
\begin{align*}
    \pi_{\rm LB} : \L_{+}^d &\rightarrow \B^d \\
    \vx = (x_0, \dots, x_d) &\mapsto \left( \frac{x_1}{1+x_0}, \dots, \frac{x_d}{1+x_0} \right) \; .
\end{align*}

We can further construct a mapping from $\B^d$ to $\Pb^d$ by inversion on a circle centered at $(-1,0,\ldots,0)$:
\begin{align*}
\pi_{\rm BP}: \B^d &\rightarrow \Pb^d \\
\vx = (x_0, \dots, x_{d-1}) &\mapsto \frac{(2x_1,\dots,2x_{d-1},1-\norm{\vx}^2)}{1+2x_0 + \norm{\vx}^2} \; .
\end{align*}

\subsection{Embeddability}
\label{sec:embed}
When analyzing the dimension-distortion trade-off, we make use of two key results on the embeddability (cf.~\S\ref{subsec: embed}) of trees into Euclidean and hyperbolic spaces. We state them below for reference.
\begin{theorem}[\citep{bourgain}]\label{thm:bourgain}
An $N$-point metric $\Xc$ (i.e., $\vert \Xc \vert = N$) embeds into Euclidean space $\R^{O \left(\log^2 N\right)}$ with the distortion $c_M=O(\log N)$.
\end{theorem}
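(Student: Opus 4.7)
The plan is to use the classical Fréchet-style embedding built from random subsets, which is the standard approach to Bourgain's theorem. For each scale $i \in \{1, 2, \dots, \lfloor \log_2 N \rfloor\}$ and each repetition $j \in \{1, \dots, C \log N\}$ (for a suitable constant $C$), independently sample a subset $A_{i,j} \subseteq \Xc$ where each point is included independently with probability $2^{-i}$. Define the embedding $\phi : \Xc \to \R^m$ by
\[
\phi(x) \;=\; \tfrac{1}{\sqrt{m}}\bigl(d_{\Xc}(x, A_{i,j})\bigr)_{i,j}, \qquad d_\Xc(x,A) := \min_{a\in A} d_\Xc(x,a),
\]
where $m = O(\log^2 N)$ is the total coordinate count. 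The proof then has two halves: a short upper bound (non-expansion) and a more delicate lower bound (non-contraction up to $O(\log N)$).

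The upper bound is the easy half. By the triangle inequality, for any subset $A$ we have $|d_\Xc(x,A) - d_\Xc(y,A)| \le d_\Xc(x,y)$. Squaring, summing over all $m$ coordinates, and dividing by $m$ yields $\|\phi(x)-\phi(y)\|_2 \le d_\Xc(x,y)$, so the embedding is non-expansive. This is just routine and I would dispatch it in one line.

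The lower bound is the main obstacle and is where Bourgain's probabilistic argument comes in. Fix two points $x,y$ with $\Delta := d_\Xc(x,y)$. The key idea is to define a sequence of radii $r_0 < r_1 < \cdots < r_{t^*}$ inductively: set $r_0 = 0$ and let $r_t$ be the smallest radius for which both balls $B(x,r_t)$ and $B(y,r_t)$ contain at least $2^t$ points of $\Xc$, stopping once $r_t \ge \Delta/3$ (so $t^* \le \log_2 N$). For each scale $t$, I would argue that a random subset $A$ in which each point is included with probability $2^{-t}$ satisfies, with constant probability, the event that $A$ hits the smaller-annulus ball around one of $x,y$ but misses the larger-annulus ball around the other; a short calculation with the inequalities $(1-p)^k \ge 1/4$ when $pk \le \tfrac12$ and $1-(1-p)^k \ge 1/2$ when $pk \ge 1$ shows this happens with probability bounded below by some absolute constant $c>0$. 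On this event, $|d_\Xc(x,A)-d_\Xc(y,A)| \ge r_t - r_{t-1}$.

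To finish, I would average over the $C \log N$ independent copies at each scale and sum over $t$. Squaring each coordinate-contribution and using the $1/\sqrt m$ normalization, the expected squared distance satisfies
\[
\mathbb{E}\,\|\phi(x)-\phi(y)\|_2^2 \;\gtrsim\; \frac{1}{\log N}\sum_{t=1}^{t^*}(r_t - r_{t-1})^2 \cdot \frac{\log N}{\log^2 N}\cdot(\text{const}),
\]
which, combined with the telescoping estimate $\sum_{t=1}^{t^*}(r_t-r_{t-1}) \ge r_{t^*} \gtrsim \Delta$ and Cauchy--Schwarz ($\sum a_t^2 \ge (\sum a_t)^2/t^*$ with $t^* \le \log N$), gives $\mathbb{E}\,\|\phi(x)-\phi(y)\|_2^2 \gtrsim \Delta^2 / \log^2 N$, i.e.\ $\|\phi(x)-\phi(y)\|_2 \gtrsim \Delta/\log N$ in expectation. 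A standard Chernoff/union bound over the $\binom{N}{2}$ pairs, using the $C\log N$ repetitions per scale to concentrate, then shows that with positive probability the embedding $\phi$ achieves the claimed multiplicative distortion $c_M = O(\log N)$ simultaneously for all pairs, proving existence. The hardest technical point is the scale-by-scale probability estimate for the hit/miss event together with the Cauchy--Schwarz step that converts the linear telescoping sum into an $\ell_2$ lower bound without losing an additional $\log N$ factor; getting the constants to line up is what determines the $O(\log N)$ distortion exactly rather than a weaker polylog bound.
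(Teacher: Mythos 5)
The paper does not actually prove this theorem; it is stated as a cited background fact from~\citep{bourgain} (``We state them below for reference'') and is only used as a black box in the dimension--distortion comparison, so there is no paper proof to compare against. Your sketch is the standard Fr\'echet-embedding proof of Bourgain's theorem and the overall architecture --- $\log N$ scales times $O(\log N)$ repetitions, the hit/miss event at each scale, telescoping plus Cauchy--Schwarz --- is the right one and would go through.

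One bookkeeping slip worth flagging: in your intermediate display you write
\begin{equation*}
\mathbb{E}\,\|\phi(x)-\phi(y)\|_2^2 \;\gtrsim\; \frac{1}{\log N}\sum_{t=1}^{t^*}(r_t - r_{t-1})^2 \cdot \frac{\log N}{\log^2 N}\cdot(\text{const}),
\end{equation*}
which as written equals $(\log N)^{-2}\sum_t (r_t-r_{t-1})^2$; applying Cauchy--Schwarz then yields only $\Delta^2/\log^3 N$, i.e.\ distortion $O(\log^{3/2} N)$, contradicting the $\Delta^2/\log^2 N$ you claim in the next line. The leading $1/\log N$ is spurious. The correct accounting is: the $m = C\log^2 N$ coordinates contribute a $1/m$ normalization, but the $C\log N$ repetitions per scale each contribute at constant rate, so the net prefactor on $\sum_t (r_t-r_{t-1})^2$ is $\frac{\log N}{\log^2 N} = \frac{1}{\log N}$ --- one factor, not two. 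Cauchy--Schwarz then supplies the second $1/\log N$, giving the claimed $\Delta^2/\log^2 N$. Since your stated conclusion matches the correct answer, this is evidently a typo rather than a conceptual error, but a referee reading the display literally would conclude the argument loses a gratuitous $\sqrt{\log N}$ in distortion. You should also be slightly careful that the telescoping step $\sum_t (r_t - r_{t-1}) \gtrsim \Delta$ uses the artificial cap at $\Delta/3$: once some $r_t$ would exceed $\Delta/3$ you must truncate it there and still argue the hit/miss event at that scale, which most textbook treatments handle with one extra sentence but which your sketch currently elides.
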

This bound in Theorem~\ref{thm:bourgain} is tight for trees in the sense that embedding them in a Euclidean space (of any dimension) must incur the distortion $c_m = \Omega(\log N)$~\citep{linial1995geometry}.
\begin{theorem}[\citep{sarkar}]\label{thm:sarkar}
Tree metrics embed quasi-isometrically with $c_M = O(1+\epsilon)$ into $\H^d$. 
\end{theorem}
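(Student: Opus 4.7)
The plan is to follow Sarkar's explicit recursive construction of a low-distortion tree embedding into $\B^2$, the Poincar\'e ball model of $\H^2$; since $\H^2$ isometrically embeds into $\H^d$ for any $d \geq 2$, this suffices to establish the stated bound. I will exploit two essential features of hyperbolic geometry that fail in Euclidean space: hyperbolic circles of radius $r$ have circumference $2\pi\sinh(r)$ (exponential in $r$), and geodesic triangles in $\H^2$ are uniformly thin.

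The construction is parameterised by a scaling factor $\tau > 0$, to be chosen large depending on $\epsilon$ and the maximum tree degree. First, place the root at the origin of $\B^2$ and its children on the hyperbolic circle of radius $\tau$ around the origin, angularly equispaced. Then, for each non-root vertex $v$ whose position has been fixed and whose parent $u$ has also been placed, compose the orientation-preserving isometry of $\B^2$ that sends $v \mapsto 0$ and $u$ to a point on the positive real axis, place the children of $v$ angularly equispaced on the hyperbolic circle of radius $\tau$ around $0$ while skipping an arc containing the image of $u$, and pull everything back by the inverse isometry. Denote the resulting map by $\phi$. By construction, every tree edge $uv$ satisfies $d_\B(\phi(u),\phi(v)) = \tau$, so after dividing tree distances by $\tau$ the edge lengths match exactly.

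The core technical step is to compare $d_\B(\phi(u),\phi(v))$ with $\tau\cdot d_T(u,v)$ for arbitrary $u,v \in T$. The image under $\phi$ of the unique tree path from $u$ to $v$ is a concatenation of geodesic segments of length $\tau$; its total length is exactly $\tau\cdot d_T(u,v)$, and the triangle inequality immediately gives the upper bound $d_\B(\phi(u),\phi(v)) \leq \tau\cdot d_T(u,v)$. For the matching lower bound up to a factor $1/(1+\epsilon)$, I will iterate the hyperbolic law of cosines along the broken path: at each branching vertex the exterior angle of the path is bounded below by $2\pi/k$ for a node of degree $k$, and because the circumference of a hyperbolic circle of radius $\tau$ grows like $\sinh(\tau)$, the angle at which a child of $v$ is seen from any ancestor of $v$ shrinks like $e^{-\tau}$. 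Thus for $\tau$ sufficiently large every angle at which the broken geodesic bends is within $\epsilon$ of $\pi$, and the law of cosines forces each subsequent segment to contribute at least $(1-O(\epsilon))\tau$ to the length of the true geodesic between the endpoints.

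The main obstacle is propagating the nearly-straight behaviour \emph{globally}: one must ensure that many small angular wiggles along a long tree path cannot compound to collapse the geodesic distance substantially below the path length. This is the content of the thin-triangle property of $\H^2$: a piecewise-geodesic with exterior angles close to $\pi$ has geodesic endpoints within a uniformly bounded additive slack of its total length, regardless of the number of bends. Combining this with the per-segment lower bound, dividing by $\tau$, and letting $\tau = \tau(\epsilon)$ grow appropriately yields multiplicative distortion $c_M \leq 1 + \epsilon$, completing the proof.
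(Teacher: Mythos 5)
This statement is a citation to~\citep{sarkar}; the paper provides no proof of it, so there is no in-paper argument to compare against. Your reconstruction of the construction is faithful (recursive placement at edge length $\tau$ after isometrically re-centering each vertex to the origin, confinement of subtrees to thin cones), the upper bound $d_{\B}(\phi(u),\phi(v)) \leq \tau\, d_T(u,v)$ by the triangle inequality is correct, and the reduction to $\H^2$ is fine.

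The lower bound, however, has a real gap. The assertion that ``for $\tau$ sufficiently large every angle at which the broken geodesic bends is within $\epsilon$ of $\pi$'' is false at the least-common-ancestor vertex $w$ on the tree path from $u$ to $v$: there the path switches between two distinct child-subtrees, and the angle at $\phi(w)$ between the corresponding edges is bounded \emph{below} by the minimum angular separation built into the construction (e.g.\ $2\pi/k$ at a degree-$k$ node), hence bounded \emph{away} from $\pi$; increasing $\tau$ thins the subtree cones but does not change where the children sit angularly around $\phi(w)$. Sarkar's argument treats this bend differently. The hyperbolic law of cosines gives, for $d(p,o),d(q,o)\geq\tau$ and $\angle poq\geq\theta$, the inequality $d(p,q)\geq d(p,o)+d(q,o)-2\ln\bigl(2/\sin(\theta/2)\bigr)$, so each bend contributes only an \emph{additive} slack depending on $\theta$---a fixed constant at the LCA, and $O(e^{-2\tau})$ at the near-straight non-LCA bends. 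Iterating along the path, with the cone-containment invariant supplying the required angle lower bounds at every step, yields $d_{\B}(\phi(u),\phi(v))\geq\tau\,d_T(u,v)-C$ for a constant $C$ depending only on the maximum degree, hence multiplicative distortion $1+O(1/\tau)\leq 1+\epsilon$ once $\tau$ is large enough. Appealing to the thin-triangle property of $\H^2$ alone, without the cone invariant and the explicit law-of-cosines estimate, does not deliver this: it neither certifies the angle hypotheses you need nor, by itself, converts a path with a genuinely sharp bend into a quasi-geodesic.
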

%

\subsection{Spherical codes in hyperbolic space}
\label{sec:A3}
\noindent Consider the unit sphere $\mathbb{S}^{d-1} \subseteq \R^d$. A \emph{spherical code} is a subset of $\mathbb{S}^{d-1}$, such  that any two distinct elements $\vx,\vx'$ are separated by at least an angle $\theta$, i.e. $\ip{\vx}{\vx'} \leq \cos \theta$. We denote the size of the largest code as $A(d, \theta)$.

A similar construction of such ``spherical caps" can be obtained in $\H^d$. Note that the induced geometry of these caps is spherical, hence they inherit a spherical geometric structure. This allows in particular the transfer of bounds on $A(d,\theta)$ to hyperbolic space~\citep{cohn}:
\begin{theorem}[Chabauty, Shannon, Wyner (see, e.g.,~\citep{shannon-bound})]\label{thm:sh-bound}
$A(d,\theta) \geq (1+o(1))\sqrt{2\pi d} \frac{\cos \theta}{\sin^{d-1}\theta}$.
\end{theorem}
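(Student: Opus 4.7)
The plan is to prove the bound via a standard packing argument followed by an asymptotic analysis of the volume of a spherical cap. First, consider a \emph{maximal} spherical code $C \subseteq \mathbb{S}^{d-1}$ with minimum angular separation $\theta$. By maximality, every point $\vy \in \mathbb{S}^{d-1}$ must satisfy $\langle \vy, \vx\rangle > \cos\theta$ for some $\vx \in C$ (otherwise $\vy$ itself could be appended to $C$). Equivalently, the open spherical caps of angular radius $\theta$ centered at the codewords cover $\mathbb{S}^{d-1}$, which yields the covering bound
$$|C| \;\geq\; \frac{\omega_{d-1}}{A_{\mathrm{cap}}(\theta)},$$
where $\omega_{k}$ denotes the surface area of $\mathbb{S}^{k}$ and $A_{\mathrm{cap}}(\theta)$ the area of a cap of half-angle $\theta$. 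This is the only ``code-theoretic'' input; the rest is calculus.

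Next, I would compute $A_{\mathrm{cap}}(\theta)$ in polar-style coordinates around the cap's center. Writing a point by its angle $\phi \in [0,\pi]$ to the center and an equatorial coordinate on $\mathbb{S}^{d-2}$, the induced Jacobian is $\sin^{d-2}\phi$, giving
$$A_{\mathrm{cap}}(\theta) \;=\; \omega_{d-2}\int_0^\theta \sin^{d-2}\phi\,d\phi.$$
Because the integrand is sharply concentrated near the upper endpoint $\phi=\theta$ when $d$ is large, integration by parts (writing $\sin^{d-2}\phi = \frac{d}{d\phi}\!\left(\tfrac{-\cos\phi}{d-1}\right)\!\cdot \text{correction}$, or equivalently applying Laplace's method) yields the asymptotic equivalent
$$\int_0^\theta \sin^{d-2}\phi\,d\phi \;=\; \frac{\sin^{d-1}\theta}{(d-1)\cos\theta}\bigl(1+o(1)\bigr)$$
as $d\to\infty$ with $\theta$ fixed. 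Verifying the $o(1)$ error cleanly is the main technical step: one should either iterate integration by parts to see the tail contribution is smaller by a factor $O(1/d)$, or bound the ratio of the integrand at $\phi=\theta$ versus earlier values by a geometric series.

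The final step handles the prefactor. Using $\omega_{k} = 2\pi^{(k+1)/2}/\Gamma((k+1)/2)$, one obtains
$$\frac{\omega_{d-1}}{\omega_{d-2}} \;=\; \sqrt{\pi}\,\frac{\Gamma((d-1)/2)}{\Gamma(d/2)},$$
and Stirling's formula gives $\Gamma((d-1)/2)/\Gamma(d/2) \sim \sqrt{2/d}$, hence $(d-1)\omega_{d-1}/\omega_{d-2}\sim \sqrt{2\pi d}$. Plugging this together with the cap asymptotic into the covering bound yields exactly
$$|C|\;\geq\;(1+o(1))\sqrt{2\pi d}\,\frac{\cos\theta}{\sin^{d-1}\theta},$$
which, since $A(d,\theta)\geq |C|$, is the claim.

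The main obstacle is bookkeeping: two separate $\sqrt{d}$ factors (one from Stirling applied to the $\Gamma$-ratio, one from the $1/(d-1)$ in the cap integral) must be tracked together to produce the precise constant $\sqrt{2\pi}$ rather than just the correct exponential rate $\sin^{-(d-1)}\theta$. A secondary subtlety is that the statement implicitly treats $\theta$ as a fixed constant in $(0,\pi/2)$; if $\theta\to 0$ or $\theta\to \pi/2$ with $d$, the Laplace approximation needs a uniform remainder analysis, but that is not required here. No deep ingredient beyond the greedy/maximality argument, the cap integral, and Stirling is needed.
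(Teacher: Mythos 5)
The paper states this bound as a classical result (Chabauty--Shannon--Wyner) with a citation and gives no proof of its own, so there is nothing internal to compare against; your covering argument --- maximal code, caps of angular radius $\theta$ cover $\mathbb{S}^{d-1}$, cap-area asymptotics via Laplace/integration by parts, and Stirling for the $\Gamma$-ratio --- is precisely the standard proof found in the cited literature, and each step checks out, including the bookkeeping that combines the $1/(d-1)$ from the cap integral with $\omega_{d-1}/\omega_{d-2}\sim\sqrt{2\pi/d}$ to give $\sqrt{2\pi d}$. Your proposal is correct and complete for the regime of fixed $\theta\in(0,\pi/2)$, which is how the theorem is stated and used here.
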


\section{Adversarial Learning}
\subsection{Loss functions}
\label{sec:C1}
For training the classifier, we consider the margin losses that have the following form
\begin{align}
\label{eq:f_def_appen}
l( \vx, y; \vw ) = f( y \cdot (\vw * \vx) ),
\end{align}
\noindent where $f \colon \mathbb{R} \to \mathbb{R}_+$ is some convex, non-increasing function. \citet{cho} introduce the \emph{hinge loss} in the hyperbolic setting which is defined by the (hyperbolic) hinge function $f(s) = \max\{0, \asinh(1) - \asinh(s)\}$, i.e.,
\begin{align}
\label{eq:cho-hinge}
l( \vx, y; \vw ) = \max \lbrace	0, \asinh(1) - \asinh(y (\vw*\vx))	\rbrace \; .
\end{align}
\noindent A significant shortcoming of this notion is its non-smoothness and non-convexity. Therefore, we additionally consider a smoothed \emph{least squares loss}:
\begin{align}\label{eq:square-loss}
 l(\vx_i, y_i; \vw) &= 
 \begin{cases}
 \frac{1}{2} \left(	\asinh(1) - \asinh(y_i (\vw*\vx_i)	\right)^2, & y_i (\vw*\vx_i) \leq 1\\
 0, &{\rm else}
 \end{cases} \; ,
\end{align}
\noindent We present experimental results for both losses.\\

\noindent The majority of the paper employs a hyperbolic version of the \emph{logistic loss} to introduce the logistic regression problem in hyperbolic space.  First, recall the logistic regression problem in the Euclidean setting. Given an input $\vx$ and a linear classifier defined by $\vw$, the prediction of the classifier is defined as
\begin{align}
p (y \vert \vx; \vw) &= 1/\big( 1 + \exp( - y\ip{\vx}{\vw})\big) \;
\end{align}
Thus the logistic loss takes the following form
\begin{align}
\label{eq:euclidean-log-loss}
l(\vx, y; \vw) = - \log p (y \vert \vx; \vw) &= \log \big( 1 + \exp( - y\ip{\vx}{\vw})\big) \; ,
\end{align}
\noindent We can define a hyperbolic version of the logistic regression problem, where we replace the Euclidean inner product with the Minkowski product, i.e.,
\begin{align}\label{eq:hyp-log-loss}
l(\vx,y; \vw) &= \log (1+ \exp(-y(\vx * \vw))) \; .
\end{align}

\noindent The following result verifies that the loss in \eqref{eq:hyp-log-loss} indeed satisfies Assumption~\ref{ass1}.
\begin{lem}
For valid inputs $(\vx,y;\vw)$, the hyperbolic logistic loss in \eqref{eq:hyp-log-loss} fulfills Assumption~\ref{ass1}.
\end{lem}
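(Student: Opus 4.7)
The plan is to verify each of the four conditions of Assumption~\ref{ass1} directly for $f(s) = \ln\!\left(1 + \exp\!\left(-\asinh\!\left(\tfrac{s}{2R_\alpha}\right)\right)\right)$, which is the function underlying the loss (cf.~\eqref{eq:f_def_appen}). I will write $t(s) := s/(2R_\alpha)$ and $u(s) := \asinh(t(s))$, so that $f(s) = \ln(1 + e^{-u(s)})$. The only substantive part is the smoothness bound; the rest is essentially a chain-rule exercise.

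For (i), note that $\exp(-u(s)) > 0$ always, so $1 + e^{-u(s)} > 1$, hence $f(s) > 0$. For (iii), differentiability is immediate because $f$ is the composition of $\ln$, $1+e^{(\cdot)}$, and $\asinh$, each of which is smooth on its relevant domain (in particular, $\asinh$ is $C^\infty$ on all of $\R$). For (ii), the chain rule gives
\begin{equation*}
f'(s) \;=\; \frac{-e^{-u(s)}}{1+e^{-u(s)}}\cdot u'(s) \;=\; -\,\sigma(-u(s))\cdot u'(s),
\end{equation*}
where $\sigma(z) = 1/(1+e^{-z})$ is the logistic sigmoid. Since $\sigma(-u(s)) \in (0,1)$ and $u'(s) = \frac{1}{2R_\alpha\sqrt{1+t(s)^2}} > 0$, we conclude $f'(s) < 0$ for every $s$.

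For (iv), the plan is to bound $|f''(s)|$ by a constant depending only on $R_\alpha$. Differentiating again,
\begin{equation*}
f''(s) \;=\; \sigma(-u(s))\bigl(1 - \sigma(-u(s))\bigr)\,u'(s)^2 \;-\; \sigma(-u(s))\,u''(s).
\end{equation*}
Using $\sigma(1-\sigma) \leq 1/4$ and $|\sigma| \leq 1$, it suffices to bound $u'(s)^2$ and $|u''(s)|$. From $u'(s) = \frac{1}{2R_\alpha}(1+t(s)^2)^{-1/2}$ one gets $u'(s)^2 \leq 1/(2R_\alpha)^2$. A direct computation yields $u''(s) = -\frac{t(s)}{(2R_\alpha)^2(1+t(s)^2)^{3/2}}$, and the elementary inequality $|t| \leq \sqrt{1+t^2}$ gives $|u''(s)| \leq 1/(2R_\alpha)^2$. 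Combining,
\begin{equation*}
|f''(s)| \;\leq\; \tfrac14\cdot\tfrac{1}{(2R_\alpha)^2} + 1\cdot\tfrac{1}{(2R_\alpha)^2} \;=\; \tfrac{5}{16\,R_\alpha^2},
\end{equation*}
so $f$ is $\beta$-smooth with $\beta = 5/(16 R_\alpha^2)$, establishing (iv).

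The one point requiring care is the smoothness step: a naive bound $|u''| \lesssim |t|/(1+t^2)^{3/2}$ could look unbounded at first, but after the substitution $|t| \leq (1+t^2)^{1/2}$ the expression is actually uniformly controlled in $s$. This is the only nontrivial estimate; every other property is immediate from the algebraic form of $f$.
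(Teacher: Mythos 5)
Your proof is correct, and it takes a genuinely different route from the paper's. You verify directly that $f(s) = \ln\!\big(1 + e^{-\asinh(s/2R_\alpha)}\big)$ satisfies properties (i)--(iv) of Assumption~\ref{ass1}.3 on \emph{all} of $\mathbb{R}$; the chain-rule computations check out, and the bound $|f''(s)| \le 5/(16R_\alpha^2)$ follows correctly after the substitution $|t| \le \sqrt{1+t^2}$. The paper instead asserts (without showing the calculation) that Assumption~\ref{ass1}.3 holds iff $|\vw*\vx|/R_\alpha \le 1$, and devotes the entire proof to a four-case sign analysis of $w_0 x_0$ and $\sum_i w_i x_i$ combined with Cauchy--Schwarz, establishing the a priori bound $|\vw*\vx| \le R_x R_w = R_\alpha$ for all admissible inputs. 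Your global verification shows that no such domain restriction is actually required for the listed properties (i)--(iv), which makes the paper's ``iff'' claim look suspect as written. What your approach buys is an explicit smoothness constant $\beta = 5/(16R_\alpha^2)$ (the paper leaves $\beta$ abstract) and a cleaner logical structure; what the paper's approach buys is the a priori bound $|\vw*\vx| \le R_\alpha$, which it reuses elsewhere (e.g., in Theorem~\ref{thm:main-iteration-complexity} to relate $\sqrt{-\vw*\vw}$ and $2R_\alpha$). One caveat worth flagging: the template in \eqref{eq:f_def} asks for $f$ \emph{convex}, and $f$ here is not globally convex (it turns concave for sufficiently negative argument); the authors' domain restriction is plausibly aimed at that concern rather than at (i)--(iv). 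Your argument does not, and does not need to, address that, since convexity is not among the four enumerated properties, but it is worth being aware that convexity of $l_{\rm rob}$ is invoked downstream in Lemma~\ref{prop:tel-bound}.
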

\begin{proof}
Note that the hyperbolic logistic loss reduces to its Euclidean counterpart, with the transformation $\hat{x}=(x_0,-x_1,\dots,x_d)$, since $\vx * \vw = \ip{\hat{x}}{w}$. It is easy to verify that the Euclidean logistic loss fulfills the (standard) assumptions in~~\ref{ass1}(iii).
\end{proof}

\subsection{Generating adversarial examples (Certification problem)}
\label{sec:C2}

\noindent Recall that to train a classifier with large margin, we enrich the training set with adversarial examples~(cf.~Algorithm~\ref{alg:adv-mf}). For a classifier $\vw$, an adversarial example $\tilde{\vx}$ for a given $(\vx, y)$ is generated by perturbing $\vx$ in the hyperbolic space up to the maximum allowed perturbation budget $\alpha$ such that 
\begin{align*}
\tilde{\vx} \gets \argmax_{\substack{\vz \in \L^d \\ d_\L(\vx,\vz) \leq \alpha}} l(\vz,y;\vw) \; .
\end{align*}
For the underlying loss function (cf.~Section~\ref{sec:C1}), due to the monotonicity of $\asinh$, the above problem can be equivalently expressed as
\begin{align}
\tilde{\vx} &\gets \argmin_{\substack{\vz \in \L^d \\ d_\L(\vx,\vz) \leq \alpha}} y\cdot (\vw \ast \vz) = \argmax_{\substack{\vz \in \L^d \\ d_\L(\vx,\vz) \leq \alpha}} -\vw' \ast \vz \nonumber \\
&= \argmax_{\substack{\vz \in \L^d \\ d_\L(\vx,\vz) \leq \alpha}} -w'_0 z_0 + \sum_i w'_i z_i \label{eq:adv_formulation_appen}
\end{align}
where $\vw' = -y\vw$. Since $\vw', \vz \in \R^{d+1}$, we can rewrite \eqref{eq:adv_formulation_appen} as a constraint optimization task in the ambient Euclidean space:
\begin{align}
\label{eq:cert_pre_appen}
\max_{\vz \in \R^{d+1}} \; & -w_0 z_0 + \sum_i w_i z_i \\
{\rm s.t.} \quad \; &d_\L (\vx,\vz) \leq \alpha \nonumber \\
&z_0^2 - \sum_{i = 1}^{d} z_i^2 =1 
 \; . \nonumber
\end{align}
Assuming that we guess $z_0$ based on $x_0$, the constraint $z_0^2 - \sum_{i = 1}^{d} z_i^2 =1$ confines the solution space onto a $d$-dimensional sphere of radius $r=\sqrt{z_0^2-1}$, which also implies that $z_0 \geq 1$. On the other hand the constraint $d_\L (\vx,\vz) \leq \alpha$ is equivalent to
\begin{align*}
d_\L (\vx, \vz) &= \acosh(\vx*\vz) = \acosh(x_0 z_0 - \sum_{i=1}^{d} x_i z_i) < \alpha\quad \text{or}\quad\sum_i -x_i z_i \leq \cosh(\alpha) - x_0 z_0 \; .
\end{align*}
Thus, the problem in \eqref{eq:cert_pre_appen} reduces to the following linear program with a spherical constraint.
\begin{align}
\label{eq:cert_appen}
({\rm CERT}) \quad \max_{\vz_{\backslash 0} \in \R^d} \; & -w_0 z_0 + \sum_i w_i z_i \\
\quad {\rm s.t.} \quad \;  &\sum_{i=1}^{d} -x_i z_i \leq \cosh(\alpha) - x_0 z_0 \nonumber \\
\quad &\norm{\vz_{\backslash 0}}^2 = z_0^2 - 1 \;, \nonumber 
\end{align}
where $\vz_{\backslash 0} = (z_{1},\ldots, z_{d})$. We now present a proof of Theorem~\ref{thm:cert} which characterizes a solution of the program in \eqref{eq:cert_appen}. For the sake of readability, we first restate the result from the main text:
\begin{theorem}[Theorem~\ref{thm:cert}]
\label{thm:cert-appen}
Given the input example $(\vx, y)$, let $\vx_{\backslash 0} = (x_1,\ldots, x_d)$. We can efficiently compute a solution to ({\rm CERT}) or decide that no solution exists. If a solution exists, then based on a guess of $z_0$ a maximizing adversarial example has the form $\tilde{\vx} = \left(z_0, \sqrt{z_0^2 -1} \left(b \check{\vx} + \sqrt{1-b^2} \check{\vx}^{\perp} \right) \right)$. Here, $b= \frac{(\cosh(\alpha) - x_0 z_0 )}{(\norm{\vx_{\backslash 0}} \sqrt{z^2_0-1})}$ depends on the adversarial budget $\alpha$, and  $\check{\vx}^{\perp}_{\backslash 0}$ is a unit vector orthogonal to $\check{\vx} = -{\vx_{\backslash 0}}/{\norm{\vx_{\backslash 0}}}$ along $\vw$.
\end{theorem}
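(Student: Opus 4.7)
The plan is to reduce the problem, once $z_0$ is fixed, to a linear program on a $(d-1)$-sphere with a single half-space cut, which collapses to a one-variable trigonometric optimization and hence admits a closed-form optimum; the outer choice of $z_0$ is then handled by a one-dimensional search over its explicitly computable feasibility interval.

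First I would commit to a guess of $z_0$, so that the sphere constraint $\|\vz_{\backslash 0}\|^2 = z_0^2 - 1$ forces $z_0 \geq 1$ and pins $\vz_{\backslash 0}$ to a sphere of radius $r := \sqrt{z_0^2-1}$ in $\R^d$. The objective in ({\rm CERT}) then splits as the constant $-w_0 z_0$ plus the linear functional $\langle \vw_{\backslash 0}, \vz_{\backslash 0}\rangle$, and the Minkowski-distance bound rewrites, after substituting $\check{\vx} = -\vx_{\backslash 0}/\|\vx_{\backslash 0}\|$, as the half-space cut $\langle \check{\vx}, \vz_{\backslash 0}\rangle \leq (\cosh(\alpha) - x_0 z_0)/\|\vx_{\backslash 0}\|$. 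Normalizing via $\vz_{\backslash 0} = r\vu$ with $\|\vu\| = 1$ turns this cut into $\langle \check{\vx}, \vu\rangle \leq b$ with exactly the $b$ claimed in the theorem, so the inner problem becomes: maximize $\langle \vw_{\backslash 0}, \vu\rangle$ over unit $\vu \in \R^d$ subject to $\langle \check{\vx}, \vu\rangle \leq b$.

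Next I would decompose $\vu = a\check{\vx} + \sqrt{1-a^2}\,\hat{\vu}^\perp$ with $\hat{\vu}^\perp \perp \check{\vx}$ and $\|\hat{\vu}^\perp\| = 1$; the constraint reads $a \leq b$, and the objective becomes $a\,\langle \vw_{\backslash 0}, \check{\vx}\rangle + \sqrt{1-a^2}\,\langle \vw_{\backslash 0}, \hat{\vu}^\perp\rangle$. For any fixed $a$, the second term is maximized by taking $\hat{\vu}^\perp$ in the plane $\mathrm{span}\{\check{\vx}, \vw_{\backslash 0}\}$, aligned with the component of $\vw_{\backslash 0}$ orthogonal to $\check{\vx}$ — this is precisely the vector $\check{\vx}^\perp$ of the theorem statement (``along $\vw$''). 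What remains is the scalar trigonometric optimization $\max_{a \in [-1, b]} [c_1 a + c_2\sqrt{1-a^2}]$ with $c_1, c_2$ determined by the input. In the adversarial regime of interest the constraint is binding (otherwise the perturbation does not reach the Minkowski-ball boundary and, per Remark~\ref{rem:adv-cert}, the desired misclassification may fail), so setting $a = b$ recovers exactly $\tilde{\vx} = (z_0,\, r(b\check{\vx} + \sqrt{1-b^2}\,\check{\vx}^\perp))$ as claimed.

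Finally, for the outer variable $z_0$, the construction is well-defined exactly when $|b| \leq 1$, which upon using $\|\vx_{\backslash 0}\|^2 = x_0^2 - 1$ reduces to the quadratic inequality $(\cosh(\alpha) - x_0 z_0)^2 \leq (x_0^2 - 1)(z_0^2 - 1)$. The admissible set of $z_0$ is therefore an explicit interval, and a binary search over it — combined with the closed-form inner solution and the extra check $h_{\vw}(\tilde{\vx}) \neq h_{\vw}(\vx)$ of Remark~\ref{rem:adv-cert} — either produces a certifying $z_0$ or concludes that no adversarial example exists. The only delicate point is precisely this coupling between $z_0$ and the misclassification requirement: the inner problem itself is elementary, but one needs to argue carefully that searching $z_0$ within the quadratic feasibility interval is both necessary and sufficient, rather than being defeated by the nonconvexity of the joint problem in $(z_0, \vz_{\backslash 0})$.
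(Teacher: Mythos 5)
Your proposal follows essentially the same route as the paper's own proof: fix $z_0$, reduce CERT to a linear objective over the unit sphere with a single half-space cut, decompose the optimizer along $\check{\vx}$ and the normalized component of $\vw_{\backslash 0}$ orthogonal to it, and read off the closed form at the cut boundary. You are somewhat more explicit than the paper about the slack-versus-binding question for the constraint and about the outer search over $z_0$, but the decomposition, the identity of $\check{\vx}^{\perp}$, and the resulting expressions coincide.
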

\begin{proof}
First, note that ({\rm CERT}) can be rewritten as
\begin{align*}
({\rm \check{CERT}}) \quad \max \; &\ip{\check{\vw}}{\check{\vz}} \\
\quad s.t. \;  &\ip{\check{\vx}}{\check{\vz}} \leq b \\
\quad &\norm{\check{\vz}} = 1 \; ,
\end{align*}
where $\check{\vw}= {\vw_{\backslash 0}}/{\norm{\vw_{\backslash 0}}}$, $\check{\vx} = -{\vx_{\backslash 0}}/{\norm{\vx_{\backslash 0}}}$, and $b= {(\cosh(\alpha) - x_0 z_0 )}/{(\norm{\vx_{\backslash 0}} \norm{\vz_{\backslash 0}})}$. We further set $\check{\vz}={\vz_{\backslash 0}}/{\norm{\vz_{\backslash 0}}}$ so that the norm constraint confines the solution to the unit sphere to simplify the derivation. We can later rescale the solution to have the norm $\sqrt{z_0^2 -1}$. 

The solution of $\check{\rm CERT}$ lies on the cone $\ip{\check{\vx}}{\check{\vz}}=b$. We decompose $\check{\vw}$ along $\check{\vx}$ and its orthogonal complement $\check{\vx}^{\perp}$, i.e.
\begin{align*}
\check{\vw} = \xi \check{\vx} + \zeta \check{\vx}^{\perp} \; .
\end{align*} 
with $\zeta\geq 0$ and $\norm{\check{\vx}^{\perp}} = 1$. Without loss of generality, such a decomposition always exists. Note that
\begin{align*}
\ip{\check{\vw}}{\check{\vz}^{\ast}} = \xi \ip{\check{\vx}}{\check{\vz}^{\ast}} + \zeta \ip{\check{\vx}^{\perp}}{\check{\vz}^{\ast}} = \xi b + \zeta \ip{\check{\vx}^{\perp}}{\check{\vz}^{\ast}} \; ,
\end{align*}
where the second equality follows from $\ip{\check{\vx}}{\check{\vz}^{\ast}}=b$. This implies that for the objective $\ip{\check{\vw}}{\check{\vz}}$ to be maximized, $\check{\vz}^{\ast}$ has to have all of its remaining mass along $\check{\vx}^{\perp}$, i.e.,
\begin{align*}
\check{\vz}^{\ast} = 
b \check{\vx} + \sqrt{1-b^2} \check{\vx}^{\perp}.
\end{align*}
After rescaling to satisfy the original norm constraint in {\rm CERT}, the maximizing adversarial example (for a given $z_0$) is given as $$\tilde{\vx} = \left(z_0, \sqrt{z^2_0 -1}\cdot \check{\vz}^{\ast}\right) =  \left(z_0, \sqrt{z_0^2-1} \left( b \check{\vx} + \sqrt{1- b^2} \check{\vx}^{\perp} \right) \right).$$ 
\end{proof}

\subsection{Adversarial Perceptron}
\label{sec:C4}
For the convergence analysis of the gradient-based update, we first need to analyze the convergence of the adversarial perceptron. We first state the following lemma that relates the adversarial margin to the max-margin classifier.
\begin{lem}
\label{lem:hyp-adv-margin}
Let $\bar{\vw} \in \mathbb{R}^{d+1}$ with $\sqrt{-\bar{\vw}*\bar{\vw}}=1$ be a max-margin classifier of $\mathcal{S}$ with margin $\gamma_H$. At each iteration of Algorithm~\ref{alg:adv-mf}, $\bar{\vw}$ linearly separates $\mathcal{S} \cup \mathcal{S}'$ with margin at least $\gamma_H-\alpha$.
\end{lem}
\begin{rmk}
Note that this ``adversarial Perceptron" corresponds to a gradient update of the form $\vw_{t+1} \gets \vw_t + y \widehat{\tilde{\vx}}$, which resembles an adversarial SGD algorithm.
\end{rmk}
\begin{proof}
For the proof we first recall some standard identities for hyperbolic functions that we will use throughout the proof ($a,b \in \mathbb{R}$):
\begin{enumerate}
\item $\sinh (a) + \sinh(b) = 2 \sinh \left(\frac{a+b}{2}	\right) \cosh \left( \frac{a-b}{2}\right)$
\item $\sinh(-a) = - \sinh(a)$
\item $\sinh \left(\frac{a}{2}	\right) = \frac{\sinh(a)}{\sqrt{2 (\cosh(a) + 1)}} = {\rm sgn}(a) \sqrt{\frac{1}{2} \left(	\cosh(a)-1	\right)}$
\end{enumerate}
For an adversarial example $\tilde{\vx} \in \mathcal{S} \cup \mathcal{S}'$, the max-margin classifier $\bar{\vw}$ achieves the following hyperbolic margin:
\begin{align*}
\tilde{\gamma}_H &= \asinh (\bar{\vw} * \tilde{\vx}) \\
&= \asinh (\bar{\vw} * (\tilde{\vx} - \vx + \vx)) \\
&= \asinh (\bar{\vw} * (\tilde{\vx} - \vx) + \bar{\vw} * \vx ) \\
&\overset{(1)}{\geq} \asinh (\underbrace{\bar{\vw} * \vx}_{\geq \sinh(\gamma_H)}
- \underbrace{\vert \bar{\vw} \vert}_{=1} \cdot \vert \tilde{\vx} - \vx \vert) \\
&\overset{(2)}{\geq} \asinh(\sinh(\gamma_H) -  \vert \tilde{\vx} - \vx \vert) \; ,
\end{align*}
where (1) follows from the Cuachy-Schwarz inequality for Minkowski products and (2) from the assumptions on $\bar{\vw}$.  We further have
\begin{align*}
\vert \tilde{\vx} - \vx \vert 
&= \sqrt{- (\tilde{\vx} - \vx) * (\tilde{\vx} - \vx)} \\
&= \sqrt{- (\underbrace{\tilde{\vx}* \tilde{\vx}}_{=1} - 2 \tilde{\vx} * \vx + \underbrace{\vx * \vx}_{=1})} \\
&= \sqrt{2(\tilde{\vx}* \vx) - 2} \\
&\overset{(3)}{\leq} \sqrt{2 \cosh(\alpha) - 2} \\
&\overset{(4)}{=} 2 {\rm sgn}(\alpha) \sqrt{\frac{1}{2} \left(	\cosh(\alpha) - 1	\right)} \\
&\overset{(5)}{=} 2 \sinh \left(\frac{\alpha}{2}	\right)\\
&\leq \sinh(\alpha)
\; .
\end{align*}
Here, (3) follows from the adversarial budget, i.e., from $\alpha \geq \acosh(\vx * \tilde{\vx})$; (4) follows from $\alpha >0$ by construction and (5) from standard relation 3 above. Note that $\asinh(a)$ is monotonically increasing for $a \geq 0$.  As a consequence,  plugging $\vert \tilde{\vx} - \vx \vert \leq \sinh(\alpha)$
into the inequality above gives
\begin{align*}
\tilde{\gamma}_H \geq \asinh(\sinh(\gamma_H) - \sinh(\alpha)) \; .
\end{align*}
Next, we analyze $(\sinh(\gamma_H) - \sinh(\alpha))$:
\begin{align*}
\sinh(\gamma_H) - \sinh(\alpha) 
&\overset{(6)}{=} \sinh(\gamma_H) + \sinh(- \alpha) \\
&\overset{(7)}{=} 2 \sinh \left( \frac{\gamma_H - \alpha}{2}	\right) \cosh \left(	\frac{\gamma_H + \alpha}{2}	\right) \\
&\overset{(8)}{=} \sinh(\gamma_H - \alpha) \underbrace{\frac{\sqrt{2} \cosh \left(	\frac{\gamma_H + \alpha}{2}\right)}{\cosh(\gamma_H - \alpha) + 1}}_{\geq 1} \\
&\geq \sinh(\gamma_H - \alpha) \; ,
\end{align*}
where (6) follows from standard relation 2,  (7) from standard relation 1 and (8) from standard relation 3. Inserting this above gives the claim as
\begin{align*}
\tilde{\gamma_H} \geq \gamma_H - \alpha \; .
\end{align*}
\end{proof}
\noindent With this result, we can show the following bound on the sample complexity of the adversarial perceptron:

\begin{theorem}
 Assume that there is some $\bar{\vw} \in \R^{d+1}$ with $\sqrt{-\bar{\vw}*\bar{\vw}}=1$, and some $\gamma_H>0$, such that $y_j (\bar{\vw}*\vx_j) \geq \sinh(\gamma_H)$ for $j=1, \dots, \vert \mathcal{S} \vert$. Then,  the adversarial perceptron (with adversarial budget $\alpha$) converges after $O\left(\frac{1}{\sinh(\gamma_H - \alpha)}\right)$ steps, at which it has margin of at least $\gamma_H-\alpha$.
\end{theorem}
\begin{proof}
The proof adapts the proof technique of Theorem~\ref{prop:h-perceptron} to the adversarial setting. Assume $\vw_0 = \bm{0}$.
Furthermore, assume that the $t$th error is made at the $j${th} sample. 
Thus, 
\begin{align*}
    \vw_{t+1} \gets \vw_t + y_j \widehat{\tilde{\vx}}_j \; ,
\end{align*}
which implies that
\begin{align*}
    (\vw_{t+1} - \vw_t)^T \bar{\vw} = \left( y_j \tilde{\vx}_j \right)^T \bar{\vw} = y_j \left( \tilde{\vx}_j * \bar{\vw}  \right) 
    \geq \sinh(\gamma_H-\alpha) \; ,
\end{align*}
where the last inequality follows from Lemma~\ref{lem:hyp-adv-margin}. By summing and telescoping, we obtain that
\begin{align*}
    \sum_{k=0}^t (\vw_{k+1}-\vw_k)^T \bar{\vw} &\geq \sum_{k=0}^t \sinh(\gamma_H - \alpha) \\
    \Rightarrow  \quad (\vw_{t+1}-\vw_0)^T \bar{\vw} &\geq t \sinh(\gamma_H-\alpha) \; .
\end{align*}
Furthermore, note that
\begin{align*}
\norm{\vw_{t+1}}^2 &= \norm{\vw_t + y_j \widehat{\tilde{\vx}}_j}^2 \\
&= \ip{\vw_t}{\vw_t} + 2 y_j \ip{\vw_t}{\widehat{\tilde{\vx}}_j} + y_j^2 \ip{\widehat{\tilde{\vx}}_j}{\widehat{\tilde{\vx}}_j} \\
&= \norm{\vw_t}^2 + \underbrace{2 y_j (\vw_t * \tilde{\vx}_j)}_{\leq 0} + \underbrace{\ip{\tilde{\vx}_j}{ \tilde{\vx}_j}}_{\leq R_x} \\
&\overset{(1)}{\leq}  \norm{\vw_t}^2 + R_x^2 \; ,
\end{align*}
where (1) follows from Assumption~\ref{ass1}(2).  Recursively, this implies $\norm{\vw_{t+1}}^2 \leq t R_x^2$.  
Now,  note that for all $t \geq 0$
\begin{align*}
1 \geq \frac{\vw_{t+1}^T \bar{\vw}}{\norm{\vw_{t+1}} \norm{\bar{\vw}} } 
\geq \sqrt{t} \frac{\sinh(\gamma_H - \alpha)}{R_x \norm{\bar{\vw}}} \; .
\end{align*}
This implies
\begin{align*}
    t \leq \left( \frac{R_x \norm{\bar{\vw}}}{\sinh(\gamma_H - \alpha)} \right)^2 \; .
\end{align*}
\end{proof}

\subsection{Gradient-based update}
\label{appen:C5}
Recall that our objective in Algorithm~\ref{alg:adv-mf} consists of an inner optimization (that computes the adversarial example) and an outer optimization (that updates the classifier). In particular, we consider
\begin{align*}
\min_{\vw \in \R^{d+1}} \; L_{\rm rob}(\vw; \Sc) := \frac{1}{\vert \Sc \vert} \sum_{(x,y) \in \Sc} l_{\rm rob} (\vx,y; \vw) \; ,
\end{align*}
where the robust loss is given by
\begin{align*}
l_{\rm rob} (\vx,y; \vw) := \max_{\vz \in \L^d, d_{\L}(\vx,\vz) \leq \alpha} l(\vx,y; \vw) = l(\tilde{\vx}, y; \vw)\; ,
\end{align*}
where $\tilde{\vx} \in \argmax_{\vz \in \L^d, d_{\L}(\vx,\vz) \leq \alpha} l(\vx,y; \vw)$.

Recall that, to compute the update, we need to compute gradients of the outer minimization problem, i.e., $\nabla_{w} \; l_{\rm rob}$ over $\mathcal{S}$. However, the function $l_{\rm rob}$ is itself a maximization problem (referred to as the inner maximization problem above). Therefore, we compute the gradient at the maximizer of the inner problem. Danskin's theorem ensures that this gives a valid descent direction. For the sake of completeness, we recall the Danskin's theorem here.
\begin{theorem}[~\citet{danskin,bertsekas}]
Suppose $X$ is a non-empty compact topological space and $g: \R^d \times X \rightarrow \R$ is a continuous function such that $g(\cdot, \delta)$ is differentiable for every $\delta \in X$. Let $\delta^*_{\vw} = \argmax_{\delta \in X} g(\vw,\delta)$. Then, the function $\psi(\vw) = \max_{\delta \in X} g(\vw, \delta)$ is subdifferentiable and the subdifferential is given by
\begin{align*}
\partial \psi (\vw) = {\rm conv} \left( \lbrace \nabla_{\vw} \; g(\vw, \delta) \vert \; \delta \in \delta_{\vw}^* \rbrace \right) \; .
\end{align*}
\end{theorem}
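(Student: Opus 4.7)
The plan is to derive the claimed subdifferential identity by first computing the one-sided directional derivative of $\psi$ in closed form, and then inverting this via the support-function duality for convex functions. Under compactness of $X$ and joint continuity of $g$, the maximum defining $\psi$ is attained and $\psi$ is continuous. When each $g(\cdot, \delta)$ is convex (as holds in this paper's applications), $\psi$ is convex as a pointwise supremum, so its convex subdifferential is well-defined and is fully characterized as the set whose support function equals the one-sided directional derivative of $\psi$.

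The core step is to establish, for every direction $h \in \R^{d}$, the identity
\begin{align*}
\psi'(\vw; h) \;=\; \max_{\delta \in \delta^{*}_{\vw}}\; \langle \nabla_{\vw}\, g(\vw, \delta),\, h\rangle .
\end{align*}
The $\geq$ direction is immediate: for any $\delta^{*} \in \delta^{*}_{\vw}$, the bound $\psi(\vw + th) \geq g(\vw + th, \delta^{*})$ together with $\psi(\vw) = g(\vw, \delta^{*})$ and differentiability of $g(\cdot, \delta^{*})$ yields $\liminf_{t \downarrow 0}[\psi(\vw+th)-\psi(\vw)]/t \geq \langle \nabla_{\vw} g(\vw, \delta^{*}), h\rangle$, and one maximizes over $\delta^{*}$. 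For $\leq$, I would pick $t_k \downarrow 0$ and $\delta_k \in \delta^{*}_{\vw + t_k h}$; compactness of $X$ extracts a convergent subsequence $\delta_{k_j} \to \delta_\star$, and joint continuity of $g$ combined with the optimality of each $\delta_{k_j}$ forces $\delta_\star \in \delta^{*}_{\vw}$. Since $\psi(\vw + t_k h) - \psi(\vw) \leq g(\vw + t_k h, \delta_k) - g(\vw, \delta_k)$ by optimality, dividing by $t_k$ and passing to the limit along the subsequence yields $\psi'(\vw; h) \leq \langle \nabla_{\vw} g(\vw, \delta_\star), h\rangle$.

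With this directional derivative identified, the subdifferential identity follows from standard convex analysis: the support function of a nonempty compact convex set uniquely determines it, and the map $h \mapsto \max_{\delta \in \delta^{*}_{\vw}} \langle \nabla_{\vw} g(\vw, \delta), h\rangle$ is precisely the support function of $\mathrm{conv}\{\nabla_{\vw} g(\vw, \delta) : \delta \in \delta^{*}_{\vw}\}$, which is compact because $\delta^{*}_{\vw}$ is compact (being the preimage of the maximum under the continuous map $g(\vw,\cdot)$) and $\nabla_{\vw} g(\vw, \cdot)$ is continuous in $\delta$.

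The main obstacle is the $\leq$ direction of the directional derivative formula, which relies on two ingredients slightly beyond the verbatim hypotheses: (i) upper semicontinuity of the argmax correspondence $\vw \mapsto \delta^{*}_{\vw}$, which I would derive from Berge's maximum theorem under compactness of $X$, and (ii) joint continuity of $\nabla_{\vw} g$ in $(\vw,\delta)$, needed to interchange the limit in $k$ with the evaluation of the gradient at the moving points $\delta_k \to \delta_\star$. The paper's hypothesis states only that $g$ is continuous and that $g(\cdot,\delta)$ is differentiable in $\vw$; the cleanest formulation implicitly adds joint continuity of $\nabla_{\vw} g$, and since the application only invokes this result for smooth convex $g$ (namely, $g(\vw,\delta) = l(\delta,y;\vw)$ with $\delta$ ranging over the compact perturbation ball), this assumption holds automatically in the setting of this paper.
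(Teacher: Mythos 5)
The paper does not prove this statement; it is invoked verbatim as a cited background result from Danskin and Bertsekas, and the authors explicitly say they ``recall'' it for completeness. There is therefore no in-paper proof to compare against, so I assess your argument on its own terms.

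Your sketch is the standard proof of Danskin's theorem and it is correct. You establish the directional-derivative identity
$\psi'(\vw;h) = \max_{\delta \in \delta^*_{\vw}} \langle \nabla_{\vw} g(\vw,\delta), h\rangle$
via the easy lower bound from optimality of each $\delta^* \in \delta^*_{\vw}$, and the upper bound via extracting a convergent subsequence $\delta_{k_j}\to\delta_\star$ of near-optimizers at $\vw + t_k h$ using compactness of $X$ and joint continuity of $g$ to certify $\delta_\star \in \delta^*_{\vw}$; you then read off the subdifferential from the fact that the right-hand side is the support function of $\mathrm{conv}\{\nabla_{\vw} g(\vw,\delta):\delta\in\delta^*_{\vw}\}$. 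One small remark: you mention deriving upper semicontinuity of the argmax correspondence from Berge's maximum theorem, but your actual subsequence-extraction argument already establishes everything needed without invoking Berge, so this step is redundant.

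The two caveats you raise are substantive and worth keeping. The statement as transcribed omits the hypothesis that $g(\cdot,\delta)$ is convex for each fixed $\delta$, which is necessary for $\psi$ to be convex and for $\partial\psi$ to make sense in the convex-analytic sense. It also omits joint continuity of $\nabla_{\vw} g$ in $(\vw,\delta)$, which your proof genuinely uses to pass $\nabla_{\vw} g(\vw+\theta_k t_k h, \delta_{k_j}) \to \nabla_{\vw} g(\vw,\delta_\star)$ in the $\leq$ half. Both hypotheses hold automatically in the paper's application, where $g(\vw,\delta)=l(\delta,y;\vw)$ is smooth and convex in $\vw$ and $X$ is a compact geodesic ball, so your tightening of the hypotheses strengthens rather than contradicts the paper's use of the result.
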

\noindent This approach has been previously used in~\citet{madry} and~\citet{charles}. Note that when we find an adversarial example in Algorithm~\ref{alg:adv-mf}, we can write it in a closed form (cf.~Theorem~\ref{thm:cert-appen}). In particular,
\begin{align*} 
l_{\rm rob}(\vx,y; \vw) =\max_{d_\L (\vx, \vz) \leq \alpha } l(\vz, y ; \vw) = l(\tilde{\vx},y;\vw) \quad {\rm with} \; \tilde{\vx} = \left(\tilde{x}_0, \sqrt{\tilde{x}_0^2-1} \left( b \
\check{\vx} + \sqrt{1- b^2} \check{\vx}^{\perp} \right) \right) \; .
\end{align*}
Note that
\begin{align*}
\nabla_{\vw} \; l(\tilde{\vx},y;\vw) = f'(y(\vw*\tilde{\vx})) \cdot \nabla_{\vw} \; y(\vw*\tilde{\vx}) \; = f'(y(\vw*\tilde{\vx})) \cdot y \widehat{(\tilde{\vx})}^T \; ,
\end{align*}
where we have used the fact that $\nabla_{\vw} \; y(\vw*\tilde{\vx}) = y\widehat{(\tilde{\vx})}^T = y(\tilde{x}_0, -\tilde{x}_1, \dots, -\tilde{x}_n)^T$. From Danskin's theorem, we have $\nabla_{\vw} l(\tilde{\vx},y;\vw) \in \partial \; l_{\rm rob}(\vx,y; \vw)$. This enables us to compute the descent direction and perform the update step with
\begin{align*}
\nabla \; L(\vw; \Sc') = \frac{1}{\vert \mathcal{S}' \vert} \sum_{(\tilde{\vx},y) \in \mathcal{S}'} \nabla\; l(\tilde{\vx},y;\vw)  \; \in \partial L_{\rm rob}(\vw; \Sc) \; ,
\end{align*}
Furthermore, we have
\begin{align}
\label{eq:lrob-hessian}
\nabla_{\vw}^2 \; l(\tilde{\vx},y;\vw) = f''(y(\vw*\tilde{\vx})) \tilde{\vx} \tilde{\vx}^T \in \partial^2 l_{\rm rob}(\vx, y; \vw) \; ,
\end{align}
which enable the computation of the Hessian of $L(\vw; \Sc')$.

\noindent The convergence results in this section build on hyperbolic analogues of comparable Euclidean results in~\citep{srebo,telgarsky}. 

\noindent We first show a bound on the Hessian of the loss:
\begin{lem}\label{prop:beta}
\begin{align*}
 \nabla^2 L(\vw_t; \Sc'_t)\preceq \beta \sigma_{\rm max}^2\cdot I \;,
\end{align*}
 where $\sigma_{\rm max}$ is an upper bound on the maximum singular value of the data matrix 
 $
 \frac{1}{\vert \Sc'_t \vert}\sum_{(\tilde{\vx}, y) \in \Sc'_t} \tilde{\vx}\tilde{\vx}^T.
 $
\end{lem}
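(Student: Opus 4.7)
The plan is to reduce the matrix bound to a scalar bound on $f''$ via the per-sample Hessian formula already derived in \eqref{eq:lrob-hessian}, and then apply the definition of $\sigma_{\max}$. Since $L(\vw;\Sc'_t) = \frac{1}{|\Sc'_t|}\sum_{(\tilde{\vx},y)\in\Sc'_t} l(\tilde{\vx},y;\vw)$ is a finite average of twice-differentiable losses, linearity of differentiation immediately gives
\begin{align*}
\nabla^2 L(\vw_t;\Sc'_t) \;=\; \frac{1}{|\Sc'_t|}\sum_{(\tilde{\vx},y)\in\Sc'_t} \nabla^2_{\vw}\, l(\tilde{\vx},y;\vw_t) \;=\; \frac{1}{|\Sc'_t|}\sum_{(\tilde{\vx},y)\in\Sc'_t} f''\bigl(y(\vw_t\ast\tilde{\vx})\bigr)\, \tilde{\vx}\tilde{\vx}^T,
\end{align*}
where the second equality uses the Hessian formula \eqref{eq:lrob-hessian}. (The factor $y^2=1$ drops out, and whether one writes $\tilde{\vx}\tilde{\vx}^T$ or $\widehat{\tilde{\vx}}\,\widehat{\tilde{\vx}}^T$ does not affect the spectral bound, as the sign flip from the Minkowski product preserves the norm of the vector.)

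Next, I would invoke Assumption~\ref{ass1}.3. The assumption that $f$ is $\beta$-smooth means $f'$ is $\beta$-Lipschitz, hence $|f''(s)| \le \beta$ pointwise. Combined with convexity of $f$ (inherited from the definition of the margin loss in \eqref{eq:f_def}), this yields $0 \le f''(s) \le \beta$ uniformly in $s$. Because each $\tilde{\vx}\tilde{\vx}^T$ is positive semidefinite, multiplying by a nonnegative scalar bounded by $\beta$ gives the PSD inequality $f''(\cdot)\,\tilde{\vx}\tilde{\vx}^T \preceq \beta\,\tilde{\vx}\tilde{\vx}^T$ term by term.

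Finally, averaging preserves the PSD order, so
\begin{align*}
\nabla^2 L(\vw_t;\Sc'_t) \;\preceq\; \beta\cdot\frac{1}{|\Sc'_t|}\sum_{(\tilde{\vx},y)\in\Sc'_t} \tilde{\vx}\tilde{\vx}^T \;\preceq\; \beta\,\sigma_{\max}^2\, I,
\end{align*}
where the last step follows from the definition of $\sigma_{\max}$ as an upper bound on the maximum singular value of the (appropriately scaled) data matrix, and from the standard identity $\lambda_{\max}(M) \le s \iff M \preceq s\,I$ for symmetric $M$.

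There is no real obstacle in this argument: the one subtlety to flag is the nonnegativity of $f''$, which requires convexity of $f$ (guaranteed by the definition of the margin loss) rather than only the smoothness half of Assumption~\ref{ass1}.3; without $f''\ge 0$, only the trivial two-sided bound $\pm\beta\sigma_{\max}^2 I$ would follow, which is still enough for the upper bound stated in the lemma but worth mentioning explicitly.
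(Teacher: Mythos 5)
Your proposal is correct and follows essentially the same route as the paper's proof: linearity of the Hessian over the finite sample, the per-sample Hessian formula \eqref{eq:lrob-hessian}, the scalar bound $f'' \le \beta$ from $\beta$-smoothness, and the definition of $\sigma_{\max}$. Your side remarks are also apt: you correctly note that the $\tilde{\vx}$ vs.\ $\widehat{\tilde{\vx}}$ discrepancy is immaterial because $\widehat{\tilde{\vx}}\widehat{\tilde{\vx}}^T = D\,\tilde{\vx}\tilde{\vx}^T D$ for an orthogonal diagonal $D$ (so the spectrum is unchanged), and that nonnegativity of $f''$ is not actually needed for the stated one-sided bound since $f'' \le \beta$ together with $\tilde{\vx}\tilde{\vx}^T \succeq 0$ already gives $f''\,\tilde{\vx}\tilde{\vx}^T \preceq \beta\,\tilde{\vx}\tilde{\vx}^T$.
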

\begin{proof}
\begin{align*}
    \nabla^2 L(\vw_t; \Sc'_t) &= \frac{1}{\vert \Sc'_t \vert} \sum_{(\tilde{\vx},y) \in \Sc'_t} \nabla^2 l(\tilde{\vx},y; \vw_t) \overset{(i)}{=} \frac{1}{\vert \Sc'_t \vert} \sum_{(\tilde{\vx},y) \in \Sc'_t} f''(y(\tilde{\vx}*\vw_t)) \tilde{\vx} \tilde{\vx}^T \\
    &\overset{(ii)}{\preceq} \beta\cdot \frac{1}{\vert \Sc'_t \vert} \sum_{(\tilde{\vx},y) \in \Sc'_t} \tilde{\vx} \tilde{\vx}^T \preceq \beta \sigma_{\rm max}^2\cdot I \; ,
\end{align*}
where $(i)$ and $(ii)$ follow from \eqref{eq:lrob-hessian} and the assumption that $f$ is $\beta$-smooth.
\end{proof}
\noindent With the help of Lemma~\ref{prop:beta}, we can show the following result (a restatement of Theorem~\ref{conv:alg-2-main}), which establishes that the gradient updates are guaranteed to converge to a large-margin classifier:
\begin{theorem}[Theorem~\ref{thm:gd-conv}]
Let $\lbrace \vw_t \rbrace_{t \geq 0}$ be the GD iterates 
\begin{align*}
    \vw' &\gets \vw_t - \frac{\eta}{\vert \mathcal{S}_t' \vert} \sum_{(\tilde{\vx},y) \in \mathcal{S}_t'} \nabla l(\tilde{\vx}, y;\vw) \\
    \vw_{t+1} &\gets \frac{\vw'}{\sqrt{- \vw' \ast \vw'}}
\end{align*}
with constant step size $\eta < \frac{2}{\beta \sigma_{\rm max}^2}$ and an initialization $\vw_0$ with $\vw_0 * \vw_0 <0$. Then, we have $\lim_{t \rightarrow \infty} L_{\rm rob} (\vw_t; \Sc) = \lim_{t \rightarrow \infty} L(\vw_t; \Sc_t' ) = 0$.
\end{theorem}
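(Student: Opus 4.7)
The plan is to follow the classical smooth-convex descent template, using Lemma~\ref{prop:beta} for uniform smoothness of the per-step surrogate $L(\cdot;\Sc'_t)$ and Danskin's theorem to bridge to the robust objective $L_{\rm rob}(\cdot;\Sc)$, which is the function actually being minimized.

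First, I would apply the descent lemma. By Lemma~\ref{prop:beta} the map $\vw\mapsto L(\vw;\Sc'_t)$ is $(\beta\sigma_{\rm max}^2)$-smooth, so the standard quadratic upper bound applied to the GD step $\vw_{t+1} = \vw_t - \eta\nabla L(\vw_t;\Sc'_t)$ gives
\begin{align*}
L(\vw_{t+1};\Sc'_t) \;\leq\; L(\vw_t;\Sc'_t) - \eta\!\left(1-\tfrac{\eta\beta\sigma_{\rm max}^2}{2}\right)\!\|\nabla L(\vw_t;\Sc'_t)\|^2,
\end{align*}
and the step-size condition $\eta < 2/(\beta\sigma_{\rm max}^2)$ renders the coefficient strictly positive. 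Because $l(\vx,y;\vw)=f(y(\vw\ast\vx))$ is the composition of a convex non-increasing $f$ (cf.~\eqref{eq:f_def}) with a linear map in $\vw$, both $L(\cdot;\Sc'_t)$ and $L_{\rm rob}(\cdot;\Sc)$ are convex, and Danskin's theorem identifies $\nabla L(\vw_t;\Sc'_t)\in\partial L_{\rm rob}(\vw_t;\Sc)$.

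Second, I would telescope the descent inequality and combine it with convexity. Since $L\geq 0$ (Assumption~\ref{ass1}.3.(i)), telescoping yields $\sum_{t\geq 0}\|\nabla L(\vw_t;\Sc'_t)\|^2<\infty$, so the subgradients vanish. Anchoring at a scaled copy $c\bar{\vw}$ of the max-margin direction from Assumption~\ref{ass1}.1 and using the subgradient inequality
\begin{align*}
L_{\rm rob}(\vw_t;\Sc) - L_{\rm rob}(c\bar{\vw};\Sc) \;\leq\; \langle\nabla L(\vw_t;\Sc'_t),\,\vw_t - c\bar{\vw}\rangle,
\end{align*}
drives $L_{\rm rob}(\vw_t;\Sc)$ toward its infimum. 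By Assumption~\ref{ass1}.1 together with Lemma~\ref{lem:hyp-adv-margin}, $\bar{\vw}$ also separates the adversarial set with margin at least $\gamma_H/\cosh(\alpha)$, so by the strict monotonicity of $f$ (Assumption~\ref{ass1}.3.(ii)) I can make $L_{\rm rob}(c\bar{\vw};\Sc)$ arbitrarily small by rescaling $c$; combining the vanishing gradient with the bounded trajectory (Assumption~\ref{ass1}.2) then pins $L_{\rm rob}(\vw_t;\Sc)\to 0$. Finally, because $L(\vw_t;\Sc\cup\Sc'_t)$ averages the two pieces $L(\vw_t;\Sc)\leq L_{\rm rob}(\vw_t;\Sc)$ and $L(\vw_t;\Sc'_t)=L_{\rm rob}(\vw_t;\Sc)$, the claim follows.

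The main obstacle I anticipate is twofold and both parts concern matching the standard descent template to the algorithm as written. First, the post-update renormalization $\vw_{t+1}\gets\vw_{t+1}/\sqrt{-\vw_{t+1}\ast\vw_{t+1}}$ can in principle destroy the per-step decrease guaranteed by the descent lemma, which only applies to the pre-normalization iterate; I would address this by noting that rescaling by a positive factor preserves the sign of $y(\vw\ast\vx)$ at every sample and, combined with an auxiliary argument that $-\vw_{t+1}\ast\vw_{t+1}$ stays bounded away from zero along the trajectory, allows the rescaling to be absorbed into a controlled constant in the descent bound. Second, the surrogate $L(\cdot;\Sc'_t)$ changes every iteration, so bounding $L(\vw_{t+1};\Sc'_{t+1})$ from $L(\vw_{t+1};\Sc'_t)$ is not immediate; the fix is to reroute the analysis through the fixed convex function $L_{\rm rob}(\cdot;\Sc)$ via Danskin, converting the smooth descent bound into a subgradient-method bound on $L_{\rm rob}(\vw_t;\Sc)$, which is what the anchoring argument in paragraph two is designed to exploit.
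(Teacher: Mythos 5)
Your proof takes a genuinely different route from the paper's. The paper's argument is qualitative and short: it computes $\ip{\bar{\vw}}{\nabla L(\vw;\Sc'_t)}$ and observes that both factors have a definite sign (by Assumption~\ref{ass1}.3 on $f'$ and the fact that $\bar{\vw}$ separates $\Sc\cup\Sc'_t$), so this inner product is never zero at any finite $\vw$. Hence $L(\cdot;\Sc'_t)$ has \emph{no finite critical points}; since GD on a smooth objective with appropriate step size drives $\|\nabla L\|\to 0$, the iterates cannot stay bounded, $\|\vw_t\|\to\infty$, and for large $t$ every point is correctly classified with growing margin, so the loss vanishes. By contrast, you set up a quantitative descent-lemma plus anchored-subgradient analysis, routing through $L_{\rm rob}$ via Danskin; this is closer in spirit to how the paper proves the \emph{rate} result (Theorem~\ref{conv:alg-2} / Lemma~\ref{prop:smooth-bound}), and would, if closed, give more than the mere $\lim=0$ asked for. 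Both approaches are legitimate; the paper's is cheaper, yours is more mechanical and extensible.

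There is one step in your proposal you should tighten. You want to anchor at $c\bar{\vw}$ and use the subgradient inequality $L_{\rm rob}(\vw_t)-L_{\rm rob}(c\bar{\vw})\leq\ip{\nabla L(\vw_t;\Sc'_t)}{\vw_t-c\bar{\vw}}$, and then ``make $L_{\rm rob}(c\bar{\vw})$ arbitrarily small by rescaling $c$.'' As written this mixes two limits incorrectly: as $c\to\infty$, the term $\|\vw_t-c\bar{\vw}\|$ also grows, so you cannot send $c\to\infty$ inside the bound and still use $\nabla L\to 0$. The correct order is: fix $c$, use the bounded trajectory (Assumption~\ref{ass1}.2) to get $\limsup_t L_{\rm rob}(\vw_t)\leq L_{\rm rob}(c\bar{\vw})$, and \emph{then} take the infimum over $c$ on the right-hand side, which is $0$ by the margin condition and $f(s)\to 0$ as $s\to\infty$. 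With that reordering (and the domain restriction for convexity noted in Lemma~\ref{prop:tel-bound}: $f$ is convex only on the relevant region) your argument closes. Your second anticipated obstacle (the surrogate changing each step) and your proposed fix (route through the fixed function $L_{\rm rob}$) match exactly what the paper does in Lemma~\ref{prop:smooth-bound}. Your first obstacle (the renormalization) is equally glossed over in the paper's own proof, so you are no worse off there.
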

\begin{proof}
The proof adopts a result for a Euclidean adversarial gradient descent algorithm in~\citep[Lemma 1]{srebo}. By Assumption~\ref{ass1}(i) we can find a $\bar{\vw}$ that linearly separates $\Sc$. Then, we have
\begin{align*}
    \ip{\bar{\vw}}{\nabla L(\vw; \Sc'_t)} 
    &= \ip{\bar{\vw}}{\frac{1}{\vert \mathcal{S}_t' \vert} \sum_{(\tilde{\vx},y) \in \mathcal{S}_t'} f'(y(\tilde{\vx}*\vw_t))\; y\widehat{\tilde{\vx}}} \\
    &=
\underbrace{\left( \frac{1}{\vert \mathcal{S}_t' \vert} \sum_{(\tilde{\vx},y) \in \mathcal{S}_t'} f'(y(\tilde{\vx}*\vw_t)) \right)}_{<0}
    \underbrace{y\ip{\bar{\vw}}{\widehat{\tilde{\vx}}}}_{= y(\bar{\vw}* \tilde{\vx}) > \gamma_H - \alpha > 0} \; ,
\end{align*}
where the negativity of the first term follows from the assumptions on $f$ (cf.~Assumption~\ref{ass1}(iii)) and the lower bound on the second term from the separability assumption and Lemma~\ref{lem:hyp-adv-margin}. This implies that $\ip{\bar{\vw}}{\nabla L(\vw; \Sc'_t)} \neq 0$ for any finite $\vw$. Therefore, there are no finite critical points $\vw$ for which $\nabla L(\vw; \Sc'_t)=0$. However,  gradient descent is guaranteed to converge to a critical point for smooth objectives with an appropriate step size. Therefore, $\norm{\vw_t} \rightarrow \infty$ and $y(\vw_t*\tilde{\vx}) >0 \; \forall \; (\tilde{\vx},y) \in \mathcal{S}_t'$ and large enough $t$. Then the monotonicity assumption (Assumption~\ref{ass1}(iii)) implies that $l(\tilde{\vx},y; \vw_t) \rightarrow 0$ for all $(\tilde{\vx},y) \in \Sc_t'$.  This further implies that $L(\vw_t; \Sc'_t)=\frac{1}{\vert \Sc_t' \vert} \sum_{(\tilde{\vx},y) \in \mathcal{S}_t'} l(\tilde{\vx}, y; \vw_t) \rightarrow 0$.
\end{proof}
We further show that the enrichment of the training set with adversarial examples is critical for polynomial-time convergence: Without adversarial training, we can construct a simple max-margin problem, that cannot be solved in polynomial time. 
\begin{theorem}[Theorem~\ref{prop:gd-exp}]
Consider $\mathcal{S}=\lbrace ({\ve}_1,1), (-{\ve}_1,-1) \rbrace \subset \mathbb{R}^{d+1} \times \{+1, -1\}$ and a typical initialization $\vw_0= \ve_2 \in \mathbb{R}^{d+1}$ (with the standard basis vectors $\ve_1, \ve_2 \in \R^{d+1}$). 
Let $\{\vw_t\}_t$ be a sequence of classifiers generated by the GD updates (with fixed step size $\eta=1$), i.e., 
\begin{align*}
    \vw' &\gets \vw_t - \frac{\eta}{\vert \mathcal{S} \vert} \sum_{(\vx,y) \in \mathcal{S}} \nabla l(\vx, y;\vw)  \\
    \vw_{t+1} &\gets \frac{\vw'}{\sqrt{- \vw' \ast \vw}} \; .
\end{align*}
Then, the number of iterations needed to achieve margin $\gamma_H$ is $\Omega(\exp(\gamma_H))$. 
\end{theorem}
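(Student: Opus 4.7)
The plan is to exploit the highly symmetric structure of $\mathcal{S}$ to collapse the GD dynamics to a one-dimensional recursion, and then show that this recursion is (at best) linear in $t$, so reaching the value of $w_0$ needed to yield margin $\gamma_H$ takes $\Omega(\exp(\gamma_H))$ steps.

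First I would observe the symmetry: for both training points we have $y(\vw * \vx) = w_0$ and $y\widehat{\vx} = \ve_1$, so each summand contributes $f'(w_0)\ve_1$ and $\nabla L(\vw; \mathcal{S}) = f'(w_0)\ve_1$. Combined with $\vw_0 = \ve_2$, this gives by induction $\vw_t = (a_t, b_t, 0, \ldots, 0)$, and the normalization step enforces $b_t^2 - a_t^2 = 1$ (hence $b_t = \sqrt{1+a_t^2}$) at every iteration, with $a_0 = 0$, $b_0 = 1$. Setting $\delta_t := -\eta f'(a_t)$, which is strictly positive by Assumption~\ref{ass1}(3)(ii), the pre-normalized iterate is $(a_t + \delta_t, \sqrt{1+a_t^2}, 0, \ldots, 0)$ with Minkowski squared norm $-(1 - \delta_t(2a_t + \delta_t))$, and a short calculation gives the closed-form scalar recursion
\begin{align*}
a_{t+1}^2 \;-\; a_t^2 \;=\; \frac{\delta_t(2 a_t + \delta_t)(1 + a_t^2)}{1 \;-\; \delta_t(2 a_t + \delta_t)}.
\end{align*}
Because $b_t^2 - a_t^2 = 1$, the margin at $\vw_t$ equals exactly $\asinh(a_t)$, so achieving margin $\gamma_H$ is equivalent to reaching $a_t \geq \sinh(\gamma_H)$.

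The main technical obstacle is bounding $\delta_t$ in terms of $a_t$. For the hyperbolic logistic loss \eqref{eq:hyp-logistic-reg-loss}, using the identity $\exp(-\asinh(u)) = \sqrt{u^2+1} - u$ with $u = a_t/(2R_\alpha)$, I would show that $|f'(s)| \leq C/(1+s^2)$ for a constant $C$ depending only on $R_\alpha$; the $1/(1+s^2)$ shape arises from the product of the sigmoidal factor (which behaves like $1/(2u)$ for large $u$) and the $1/\sqrt{u^2+1}$ derivative of $\asinh$. The delicate point is handling the nested $\asinh$, $\exp$, and $\ln$ cleanly so the constant is uniform in $s$. Once established, this immediately gives $\delta_t(2a_t+\delta_t) \leq 4\eta C \cdot a_t/(1 + a_t^2) \leq 4\eta C$, so for $\eta$ small enough the denominator of the recursion remains $\geq 1/2$ uniformly in $t$.

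With this bound, the recursion simplifies to $a_{t+1}^2 - a_t^2 \leq 2\delta_t(2a_t+\delta_t)(1+a_t^2) \leq C_1 \eta\, a_t$ for $a_t \geq 1$, and an analogous $O(\eta)$ bound for the initial iterations where $a_t$ is small (with $|f'(0)|$ controlling $\delta_0$). Factoring $a_{t+1}^2 - a_t^2 = (a_{t+1}-a_t)(a_{t+1}+a_t)$ and using $a_{t+1} + a_t \geq a_t$ then yields $a_{t+1} - a_t \leq C_2 \eta$ uniformly. Telescoping gives $a_t \leq C_2 \eta\, t$, so $a_t \geq \sinh(\gamma_H) = \Theta(\exp(\gamma_H))$ forces $t = \Omega(\exp(\gamma_H)/\eta) = \Omega(\exp(\gamma_H))$, as claimed. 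Any other loss satisfying Assumption~\ref{ass1} with the analogous $|f'(s)| = O(1/s^2)$ tail decay would yield the same conclusion.
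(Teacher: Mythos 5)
Your proposal takes the same route as the paper --- exploit the symmetry of $\Sc$ to reduce the dynamics to a scalar recursion in the first coordinate $a_t$, show $a_t$ grows at most linearly in $t$, and conclude that reaching $a_t\geq\sinh(\gamma_H)=\Theta(e^{\gamma_H})$ needs $\Omega(e^{\gamma_H})$ iterations --- but you are considerably more careful about the normalization step, and that difference is substantive. The paper writes the update as $a_{t+1}=a_t-f'(a_t)$ together with $\vw_t=(a_t,\sqrt{a_t^2+1},0,\ldots,0)$, which treats the normalization as only fixing the second coordinate while leaving the first untouched; this is not what the update rule $\vw_{t+1}\leftarrow\vw_{t+1}/\sqrt{-\vw_{t+1}\ast\vw_{t+1}}$ actually does, since that rescales both coordinates by $\big(1-\delta_t(2a_t+\delta_t)\big)^{-1/2}$. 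You derive the correct recursion $a_{t+1}^2-a_t^2=\delta_t(2a_t+\delta_t)(1+a_t^2)/\big(1-\delta_t(2a_t+\delta_t)\big)$, and this correctness comes at a price: the paper only needs $|f'|\leq 1$ (which it verifies), but with your faithful recursion a constant bound on $|f'|$ is \emph{not} enough --- it would make $\delta_t(2a_t+\delta_t)\sim 2\eta a_t$ diverge, and the pre-normalized iterate would eventually have non-negative Minkowski norm, so the normalization step itself would be ill-defined. Your stronger decay $|f'(s)|=O(1/(1+s^2))$ is exactly what keeps the denominator bounded away from zero uniformly. So your approach buys rigor on the normalization (arguably fixing a gap in the paper's argument) at the cost of requiring the sharper tail bound on $f'$.

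That sharper bound is the one place you leave a real gap: you assert $|f'(s)|\leq C/(1+s^2)$ and flag it as delicate, but do not prove it. It is true and can be shown cleanly using the identity $e^{-\asinh(u)}=\sqrt{1+u^2}-u$ with $u=s/(2R_\alpha)$: one gets
\begin{align*}
|f'(s)| \;=\; \frac{\sqrt{1+u^2}-u}{1+\sqrt{1+u^2}-u}\cdot\frac{1}{2R_\alpha\sqrt{1+u^2}}
\;\leq\; \frac{1}{\big(\sqrt{1+u^2}+u\big)\,2R_\alpha\sqrt{1+u^2}}\;,
\end{align*}
and the right-hand side is $\Theta(1)$ for $u=O(1)$ and $\Theta(1/u^2)$ for $u\gg 1$, giving the uniform $O(1/(1+s^2))$ bound. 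Once this is filled in, the remaining steps (denominator $\geq 1/2$ for small $\eta$, $a_{t+1}-a_t=O(\eta)$ uniformly once $a_t\geq 1$ plus a comparable bound for the burn-in phase, telescoping, then $\asinh(a_t)\leq \gamma_H$ forces $t=\Omega(e^{\gamma_H})$) go through and match the paper's conclusion. Your noting that only the $O(1/s^2)$ tail decay of $f'$ is used --- so the same lower bound holds for any Assumption~\ref{ass1}-compatible loss with that decay --- is a nice observation that generalizes the paper's statement.
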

\begin{proof}
First, note that the gradient of the loss can be computed as
\begin{align*}
 \nabla l(\vx_i, y_i ;\vw_t) &= f'(y_i(\vx_i*\vw_t)) y_i\widehat{\vx_i} 
\end{align*}
where 
\begin{align*}
f'(s) = -\frac{1}{\exp(s)+1}
\end{align*}
 is the derivative of the hyperbolic logistic regression loss (cf.~\eqref{eq:hyp-logistic-reg-loss}). Note that due to the structure of $\mathcal{S}$ and $\vw_0$, the GD update will produce the following iteration sequence
\begin{align*}
    a_{t+1} &= a_{t} - f'(a_t) \qquad (a_0 =0)\\
    \vw_t &= (a_t, \sqrt{a_t^2 + 1}, 0, \dots,0) \; , 
\end{align*}
where the $\vw_t$ are determined through the GD update. Note that $\vw_t \ast \vw_t = a_t^2 - \left( \sqrt{a_t^2 + 1} \right)^2 = -1 <0$.
We now want to show by induction that
\begin{align*}
    a_t \leq \sinh(\ln(t+1)) \; .
\end{align*}
For the base case, note that $a_0 =  \sinh(\ln(1)) =0$.
Assume, that $a_t \leq \sinh(\ln(t+1))$. We want to show that
\begin{align*}
    a_{t+1} \leq \sinh(\ln(t+2)) \;.
\end{align*} 
Note, that
\begin{align}\label{eq:a_t_upperbound}
a_{t+1} &= a_t + \frac{1}{\exp(a_t) + 1} \leq \sinh(\ln(t+1)) + \frac{1}{2} \leq \frac{1}{2}\left(t+2 - \frac{1}{t+2}\right) \\
&= \frac{1}{2} \left( e^{\ln(t+2)} - e^{-\ln(t+2)} \right) = \sinh \left( \ln(t+2) \right)
\; ,
\end{align}
where we have inserted the induction assumption for the first inequality and utilized that, by definition, $\sinh(z)=\frac{1}{2}\left( e^z - e^{-z} \right)$ for the second. 
This finishes the induction proof. Assuming that $\vw_t$ achieves a margin of at least $\gamma_H$, we have
\begin{align*}
    \gamma_H \leq {\rm margin}_{\mathcal{S}}(\vw_t)
    = \asinh \left( \frac{y(\vx*\vw_t)}{\sqrt{-\vw_t*\vw_t}}              \right) \overset{(1)}{\leq} \asinh(a_{t+1}) \overset{(2)}{\leq} \asinh(\sinh(\ln(t+2))) \leq \ln(t+2) \; ,
\end{align*}    
where (1) follows from $\vw_t =  (a_t, \sqrt{a_t^2+1}, 0, \dots, 0)$ and the monotonicity of $\asinh(\cdot)$ and (2) from the upper bound \ref{eq:a_t_upperbound}. Now, by solving for $t$, we obtain that $t = \Omega(\exp(\gamma_H))$.
\end{proof}
\noindent Next, we quantify the convergence rate of adversarial training with GD updates (cf.~\eqref{eq:adv-gd-1}). We start by presenting some auxiliary results.
\begin{lem}[Smoothness bound]\label{prop:smooth-bound}
 Let 
 $\eta_t =: \eta < \frac{2\sinh^2(\gamma_H - \alpha)}{\beta \sigma_{\rm max}^2 R_x^2}$
 be the fixed step size and $\vw_0$ a valid initialization, i.e. $\vw_0 * \vw_0 < 0$. Then,  with the GD update (with fixed step size $\eta_t = : \eta$)
\begin{align*}
    \vw' &\gets \vw_t - \eta_t \underbrace{\nabla L(\vw_t; \Sc'_t)}_{\in \partial L_{\rm rob}(\vw_t; \Sc_t)} \\
    \vw_{t+1} &\gets \frac{\vw'}{\sqrt{- \vw' \ast \vw'}}
    \; ,
\end{align*}
we have 
\begin{enumerate}
    \item $L_{\rm rob}(\vw_{t+1}; \Sc) \leq L_{\rm rob}(\vw_{t}; \Sc) - \eta \left( \frac{\sinh(\gamma_H - \alpha)^2}{R_x^2} - \frac{\beta \sigma_{\rm max}^2 \eta}{2}        \right) \norm{\underbrace{\nabla L(\vw_t; \Sc'_t)}_{\in \partial L_{\rm rob}(\vw_t; \Sc_t)}}^2$;
    \item $\sum_{k=0}^\infty \norm{\nabla L(\vw_k; \Sc'_k)}^2 < \infty$; as a result, $\lim_{t \rightarrow \infty} \norm{\nabla L(\vw_t); \Sc'_t}^2=0$.
\end{enumerate}
\end{lem}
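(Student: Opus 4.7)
The plan is to prove Part (1) by the standard smooth descent argument instantiated with the smoothness bound from Lemma~\ref{prop:beta}, and then derive Part (2) by telescoping the resulting per-iteration decrease together with the fact that $L_{\rm rob} \geq 0$. The only nonstandard element is the prefactor $\sinh^2(\gamma_H)/(\cosh^2(\alpha)R_\alpha^2)$ in place of the usual ``$1$'', which I expect comes from relating $L_{\rm rob}(\cdot;\Sc)$ to $L(\cdot;\Sc_t')$ after the Minkowski-normalization step in the update.

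First I would apply the quadratic upper bound implied by $\nabla^2 L(\vw;\Sc_t')\preceq \beta\sigma_{\max}^2\,I$ (Lemma~\ref{prop:beta}) along the segment between $\vw_t$ and the pre-normalized GD iterate $\vw_t - \eta\,\nabla L(\vw_t;\Sc_t')$:
\begin{equation*}
L(\vw_{t+1};\Sc_t') \;\le\; L(\vw_t;\Sc_t') \;+\; \langle \nabla L(\vw_t;\Sc_t'),\,\vw_{t+1}-\vw_t\rangle \;+\; \tfrac{\beta\sigma_{\max}^2}{2}\,\|\vw_{t+1}-\vw_t\|^2.
\end{equation*}
Substituting $\vw_{t+1}-\vw_t = -\eta\,\nabla L(\vw_t;\Sc_t')$ collapses the right-hand side to $L(\vw_t;\Sc_t') - \eta(1-\tfrac{\beta\sigma_{\max}^2\eta}{2})\|\nabla L(\vw_t;\Sc_t')\|^2$, giving the familiar descent inequality on $L(\cdot;\Sc_t')$.

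Next I would convert this into an inequality on $L_{\rm rob}(\cdot;\Sc)$. By construction $\Sc_t'$ is the set of optimal adversarial perturbations at $\vw_t$, so Danskin's theorem gives $\nabla L(\vw_t;\Sc_t')\in\partial L_{\rm rob}(\vw_t;\Sc_t)$ together with the identity $L_{\rm rob}(\vw_t;\Sc_t)=L(\vw_t;\Sc_t')$. For $\vw_{t+1}$, the inequality $L_{\rm rob}(\vw_{t+1};\Sc)\le L(\vw_{t+1};\Sc_t')\cdot c^{-1}$ must hold for a suitable $c>0$; this is exactly where I expect the factor $\sinh^2(\gamma_H)/(\cosh^2(\alpha)R_\alpha^2)$ to be absorbed, reflecting how the normalization $\vw_{t+1}\gets\vw_{t+1}/\sqrt{-\vw_{t+1}*\vw_{t+1}}$ rescales Minkowski inner products by a factor governed by the adversarial margin $\sinh(\gamma_H)/\cosh(\alpha)$ and the bound $R_\alpha$ on $\|\vw\|\|\vx\|$ from Assumption~\ref{ass1}.2. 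The main obstacle is to carry out this conversion rigorously, i.e.\ to track the interaction between the Minkowski normalization and the quadratic descent inequality, and to verify that the worst case indeed yields precisely the prefactor claimed.

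Finally, for Part (2), I would telescope Part (1) over $k=0,1,\ldots,T-1$:
\begin{equation*}
\eta\left(\tfrac{\sinh^2(\gamma_H)}{\cosh^2(\alpha)R_\alpha^2}-\tfrac{\beta\sigma_{\max}^2\eta}{2}\right)\sum_{k=0}^{T-1}\|\nabla L(\vw_k;\Sc_k')\|^2 \;\le\; L_{\rm rob}(\vw_0;\Sc)-L_{\rm rob}(\vw_T;\Sc) \;\le\; L_{\rm rob}(\vw_0;\Sc),
\end{equation*}
using the non-negativity $L_{\rm rob}(\vw_T;\Sc)\ge 0$. The step-size hypothesis $\eta<2\sinh^2(\gamma_H)/(\beta\sigma_{\max}^2\cosh^2(\alpha)R_\alpha^2)$ is exactly what makes the coefficient on the left strictly positive, so sending $T\to\infty$ produces a finite upper bound on $\sum_{k\ge 0}\|\nabla L(\vw_k;\Sc_k')\|^2$. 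Finiteness of the series then forces $\|\nabla L(\vw_t;\Sc_t')\|^2\to 0$, completing the proof.
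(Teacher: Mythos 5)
Your telescoping argument for Part (2) is correct and matches the paper, but your plan for Part (1) has two genuine gaps.

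First, the ``conversion'' you rely on, $L_{\rm rob}(\vw_{t+1};\Sc)\le c^{-1}\,L(\vw_{t+1};\Sc_t')$ for some $c>0$, is not available. Since $\Sc_t'$ contains perturbations optimal at $\vw_t$, at the new iterate $\vw_{t+1}$ you only get the reverse bound $L(\vw_{t+1};\Sc_t')\le L_{\rm rob}(\vw_{t+1};\Sc)$ (the robust loss maximizes over all admissible perturbations, so it dominates the loss on any fixed set), and there is no uniform multiplicative constant bridging the two in the direction you need. The paper does not convert between $L$ and $L_{\rm rob}$; it writes the second-order Taylor bound directly for $L_{\rm rob}(\cdot;\Sc)$, treating $\nabla L(\vw_t;\Sc_t')$ as a subgradient in $\partial L_{\rm rob}(\vw_t;\Sc)$ and $\nabla^2 L(\vv;\Sc_t')$ (bounded by Lemma~\ref{prop:beta}) as an element of $\partial^2 L_{\rm rob}(\vv;\Sc)$, so the descent inequality is on $L_{\rm rob}$ from the start.

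Second, your speculation about where the prefactor $\sinh^2(\gamma_H)/(\cosh^2(\alpha)R_\alpha^2)$ comes from is off: it has nothing to do with the Minkowski normalization. In the paper it arises from a separate lower bound on the step length,
\begin{equation*}
\norm{\vw_{t+1}-\vw_t}\;\ge\;\frac{\eta\,\sinh(\gamma_H)}{\cosh(\alpha)\,R_\alpha}\,\norm{\nabla L(\vw_t;\Sc_t')}\;,
\end{equation*}
obtained by pairing the GD update against the max-margin separator $\bar{\vw}$, applying Cauchy--Schwarz, invoking Lemma~\ref{lem:hyp-adv-margin} (the max-margin classifier separates $\Sc\cup\Sc'$ with margin at least $\sinh(\gamma_H)/\cosh(\alpha)$), and using the bound $\norm{\nabla L}\le R_\alpha\cdot\tfrac1{|\Sc_t'|}\sum|f'|$. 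This lower bound is then substituted into the cross term of the Taylor expansion to replace $-\eta^{-1}\norm{\vw_{t+1}-\vw_t}^2$ by $-\tfrac{\eta\sinh^2(\gamma_H)}{\cosh^2(\alpha)R_\alpha^2}\norm{\nabla L}^2$, producing exactly the prefactor in the statement. Your plan never invokes $\bar{\vw}$, the separability assumption, or Lemma~\ref{lem:hyp-adv-margin}, so it cannot produce the claimed coefficient; you would need to incorporate that Cauchy--Schwarz step (and drop the $L\!\to\!L_{\rm rob}$ conversion) to align with the paper's argument.
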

\begin{proof}
In Algorithm~\ref{alg:adv-mf} with gradient update rule, we have
\begin{align*}
    \vw_{t+1} &= \vw_t - \eta \nabla L(\vw_t; \Sc'_t) \\
    &= \vw_t - \frac{\eta}{\vert \mathcal{S}_t' \vert} \sum_{(\tilde{\vx},y) \in \mathcal{S}_t'} l(\tilde{\vx},y;\vw_t) \\
    &= \vw_t - \frac{\eta}{\vert \mathcal{S}_t' \vert} \sum_{(\tilde{\vx},y) \in \mathcal{S}_t'} f'(y(\tilde{\vx}*\vw_t)) y \widehat{\tilde{\vx}} \; .
\end{align*}
Now, consider the inner product $\ip{\vw_{t+1}}{\bar{\vw}}$, where $\bar{\vw}$ is the optimal classifier. With out loss of generality, we assume $\norm{\bar{\vw}}=1$.
\begin{align*}
   \ip{\vw_{t+1}}{\bar{\vw}} &= \ip{\vw_{t}}{\bar{\vw}} - \frac{\eta}{\vert \mathcal{S}' \vert} \sum_{(\tilde{\vx},y) \in \mathcal{S}'} f'(y(\tilde{\vx}*\vw_t)) y \ip{\widehat{\tilde{\vx}}}{\bar{\vw}} \\
   &\overset{(i)}{=} \ip{\vw_{t}}{\bar{\vw}} - \frac{\eta}{\vert \mathcal{S}_t' \vert} \sum_{(\tilde{\vx},y) \in \mathcal{S}_t'} f'(y(\tilde{\vx}*\vw_t)) y (\tilde{\vx}*\bar{\vw}) \\
   &\overset{(ii)}{\geq} \ip{\vw_{t}}{\bar{\vw}} - \frac{\eta \sinh(\gamma_H - \alpha)}{\vert \mathcal{S}_t' \vert} \sum_{(\tilde{\vx},y) \in \mathcal{S}_t'} f'(y(\tilde{\vx}*\vw_t)) \; ,
\end{align*}
where $(i)$ and $(ii)$ follow from $\ip{\widehat{\tilde{\vx}}}{\bar{\vw}} = \tilde{\vx}*\bar{\vw}$ and $y(\tilde{\vx}*\bar{\vw}) \geq \sinh(\gamma_H - \alpha)$ (cf.~Lemma~\ref{lem:hyp-adv-margin}), respectively. 
With the linearity of the inner product, we get
\begin{align*}
    \ip{\vw_{t+1}-\vw_t}{\bar{\vw}} 
    &\geq  - \frac{\eta \sinh(\gamma_H - \alpha)}{\vert \mathcal{S}_t' \vert} \sum_{(\tilde{\vx},y) \in \mathcal{S}_t'} f'(y(\tilde{\vx}*\vw_t)) 
     \; .
\end{align*}
Since $f'$ is negative (cf.~Assumption~\ref{ass1}.3), we can replace $-f'(y(\tilde{\vx}*\vw_t))$ with $\vert f'(y(\tilde{\vx}*\vw_t))\vert$ to get
\begin{align}
\label{eq:pre-CS}
    \ip{\vw_{t+1}-\vw_{t}}{\bar{\vw}} &\geq \frac{\eta \sinh(\gamma_H - \alpha)}{\vert \mathcal{S}_t' \vert} \sum_{(\tilde{\vx},y) \in \mathcal{S}_t'} \vert f'(y(\tilde{\vx}*\vw_t)) \vert \nonumber \\
    &\overset{(iii)}{\geq} \frac{\eta \sinh(\gamma_H-\alpha)}{R_x} \norm{\nabla L(\vw_t; \Sc'_t)} \; ,
\end{align}
where $(iii)$ holds due to the following argument: Recall that $\norm{\nabla l(\tilde{\vx},y;\vw_t)} \leq \vert f'(y(\tilde{\vx}*\vw_t)) \vert \norm{\widehat{\tilde{\vx}}}$. Thus, \begin{align*}
    \norm{\nabla L(\vw_t; \Sc'_t)} &= \norm{\frac{1}{\vert \Sc_t' \vert} \sum_{(\tilde{\vx},y) \in \Sc_t'} l(\tilde{\vx},y;\vw_t)} \leq \frac{1}{\vert \Sc_t' \vert} \sum_{(\tilde{\vx},y) \in \Sc_t'} \norm{l(\tilde{\vx},y;\vw_t)} \\
    &\leq \frac{1}{\vert \Sc_t' \vert} \sum_{(\tilde{\vx},y) \in \Sc_t'} \vert f'(y(\tilde{\vx}*\vw_t)) \vert \norm{\hat{\tilde{\vx}}} \leq \frac{R_x}{\vert \Sc_t' \vert} \sum_{(\tilde{\vx},y) \in \Sc_t'} \vert f'(y(\tilde{\vx}*\vw_t)) \vert \; .
\end{align*}
This implies that
\begin{align*}
    \frac{1}{\vert \Sc_t' \vert} \sum_{(\tilde{\vx},y) \in \Sc_t'} \vert f'(y(\tilde{\vx}*\vw_t)) \vert \geq \frac{1}{R_x} \norm{\nabla L (\vw_t; \Sc'_t)} \; ,
\end{align*}
\noindent which implies (iii). Applying Cauchy-Schwarz to the left hand side of \eqref{eq:pre-CS} gives us that
\begin{align}
\label{eq:pos-CS}
    \norm{\vw_{t+1} - \vw_t} \; \norm{\bar{\vw}} \geq \ip{\vw_{t+1}-\vw_{t}}{\bar{\vw}} \geq 
    \frac{\eta \sinh(\gamma_H-\alpha)}{R_x} \norm{\nabla L(\vw_t; \Sc'_t)} \;.
\end{align}
Now, using the fact that $\norm{\bar{\vw}}=1$ in \eqref{eq:pos-CS}, we get
\begin{align}
\label{eq:pos-CS-1}
    \norm{\vw_{t+1} - \vw_t} \geq \frac{\eta \sinh(\gamma_H - \alpha)}{R_x} \norm{\nabla L(\vw_t; \Sc'_t)} \; .
\end{align}
Now, consider the following Taylor approximation:
\begin{align}
\label{eq:pre-psd}
    L_{\rm rob}(\vw_{t+1}; \Sc) &= L_{\rm rob}(\vw_t; \Sc) + \ip{\underbrace{\nabla L(\vw_t; \Sc'_t)}_{\in \partial L_{\rm rob}(\vw_t; \Sc)}}{\vw_{t+1} - \vw_t}\;  \nonumber \\
    &\qquad +{(\vw_{t+1}-\vw_t)^T \; \underbrace{\nabla^2 L(\vv; \Sc'_t)}_{\in \partial^2 L_{\rm rob}(\vv; \Sc)} \; (\vw_{t+1}- \vw_t)}/{2},
\end{align}
where $\vv \in {\rm conv}(\vw_{t+1},\vw_t)$. By utilizing Lemma~\ref{prop:beta} in \eqref{eq:pre-psd}, we get that
\begin{align}
\label{eq:post-psd}
    L_{\rm rob}(\vw_{t+1}; \Sc) &\leq L_{\rm rob}(\vw_t; \Sc) + \ip{\nabla L(\vw_t; \Sc'_t)}{\vw_{t+1} - \vw_t} + \frac{\beta \sigma_{\rm max}^2}{2} \norm{\vw_{t+1} - \vw_t}^2 \; .
\end{align}
\noindent Recall the update rule
\begin{align}
\label{eq:update-1}
    &\vw_{t+1} = \vw_t - \eta \nabla L(\vw_t; \Sc'_t) \\
    \Rightarrow~& \vw_{t+1} - \vw_t = - \eta \nabla L(\vw_t; \Sc'_t) \; .
\end{align}
Inserting this in \eqref{eq:post-psd}, we get
\begin{align}
\label{eq:post-psd-1}
     L_{\rm rob}(\vw_{t+1}; \Sc) &= L_{\rm rob}(\vw_t; \Sc) + \ip{-\eta^{-1}(\vw_{t+1} - \vw_t)}{\vw_{t+1} - \vw_t} + \frac{\beta \sigma_{\rm max}^2}{2} \norm{\vw_{t+1} - \vw_t}^2 \nonumber \\
    &= L_{\rm rob}(\vw_t; \Sc) - \eta^{-1} \norm{\vw_{t+1} - \vw_t}^2 + \frac{\beta \sigma_{\rm max}^2}{2} \norm{\vw_{t+1} - \vw_t}^2 \; .
\end{align}
By combining \eqref{eq:pos-CS-1} and \eqref{eq:post-psd-1}, we obtain that
\begin{align}
\label{eq:post-psd-2}
    L_{\rm rob}(\vw_{t+1}; \Sc) \leq L_{\rm rob}(\vw_t; \Sc) - \frac{\eta \sinh(\gamma_H - \alpha)^2}{R_x^2} \norm{\nabla L(\vw_t; \Sc'_t)}^2 + \frac{\beta \sigma_{\rm max}^2}{2} \norm{\vw_{t+1} - \vw_t}^2 \; .
\end{align}
Again, utilizing \eqref{eq:update-1}, it follows from \eqref{eq:post-psd-2} that
\begin{align}
\label{eq:post-psd-3}
     L_{\rm rob}(\vw_{t+1}; \Sc)  &\leq L_{\rm rob}(\vw_t; \Sc) - \frac{\eta \sinh(\gamma_H - \alpha)^2}{R_x^2} \norm{\nabla L(\vw_t; \Sc'_t)}^2 + \frac{\beta \sigma_{\rm max}^2 \eta^2}{2} \norm{\nabla L(\vw_t; \Sc'_t)}^2 \\
    &= L_{\rm rob}(\vw_t; \Sc) - \eta \left( \frac{ \sinh(\gamma_H - \alpha)^2}{R_x^2} - \frac{\beta \sigma_{\rm max}^2 \eta}{2} \right) \norm{\nabla L(\vw_t; \Sc'_t)}^2 \; .
\end{align}
This establishes the first claim of Lemma~\ref{prop:smooth-bound}. Now, we can rewrite \eqref{eq:post-psd-3} to obtain the following.
\begin{align*}
    \frac{L_{\rm rob}(\vw_t; \Sc) - L_{\rm rob}(\vw_{t+1}; \Sc)}{\eta \left( \frac{ \sinh(\gamma_H - \alpha)^2}{R_x^2} - \frac{\beta \sigma_{\rm max}^2 \eta}{2} \right)} \geq \norm{\nabla L(\vw_t; \Sc'_t)}^2 \; .
\end{align*}
Note that our assumption on the step size $\eta$ ensures that the denominator in \eqref{eq:post-psd-3} is $> 0$.
Next, summing and telescoping gives us that
\begin{align*}
    \sum_{k=0}^t \norm{\nabla L(\vw_k; \Sc'_k)}^2 
    \leq \sum_{k=0}^t \frac{L_{\rm rob}(\vw_k; \Sc) - L_{\rm rob}(\vw_{k+1}; \Sc)}{\eta \left( \frac{ \sinh(\gamma_H - \alpha)^2}{R_x^2} - \frac{\beta \sigma_{\rm max}^2 \eta}{2} \right)}
    = \frac{L_{\rm rob}(\vw_0; \Sc) - L_{\rm rob}(\vw_{t+1}; \Sc)}{\eta \left( \frac{\sinh( \gamma_H - \alpha)^2}{R_x^2} - \frac{\beta \sigma_{\rm max}^2 \eta}{2} \right)} \; ,
\end{align*}
where the right term is bounded, since $L_{\rm rob}(\vw_0; \Sc) < \infty$ and $0 \leq L_{\rm rob}(\vw_{t+1}; \Sc)$. This establishes the second claim of Lemma~\ref{prop:smooth-bound} as
\begin{align*}
    \sum_{k=0}^\infty \norm{\nabla L(\vw_k; \Sc'_k)}^2 < \infty  \quad \Rightarrow \quad
    \lim_{t \rightarrow \infty}\norm{\nabla L(\vw_t; \Sc'_t)}^2 =0 \; .
\end{align*}
\end{proof}
\begin{lem}\label{prop:tel-bound}
With the assumptions of Lemma~\ref{prop:smooth-bound}, Lemma~\ref{prop:smooth-bound}.1 implies for all $\vw \in\R^{d+1}$
\begin{align*}
    &2\sum_{k=0}^{t-1} \eta_k \big(L_{\rm rob}(\vw_k; \Sc) - L_{\rm rob}(\vw; \Sc)\big)
    + \sum_{k=0}^{t-1} \frac{\eta_k^2}{\bar{\eta}_k} \big(L_{\rm rob}(\vw_{k+1}; \Sc) - L_{\rm rob}(\vw_k; \Sc)\big) \nonumber \\
    &\qquad  \leq \norm{\vw_0 -\vw}^2 - \norm{\vw_t - \vw}^2\; ,
\end{align*}
where $\bar{\eta}_k = \eta_k \left( \frac{\sinh(\gamma_H - \alpha)^2}{ R_x^2}-\frac{\beta \sigma_{\rm max}^2 \eta_k}{4} \right)$.
\end{lem}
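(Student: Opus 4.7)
The plan is to follow the standard convex--optimization three--term decomposition, combining the subgradient inequality with the per--iteration descent estimate of Lemma~\ref{prop:smooth-bound}.1 to control the squared--gradient term, and then telescoping.

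First I would establish convexity of $L_{\rm rob}(\cdot;\Sc)$ in $\vw$. Each loss $l(\vx,y;\vw)=f(y(\vw\ast\vx))$ is the convex function $f$ (Assumption~\ref{ass1}.3) composed with an affine map of $\vw$, hence convex; $l_{\rm rob}(\vx,y;\vw)$ is then a pointwise maximum of convex functions over admissible perturbations $\vz$, which preserves convexity; and averaging over $\Sc$ preserves convexity. Since Danskin's theorem (used in Section~\ref{appen:C5}) gives $\nabla L(\vw_k;\Sc'_k)\in\partial L_{\rm rob}(\vw_k;\Sc)$, the subgradient inequality yields, for any $\vw\in\R^{d+1}$,
$$L_{\rm rob}(\vw_k;\Sc) - L_{\rm rob}(\vw;\Sc) \;\leq\; \bigl\langle\nabla L(\vw_k;\Sc'_k),\,\vw_k - \vw\bigr\rangle.$$

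Next I would expand the squared distance using the gradient update $\vw_{k+1}=\vw_k-\eta_k\nabla L(\vw_k;\Sc'_k)$ (prior to normalization), obtaining the identity
$$\norm{\vw_{k+1}-\vw}^2 = \norm{\vw_k-\vw}^2 - 2\eta_k\bigl\langle\nabla L(\vw_k;\Sc'_k),\,\vw_k-\vw\bigr\rangle + \eta_k^2\norm{\nabla L(\vw_k;\Sc'_k)}^2.$$
Inserting the subgradient inequality bounds the middle term by $-2\eta_k\bigl(L_{\rm rob}(\vw_k;\Sc)-L_{\rm rob}(\vw;\Sc)\bigr)$. For the squared--gradient term I would rearrange Lemma~\ref{prop:smooth-bound}.1 into
$$\eta_k^2\norm{\nabla L(\vw_k;\Sc'_k)}^2 \;\leq\; -\frac{\eta_k^2}{\bar{\eta}_k}\bigl(L_{\rm rob}(\vw_{k+1};\Sc)-L_{\rm rob}(\vw_k;\Sc)\bigr),$$
where the step--size restriction of Lemma~\ref{prop:smooth-bound} keeps $\bar{\eta}_k>0$, making the division legitimate and the inequality meaningful.

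Substituting both bounds into the three--term expansion and rearranging yields the per--step inequality
$$2\eta_k\bigl(L_{\rm rob}(\vw_k;\Sc)-L_{\rm rob}(\vw;\Sc)\bigr) + \frac{\eta_k^2}{\bar{\eta}_k}\bigl(L_{\rm rob}(\vw_{k+1};\Sc)-L_{\rm rob}(\vw_k;\Sc)\bigr) \;\leq\; \norm{\vw_k-\vw}^2 - \norm{\vw_{k+1}-\vw}^2.$$
Summing from $k=0$ to $k=t-1$ telescopes the right--hand side to $\norm{\vw_0-\vw}^2-\norm{\vw_t-\vw}^2$, which is exactly the claimed bound. The only real subtleties are verifying convexity of the pointwise--max robust loss so that the subgradient inequality is available, and matching the coefficient $\bar{\eta}_k$ appearing in the statement with the one produced by inverting Lemma~\ref{prop:smooth-bound}.1; the step--size assumption is precisely what makes this coefficient strictly positive. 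The normalization step in the gradient update does not affect the argument because the descent estimate we rely on is already stated with respect to the same iterates.
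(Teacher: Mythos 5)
Your proof follows exactly the paper's route: expand $\norm{\vw_{k+1}-\vw}^2$ via the GD update, control the cross term with the subgradient inequality, control the squared-gradient term by inverting Lemma~\ref{prop:smooth-bound}.1, and telescope. The structure and the final arithmetic match.

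The one place you gloss over a real subtlety is the convexity justification. You cite Assumption~\ref{ass1}.3 for convexity of $f$, but that assumption lists positivity, monotonicity, differentiability, and $\beta$-smoothness — not convexity. More importantly, the specific loss actually used in this section, the hyperbolic logistic loss $f(s)=\ln\big(1+\exp(-\asinh(s/2R_\alpha))\big)$ from~\eqref{eq:hyp-logistic-reg-loss}, is \emph{not} globally convex; it is convex only for $s<0$. The paper handles this by observing that the robust loss is only ever evaluated at adversarial examples $\tilde{\vx}$ with $\sgn(\vw_t*\vx)\neq\sgn(\vw_t*\tilde{\vx})$ (cf.\ Remark~\ref{rem:adv-cert}), so that $y(\vw_t*\tilde{\vx})<0$ and the argument of $f$ lies in the convex region; the subgradient inequality is then asserted only ``for any pair $(\vx,y)$ for which an adversarial example exists.'' Your pointwise-max/composition argument would be fine for a globally convex $f$, but for the actual loss you need to restrict to the misclassified adversarial region as the paper does. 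With that correction, your telescoping argument goes through unchanged.
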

\begin{proof}
First, note that the GD update 
\begin{align*}
    \vw_{t+1} = \vw_t - \eta_t \underbrace{\nabla L(\vw_t; \Sc'_t)}_{\in \partial L_{\rm rob}(\vw_t; \Sc)}
\end{align*}
implies that
\begin{align}
\label{eq:norm-step1}
    \norm{\vw_{t+1} - \vw}^2 &= \norm{\vw_t - \vw}^2 - 2 \eta_t \ip{\nabla L(\vw_t; \Sc'_t)}{\vw_t - \vw} + \eta_t^2 \norm{\nabla L(\vw_t; \Sc'_t)}^2 \\
    &= \norm{\vw_t - \vw}^2 + 2 \eta_t \ip{\nabla L(\vw_t; \Sc'_t)}{\vw - \vw_t} + \eta_t^2 \norm{\nabla L(\vw_t; \Sc'_t)}^2  \; .
\end{align}
Note that the hyperbolic logistic loss $f(z)$~\eqref{eq:hyp-logistic-reg-loss} is convex. As a consequence, $l_{\rm rob}(\vx, y; \vw)$ is convex, i.e.,
\begin{align*}
    l_{\rm rob}(\vx, y; \vw) \geq l_{\rm rob}(\vx, y; \vw_t) + \ip{\partial l_{\rm rob}(\vx, y; \vw_t)}{\vw-\vw_t} \; ,
\end{align*}
for any $\vw \in \R^{d+1}$ and any pair $(\vx,y)$.
Since the sum of convex function is convex, we further have
\begin{align}
\label{eq:subgrad-full}
    L_{\rm rob}(\vw; \Sc) \geq L_{\rm rob}(\vw_t; \Sc) + \ip{\underbrace{\nabla L(\vw_t; \Sc'_t)}_{\in \partial L_{\rm rob}(\vw_t; \Sc)}}{\vw-\vw_t} \; .
\end{align}
By combining \eqref{eq:norm-step1} and \eqref{eq:subgrad-full}, we obtain that
\begin{align*}
    \norm{\vw_{t+1} - \vw}^2 
    &\leq \norm{\vw_t - \vw}^2 + 2 \eta_t \big( L_{\rm rob}(\vw; \Sc) - L_{\rm rob}(\vw_t; \Sc) \big) + \eta_t^2 \norm{\nabla L(\vw_t; \Sc'_t)}^2 \; \nonumber \\
    & \overset{(i)}{\leq} \norm{\vw_t - \vw}^2 + 2 \eta_t \big( L_{\rm rob}(\vw; \Sc) - L_{\rm rob}(\vw_t; \Sc) \big) + \frac{\eta_t ^2\big( L_{\rm rob}(\vw_t; \Sc) - L_{\rm rob}(\vw_{t+1}; \Sc)   \big)}{\bar{\eta}_t},  
\end{align*}
where $(i)$ follows from the first claim in Lemma~\ref{prop:smooth-bound} and $\bar{\eta}_t := \eta_t \left( \frac{ \sinh(\gamma_H - \alpha)^2}{R_x^2} - \frac{\beta \sigma_{\rm max}^2 \eta_t}{4} \right)$. 

Next, summing and telescoping gives us that
\begin{align*}
    \sum_{k=0}^{t-1} \norm{\vw_{k+1}-\vw}^2 - \norm{\vw_k - \vw}^2 &\leq \sum_{k=0}^{t-1} \Big[ 2 \eta_k \big(L_{\rm rob}(\vw; \Sc) - L_{\rm rob}(\vw_k; \Sc)\big)\; + \nonumber \\
    &\qquad \qquad \frac{\eta_k^2}{\bar{\eta}_k} \big(L_{\rm rob}(\vw_k; \Sc) - L_{\rm rob}(\vw_{k+1}; \Sc)\big) \Big] 
\end{align*}
or
\begin{align*}
    \norm{\vw_t - \vw}^2 - \norm{\vw_0 -\vw}^2 &\leq 2 \sum_{k=0}^{t-1} \eta_k \big(L_{\rm rob}(\vw; \Sc) - L_{\rm rob}(\vw_k; \Sc)\big) \; +  \nonumber \\
    &\qquad \sum_{k=0}^{t-1} \frac{\eta_k^2}{\bar{\eta}_k} \big(L_{\rm rob}(\vw_k; \Sc) - L_{\rm rob}(\vw_{k+1}; \Sc)\big). \\
\end{align*}
Now, multiplying both sides by $-1$ completes the proof as follow.
\begin{align*}
    &2\sum_{k=0}^{t-1} \eta_k \big(L_{\rm rob}(\vw_k; \Sc) - L_{\rm rob}(\vw; \Sc)\big)\; + \nonumber \\
    &\qquad \qquad \sum_{k=0}^{t-1} \frac{\eta_k^2}{\bar{\eta}_k} \big(L_{\rm rob}(\vw_{k+1}; \Sc) - L_{\rm rob}(\vw_k; \Sc)\big) \leq \norm{\vw_0 -\vw}^2 - \norm{\vw_t - \vw}^2\; .
\end{align*}
\end{proof}
\noindent We are now in a position to present the desired convergence result. 
\begin{theorem}[Convergence GD update, Algorithm~\ref{alg:adv-mf}]\label{conv:alg-2}
For a fixed constant $c \in (0, 1)$, let the step size 
$\eta_t := \eta = c\cdot \frac{2\sinh^2(\gamma_H - \alpha)}{\beta \sigma_{\rm max}^2 R_x^2}$
and $\mathcal{A}$ be the GD update as defined in \eqref{eq:adv-gd-1}. Then, the iterates $\{\vw_t\}$ in Algorithm~\ref{alg:adv-mf} satisfy 
\begin{align*}
         L_{\rm rob}(\vw_t; \Sc) = O \left(  \frac{\sinh^2(\ln(t))}{t}\cdot\left(\sinh(\gamma_H - \alpha)\right)^{-4}\right)
     \; .
 \end{align*}
\end{theorem}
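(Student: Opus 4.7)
The plan is to combine the one-step descent inequality of Lemma~\ref{prop:smooth-bound} with the comparator-based telescoping bound of Lemma~\ref{prop:tel-bound}, evaluated at a carefully \emph{scaled} max-margin classifier, in the spirit of the Ji--Telgarsky-type analysis for logistic-tail losses. Let $\bar{\vw}$ be the max-margin separator with $\sqrt{-\bar{\vw}*\bar{\vw}}=1$ and let $c>0$ be a scale factor to be optimized at the end. The hypothesis $\eta = c'\cdot \tfrac{2\sinh^2(\gamma_H)}{\beta\sigma_{\max}^2\cosh^2(\alpha)R_\alpha^2}$ with $c'\in(0,1)$ is exactly what keeps the factor $\bar{\eta}=\eta\bigl(\tfrac{\sinh^2(\gamma_H)}{\cosh^2(\alpha)R_\alpha^2}-\tfrac{\beta\sigma_{\max}^2\eta}{4}\bigr)$ in Lemma~\ref{prop:tel-bound} strictly positive, and simultaneously makes $\{L_{\rm rob}(\vw_k;\Sc)\}$ monotonically non-increasing through Lemma~\ref{prop:smooth-bound}(1).

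I would first instantiate Lemma~\ref{prop:tel-bound} at the comparator $\vw=c\bar{\vw}$. The second telescoping sum collapses to $\tfrac{\eta^2}{\bar{\eta}}(L_{\rm rob}(\vw_t;\Sc)-L_{\rm rob}(\vw_0;\Sc))$, which is non-positive by monotonicity and can be discarded from the right-hand side. Rearranging yields
\begin{equation*}
\frac{1}{t}\sum_{k=0}^{t-1} L_{\rm rob}(\vw_k;\Sc) \;\le\; L_{\rm rob}(c\bar{\vw};\Sc) + \frac{\|\vw_0 - c\bar{\vw}\|^2}{2\eta\, t}.
\end{equation*}
Appealing to monotonicity once more, $L_{\rm rob}(\vw_t;\Sc)$ is bounded by this running average, so the only task left is to optimize the right-hand side in $c$.

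For the first term I would use Lemma~\ref{lem:hyp-adv-margin} to deduce $y(c\bar{\vw}*\tilde{\vx})\ge c\sinh(\gamma_H)/\cosh(\alpha)$ for every adversarial example produced by Algorithm~\ref{alg:adv-mf}, and then apply $\ln(1+u)\le u$ to the hyperbolic logistic loss~\eqref{eq:hyp-logistic-reg-loss} to obtain
\begin{equation*}
L_{\rm rob}(c\bar{\vw};\Sc)\;\le\; \exp\!\left(-\asinh\!\left(\tfrac{c\sinh(\gamma_H)}{2R_\alpha\cosh(\alpha)}\right)\right).
\end{equation*}
The key algebraic step is to set $\asinh\!\bigl(\tfrac{c\sinh(\gamma_H)}{2R_\alpha\cosh(\alpha)}\bigr) = \ln t$, that is, choose
$c = \tfrac{2R_\alpha\cosh(\alpha)\sinh(\ln t)}{\sinh(\gamma_H)}$,
which makes the tail bound exactly $1/t$. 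Substituting this $c$ and $\|\vw_0-c\bar{\vw}\|^2=O(c^2)$ back into the averaged inequality, and expanding $1/\eta$ using the given closed-form step size, produces $O\!\bigl(\tfrac{\sinh^2(\ln t)}{t}\cdot\bigl(\tfrac{\cosh(\alpha)}{\sinh(\gamma_H)}\bigr)^4\bigr)$ after collecting the $R_\alpha^2\cosh^2(\alpha)/\sinh^2(\gamma_H)$ factors that appear both from $c^2$ and from $1/\eta$, which is precisely the claimed rate.

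The main technical obstacle is the non-standard tail of the hyperbolic loss: the composition $\exp\circ(-\asinh)$ decays only \emph{polynomially} in the raw inner product $y(\vw*\vx)$, so the scale $c$ needed to push $L_{\rm rob}(c\bar{\vw};\Sc)$ down to $1/t$ grows as $\sinh(\ln t)$ instead of the familiar $\Theta(\log t)$ scaling from Euclidean logistic regression; this ``inflation'' is what forces the $\sinh^2(\ln t)$ factor into the bound and must be traced carefully through every appearance of $c$. A secondary subtlety is that the convexity argument behind Lemma~\ref{prop:tel-bound} relies on the subgradient inequality $l_{\rm rob}(\vx,y;c\bar{\vw})\ge l_{\rm rob}(\vx,y;\vw_t)+\langle\nabla L(\vw_t;\Sc'_t),c\bar{\vw}-\vw_t\rangle$, which in turn requires the adversarial example to lie in the region where the loss is convex (cf.~Remark~\ref{rem:adv-cert}); I would need to verify that this holds along the trajectory, or otherwise restrict the telescoping to iterations for which it does and absorb the remainder into a lower-order term.
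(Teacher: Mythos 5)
Your proposal is correct and follows essentially the same route as the paper's own proof: instantiate Lemma~\ref{prop:tel-bound} at the scaled max-margin comparator $c\bar{\vw}$ with $c = \tfrac{2R_\alpha\cosh(\alpha)\sinh(\ln t)}{\sinh(\gamma_H)}$ (the paper calls this $\vu_t$), bound $L_{\rm rob}(c\bar{\vw};\Sc)\le 1/t$ via Lemma~\ref{lem:hyp-adv-margin} and $\ln(1+u)\le u$, invoke monotonicity from Lemma~\ref{prop:smooth-bound}(1), and divide by $2\eta t$ before substituting the given step size. One small imprecision to tidy up: the telescoped second sum $\tfrac{\eta^2}{\bar\eta}\bigl(L_{\rm rob}(\vw_t;\Sc)-L_{\rm rob}(\vw_0;\Sc)\bigr)$ is non-positive, so moving it to the right adds a \emph{non-negative} quantity and cannot literally be discarded; it must instead be upper-bounded by the constant $\tfrac{\eta^2}{\bar\eta}L_{\rm rob}(\vw_0;\Sc)$, which after dividing by $2\eta t$ contributes only an $O(1/t)$ correction and leaves the stated rate unchanged.
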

\begin{proof}
Without loss of generality, assume that $\vw_0=(0,\ve_i)$ where again $\ve_i \in \R^{d}$ is a standard basis vector whose $i$-th coordinate is $1$ ($i>0$). Note that this is a valid initialization, since $\vw_0*\vw_0<0$; furthermore, we have $\norm{\vw_0}=1$. Let $\vw^\ast \in \R^{d+1}$ be a classifier that achieves the margin $\gamma_H$ on $\mathcal{S}$, i.e., $\forall (\vx, y) \in \Sc$,
\begin{align*}
    y(\vx*\vw^\ast) \geq \sinh(\gamma_H) \quad \iff  \quad \asinh \left(	\frac{y(\vw^{\ast}*\vx)}{\sqrt{- \vw^{\ast}*\vw^{\ast}}}\right) \geq \gamma_H.
\end{align*}
Without loss of generality, assume that $\norm{\vw^\ast}=1$. Let $\vu_t := \frac{\sinh(\ln (t))}{\sinh(\gamma_H - \alpha)} \vw^\ast$; then $\norm{\vu_t}=\frac{\sinh(\ln (t))}{\sinh(\gamma_H - \alpha)}$. We have
\begin{align}
\label{eq:loss_u}
    L_{\rm rob} (\vu_t; \Sc'_t) &= \frac{1}{\vert \mathcal{S}_t' \vert} \sum_{(\vx,y) \in \mathcal{S}_t'} l_{\rm rob} (\vx,y;\vu_t)
    = \frac{1}{\vert \mathcal{S}_t' \vert} \sum_{(\vx,y) \in \mathcal{S}_t'} f(y(\tilde{\vx}*\vu_t)) \nonumber \\
    &\overset{(i)}{\leq} \frac{1}{\vert \mathcal{S}_t' \vert} \sum_{(\tilde{\vx},y) \in \mathcal{S}_t'} f\big(\sinh(\ln (t))\big)
    = f\big(\sinh(\ln (t))\big) \nonumber \\
    &\overset{(ii)}{\leq} \ln \left(1 + \exp\left(-\ln(t)\right)\right) \overset{(iii)}{\leq} \frac{1}{t} \; ,
\end{align}
where $(i)$ follows from 
\begin{align*}
    y(\tilde{\vx}*\vu_t) = \frac{\sinh(\ln (t))}{\sinh(\gamma_H - \alpha)} \underbrace{y(\tilde{\vx}*\vw^\ast)}_{\geq \sinh(\gamma_H-\alpha)} \geq \sinh(\ln(t)) \; ,
\end{align*}
(ii) from $\sinh(x) \geq x$ for all $x \geq 0$ and $(iii)$ follows from the fact that $\ln(1+x) \leq x$.

\noindent Now, consider
\begin{align*}
   &2 \eta (t-1) \big(L_{\rm rob}(\vw_t; \Sc) - L_{\rm rob}(\vu_t; \Sc)\big) \overset{(iv)}{=} 2 \sum_{k=0}^{t-1} \eta_k \big(L_{\rm rob}(\vw_{t}; \Sc) - L_{\rm rob}(\vu_t; \Sc)\big)\\
    &\qquad=2\sum_{k=0}^{t-1} \eta_k  \big(L_{\rm rob}(\vw_{t}; \Sc) - L_{\rm rob}(\vu_t; \Sc) + L_{\rm rob}(\vw_{k}; \Sc) - L_{\rm rob}(\vw_{k}; \Sc)\big) \\
    &\qquad= 2 \sum_{k=0}^{t-1} \eta_k \big(L_{\rm rob}(\vw_k; \Sc) - L_{\rm rob}(\vu_t; \Sc)) + 2 \sum_{k=0}^{t-1} \eta_k \big(L_{\rm rob}(\vw_{t}; \Sc) - L_{\rm rob}(\vw_{k}; \Sc)\big) \\
  & \qquad\overset{(v)}{\leq} 2 \sum_{k=0}^{t-1} \eta_k \big(L_{\rm rob}(\vw_k; \Sc) - L_{\rm rob}(\vu_t; \Sc)) +  \sum_{k=0}^{t-1} \eta_k \big(L_{\rm rob}(\vw_{k+1}; \Sc) - L_{\rm rob}(\vw_{k}; \Sc)\big) \\
  &\qquad\text{$\leq 2 \sum_{k=0}^{t-1} \eta_k \big(L_{\rm rob}(\vw_k; \Sc) - L_{\rm rob}(\vu_t; \Sc)) +  \sum_{k=0}^{t-1} \frac{\eta_k^2}{\bar{\eta}_k} \big(L_{\rm rob}(\vw_{k+1}; \Sc) - L_{\rm rob}(\vw_{k}; \Sc)\big)$} \\
    &\qquad\overset{(vi)}{\leq} \norm{\vw_0 - \vu_t}^2 - \norm{\vw_t - \vu_t}^2 \; ,
\end{align*}
where $(iv)$ holds as we have a constant step-size, i.e., $\eta_k = \eta$ and $(v)$ follows from the fact that $$L_{\rm rob}(\vw_{t};\Sc) \leq L_{\rm rob}(\vw_{k+1};\Sc)~\quad \text{for}~0 \leq k \leq t-1.$$ 
We can rewrite this as
\begin{align*}
    L_{\rm rob} (\vw_t; \Sc) &\leq L_{\rm rob}(\vu_t; \Sc) + \frac{\norm{\vw_0 - \vu_t}^2 - \norm{\vw_t - \vu_t}^2}{2 \sum_{k=0}^{t-1} \eta_k} \\
    &\overset{(i)}{\leq} \frac{1}{t} + \frac{\norm{\vw_0 - \vu_t}^2}{2(t-1)\eta} \\
    & \overset{(ii)}{\leq} \frac{1}{t} + \frac{2\norm{\vw_0}^2 + 2\norm{\vu_t}^2}{2(t-1)\eta} = \frac{1}{t} + \frac{\norm{\vw_0}^2 + \norm{\vu_t}^2}{(t-1)\eta}
\end{align*}
where $(i)$ follows from \eqref{eq:loss_u} and $(ii)$ follows from $(a+b)^2 \leq 2a^2 + 2b^2$. Now, using the fact that $\norm{\vw_0} = 1$ and $\norm{\vu_t} = \frac{sinh(\ln (t))}{\sinh(\gamma_H - \alpha)}$, we obtain that
\begin{align}
\label{eq:thm-gradient-pre-final}
        L_{\rm rob}(\vw_t; \Sc) \leq \frac{1}{t} +\frac{1 +  2\left(1/{\sinh(\gamma_H - \alpha)}\right)^{2}\cdot \sinh^2(\ln(t))}{(t-1)\eta} 
    \; .
\end{align}
By substituting $\eta = c\cdot \frac{2\sinh^2(\gamma_H - \alpha)}{\beta \sigma_{\rm max}^2 R_x^2}$, we get
\begin{align*}
         L_{\rm rob}(\vw_t; \Sc) = O \left(  \frac{\sinh^2(\ln(t))}{t} \cdot\left(\sinh(\gamma_H - \alpha)\right)^{-4} \right)
     \; .
 \end{align*}
\end{proof}
%
\begin{theorem}[Iteration complexity]
\label{thm:main-iteration-complexity}
Consider Algorithm~\ref{alg:adv-mf}  with $\eta_t:=\eta = c\cdot \frac{2\sinh^2(\gamma_H - \alpha)}{\beta \sigma_{\rm max}^2 R_x^2}$ and $\mathcal{A}$ being the GD update. Then Algorithm~\ref{alg:adv-mf} converges as $\Omega \left( {\rm poly}\left(t, \sinh(\gamma_H - \alpha) \right)   \right)$.
\end{theorem}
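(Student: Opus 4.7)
The plan is to combine the loss-rate bound established in Theorem~\ref{conv:alg-2} with a loss-to-margin inversion, and then solve for the smallest $t$ that achieves the target adversarial margin $\gamma_H/\cosh(\alpha)$.

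I would first invoke Theorem~\ref{conv:alg-2} with the prescribed step size to obtain
\[
L_{\rm rob}(\vw_t;\Sc) \;=\; O\!\left(\frac{\sinh^{2}(\ln t)}{t}\cdot\left(\frac{\cosh(\alpha)}{\sinh(\gamma_H)}\right)^{4}\right).
\]
Since $L_{\rm rob}$ averages non-negative per-example losses, each individual robust loss $f(y(\vw_t\ast\tilde{\vx}))$ is at most $|\Sc|\cdot L_{\rm rob}(\vw_t;\Sc)$.

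Next I would convert this bound into a per-example margin. The hyperbolic logistic function $f(s) = \ln(1+\exp(-\asinh(s/(2R_\alpha))))$ is strictly decreasing, so $f(y(\vw_t\ast\tilde{\vx}))\leq \delta$ inverts to $y(\vw_t\ast\tilde{\vx}) \geq 2R_\alpha\sinh(\ln(1/(e^\delta-1)))$. Because the normalization step in Alg.~\ref{alg:adv-mf} forces $\sqrt{-\vw_t\ast\vw_t}=1$ at every iteration, the hyperbolic margin at $\tilde{\vx}$ equals $\asinh(y(\vw_t\ast\tilde{\vx}))$. Requiring this to be at least $\gamma_H/\cosh(\alpha)$ fixes a threshold $\delta^{*}$ on the per-example loss, and hence (through the factor $|\Sc|$) on $L_{\rm rob}$.

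Plugging the bound from the first step into $|\Sc|\cdot L_{\rm rob}(\vw_t;\Sc)\leq \delta^{*}$ and inverting for $t$—using $\sinh(\ln t) = \tfrac{1}{2}(t-1/t)$ and a series expansion of $\sinh$ around the argument $\gamma_H/\cosh(\alpha)$—produces the claimed $\Omega({\rm poly}(\cosh(\alpha)/\sinh(\gamma_H)))$ iteration count, matching the entry in Table~\ref{tab:perceptron}. The main obstacle is chaining three layers of hyperbolic rescaling—the $\sinh^{2}(\ln t)$ in the loss bound, the $\asinh/\sinh$ relating the Minkowski inner product to hyperbolic distance, and the $\asinh$ embedded in the hyperbolic logistic loss—without producing an exponential blow-up. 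Keeping all three conversions in a single algebraic chain, so that the nested $\sinh$'s and $\asinh$'s cancel as far as possible, is essential to securing a truly polynomial bound.
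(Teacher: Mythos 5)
Your high-level strategy matches the paper's: invoke the rate bound of Theorem~\ref{conv:alg-2}, translate a small robust-loss value into a margin guarantee via inversion of the hyperbolic logistic loss, and then solve for the smallest admissible $t$. The loss inversion you perform is correct, and using $\sqrt{-\vw_t*\vw_t}=1$ after normalization to relate the Minkowski product to the hyperbolic margin is the same observation the paper uses (the paper phrases it slightly differently via $\asinh(s/\sqrt{-\vw_t*\vw_t}) \geq \asinh(s/(2R_\alpha))$).

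The step where you genuinely diverge is the average-to-individual conversion. You bound each per-example robust loss by $|\Sc|\cdot L_{\rm rob}(\vw_t;\Sc)$, which is sound but introduces an explicit $|\Sc|$ multiplier into the eventual threshold on $L_{\rm rob}$ and hence into the iteration count. The paper avoids this by observing that Assumption~\ref{ass1}.2 forces $|y(\vw*\tilde{\vx})|\leq R_\alpha$, so every per-example loss $\ln(1+\exp(-\asinh(y(\vw_t*\tilde{\vx})/(2R_\alpha))))$ lies in the fixed interval $[\ln(1+1/e),\ln(1+e)]$. Hence $L_{\rm rob}(\vw_t;\Sc)\geq \varrho\cdot\max_{(\vx,y)\in\Sc} l_{\rm rob}(\vx,y;\vw_t)$ with the \emph{absolute constant} $\varrho=\ln(1+1/e)/\ln(1+e)$, independent of $|\Sc|$. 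Your bound is not wrong, and since the theorem only asserts $\Omega({\rm poly}(\cosh(\alpha)/\sinh(\gamma_H)))$ your conclusion technically stands, but you carry a spurious polynomial dependence on the training-set size that the paper's constant-factor argument eliminates.

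Your closing step (``series expansion of $\sinh$ around $\gamma_H/\cosh(\alpha)$'') is also vaguer than what is needed. The paper instead introduces a constant $C_q := \inf\{t\geq 2: 2+\ln(t)^2 \leq (t-1)t^{-1/q}\}$ so that for $t\geq C_q$ the rate bound can be massaged into $L_{\rm rob}(\vw_t;\Sc)\leq t^{-1/q}/(\eta(\sinh(\gamma_H)/\cosh(\alpha))^2)$, from which $t$ is extracted algebraically. If you want your argument to go through cleanly you should make explicit how you dominate $\sinh^2(\ln t)$ by a sub-linear-in-$t$ quantity on the relevant range of $t$; simply invoking a Taylor expansion of $\sinh$ at $\gamma_H/\cosh(\alpha)$ does not directly address the $\sinh^2(\ln t)/t$ term coming from Theorem~\ref{conv:alg-2}, which is where the ``three layers of hyperbolic rescaling'' actually bite.
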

\begin{proof}
Let $\varrho = \frac{\ln(1 + 1/e)}{\ln(1 + e)}$. {We first argue that 
\begin{align}
\label{eq:l_rob_ub}
    L_{\rm rob}(\vw_t; \Sc) \leq \varrho \cdot \ln \left( 1 + \exp \left( -(\gamma_H - \alpha)   \right) \right)
\end{align}
 implies that $\vw_t$ achieves margin $\gamma_H-\alpha$ on $\mathcal{S}$.} To see this, note that
\begin{align}
\label{eq:l_rob_lb}
    L_{\rm rob}(\vw_t; \Sc) &= \frac{1}{\vert \Sc \vert} \sum_{(\vx,y) \in \Sc} l_{\rm rob} (\vx,y; \vw_t) \; \nonumber \\
    & = \underbrace{\max_{(\vx,y) \in \Sc}l_{\rm rob} (\vx,y; \vw_t)}_{:=l^{\rm max}_{\rm rob}(\Sc)}\cdot \frac{1}{\vert \Sc \vert} \sum_{(\vx,y) \in \Sc} \frac{l_{\rm rob} (\vx,y; \vw_t)}{l^{\rm max}_{\rm rob}(\Sc)} \; \nonumber \\
    & \geq  l^{\rm max}_{\rm rob}\cdot\frac{1}{\vert \Sc \vert}\sum_{(\vx,y) \in \Sc} {\varrho} =  {\varrho}\cdot l^{\rm max}_{\rm rob}.
\end{align}

The last inequality in \eqref{eq:l_rob_lb} holds as, for each $(\vx, y) \in \Sc$, we have
\begin{align*}
\ln (1 + {1}/{e}) \overset{(i)}{\leq} 
\underbrace{\ln \left(1 + \exp\left(-\left(y(\tilde{\vx}\ast\vw_t) \right)\right)\right)}_{=l_{\rm rob}(\vx, y; \vw_t)} \leq 
  \max_{(\vx,y) \in \Sc}l_{\rm rob} (\vx,y; \vw_t) \overset{(ii)}{\leq} \ln(1 + e),
\end{align*}
where $(i)$ and $(ii)$ follows from  Assumption~\ref{ass1}.2. Thus, for each $(\vx, y) \in \Sc$, we have 
\begin{align}
\frac{l_{\rm rob}(\vx, y; \vw_t)}{l^{\rm max}_{\rm rob}} \geq \frac{\ln(1 + 1/e)}{\ln(1 + e)} = {\varrho}.
\end{align}
Now, by combining \eqref{eq:l_rob_ub} and \eqref{eq:l_rob_lb}, we obtain that 
\begin{align*}
l_{\rm rob}(\vx,y;\vw_t)
&\leq \ln \left(1+\exp \left(- (\gamma_H - \alpha)\right)\right) 
\end{align*} 
for any $(\vx,y) \in \mathcal{S}$. Equivalently, for each $(\vx, y) \in \Sc$,
\begin{align}\label{eq:bound-lrob}
    l_{\rm rob}(\vx,y;\vw_t)&=\ln \left(1 + \exp\left(-\left(y(\tilde{\vx}\ast\vw_t) \right)\right)\right)\nonumber \\
    &\leq \ln \left(1+\exp \left(- (\gamma_H - \alpha)\right)\right).
\end{align}
Thus, for each $(\vx, y) \in \Sc$, we have
\begin{align*}
    \asinh\left(\frac{y(\tilde{\vx}\ast\vw_t)}{\sqrt{-\vw_t*\vw_t}} \right) 
    \overset{*}{\geq}  \gamma_H-\alpha \; .
\end{align*}
where (*) follows from Eq.~\ref{eq:bound-lrob}. {Thus, $\vw_t$ achieves margin $\gamma_H- \alpha$ on $\mathcal{S}$.}

Next, introduce the following constant:
\begin{align*}
    C_q := \inf \lbrace t \geq 2: \; 2 + \ln(t)^2 \leq (t-1) t^{-1/q} \rbrace \; .
\end{align*}
With this, for $t \geq C_q$, we can rewrite the bound in \eqref{eq:thm-gradient-pre-final} as follows:
\begin{align*}
    L_{\rm rob}(\vw_t; \Sc) &\leq \underbrace{\frac{1}{t}}_{\leq \frac{1}{(t-1)\eta}} + \frac{1+\sinh(\ln(t))^2 \sinh \left(\gamma_H-\alpha\right)^{-2}}{(t-1)\eta } \leq \frac{2 + \sinh(\ln(t))^2 \sinh \left(\gamma_H- \alpha \right)^{-2}}{(t-1)\eta}\\
    &\leq \frac{2 + \ln(t)^2 \sinh \left(\gamma_H - \alpha \right)^{-2}}{(t-1)\eta} \leq \frac{(t-1)t^{-1/q}}{\eta (t-1)} \sinh \left(\gamma_H- \alpha \right)^{-2} \leq \frac{t^{-1/q}}{\eta \sinh \left( \gamma_H-\alpha \right)^2} \; .
\end{align*}
Solving for $t$ and plugging in the above bound on $L_{\rm rob}$ for which $\vw_t$ achieves the desire margin, as well as $\eta = c\cdot \frac{2\sinh^2(\gamma_H - \alpha)}{\beta \sigma_{\rm max}^2 R_x^2 }$, we get 
\begin{align*}
    t= \max \lbrace C_q, \Omega \left( \left( \left( \sinh(\gamma_H - \alpha)^4/  \right) \right)^{-q}  \right) \rbrace \; ,
\end{align*}
from which the claim follows directly.
\end{proof}

\subsection{Algorithm~\ref{alg:adv-mf} with an ERM update}
\label{sec:C3}

Consider the unit sphere $\mathbb{S}^{d-1} \subseteq \R^d$. A spherical code with minimum separation $\theta$ is a subset of $\mathbb{S}^{d-1}$, such  that any two distinct elements $\vu, \vu'$ in the subset are separated by at least an angle $\theta$, i.e. $\ip{\vu}{\vu'} \leq \cos \theta$. We denote the size of the largest such code as $A(d, \theta)$. A similar construction can be made in hyperbolic space, which allows the transfer of bounds on $A(d,\theta)$ to hyperbolic space~\citep{cohn}.

The following lemma shows that a spherical code with a suitable minimum separation $\theta$ enables a simple pathological training set such that Algorithm~\ref{alg:adv-mf} along with an ERM update rule cannot produce a classifier with a desired margin in a small number of iteration. In particular, the lemma shows that the number of iterations required to find the desire margin is lower-bounded by the size of the underlying spherical code.

\begin{lem}\label{lem:sph-codes}
Consider $\mathcal{S}=\lbrace	(\vx_1, y_1) = \big((1,0,\dots,0),  1\big), (\vx_2, y_2) = \big((-1,0,\dots,0), -1\big)\rbrace$, where $\vx_1, \vx_2 \in \L^d$ and $y_1,y_2$ the corresponding labels. For any $\epsilon<\alpha$, there is an admissible sequence of classifiers $\{\vw_t\}_{1 \leq t \leq T}$, with
\begin{align*}
T= A\left(d, \arccos\Big(\rho\cdot \frac{\sinh(\epsilon) \cosh(\alpha)}{\sqrt{\cosh^2 (\alpha) -1 } \sqrt{1 + \sinh^2(\epsilon)}}\Big)\right)
\end{align*}
\end{lem}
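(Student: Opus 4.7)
The plan is to adapt the Euclidean spherical-code construction of~\citet[Thm. 4]{charles} to the Lorentz model, exhibiting, for the symmetric two-point training set $\Sc$ in the statement, a sequence of admissible ERM solutions $\{\vw_t\}_{t\leq T}$ on $\Sc \cup \Sc_t'$ whose length is forced to match the size of a spherical code in $\R^d$. Throughout, the outer coordinate in $\R^{d+1}$ plays the role of the time-like axis, and the remaining $d$ coordinates carry the spherical-code data.

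First, I would parameterize the candidate classifiers and their adversarial responses. A natural admissible family is
\begin{equation*}
\vw_t = \bigl(\sinh(\epsilon),\, \cosh(\epsilon)\cdot \vu_t\bigr), \qquad \vu_t \in \mathbb{S}^{d-1},
\end{equation*}
for which $\vw_t * \vw_t = \sinh^2(\epsilon) - \cosh^2(\epsilon) = -1$, so $\vw_t$ defines a valid decision hyperplane. A direct computation yields $y_i(\vw_t * \vx_i) = \sinh(\epsilon)$ for $(\vx_i,y_i)\in \Sc$, so each such $\vw_t$ has margin exactly $\epsilon$ on $\Sc$. For such a $\vw_t$, Theorem~\ref{thm:cert-appen} (applied to each of $\pm \ve_0$ with adversarial budget $\alpha$) produces worst-case adversarial perturbations $\tilde{\vx}_{i,t}$ of the form $\bigl(\cosh(\alpha),\, \pm \sinh(\alpha)\cdot \check{\vu}_t + (\text{component along } \vu_t)\bigr)$; because $\vx_i$ lies at the pole, the ``along $\check{\vx}$'' direction is trivial and the entire spatial component of $\tilde{\vx}_{i,t}$ lives in the $d$-dimensional sphere of radius $\sinh(\alpha)$ lying in a plane determined by $\vu_t$.

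Next, I would choose a spherical code $\{\vu_1,\ldots,\vu_T\}\subset \mathbb{S}^{d-1}$ with minimum angular separation $\theta^\star := \arccos\bigl(\rho\cdot \tfrac{\sinh(\epsilon)\cosh(\alpha)}{\sinh(\alpha)\cosh(\epsilon)}\bigr)$ (using $\sqrt{\cosh^2\alpha-1}=\sinh\alpha$ and $\sqrt{1+\sinh^2\epsilon}=\cosh\epsilon$), of maximum cardinality $T = A(d,\theta^\star)$, and define the trajectory $\vw_t$ from $\vu_t$ as above. The adversary, at step $t$, answers by picking $\tilde{\vx}_{i,t}$ so that its spatial projection points opposite to $\vu_t$ (i.e., the worst direction for $\vw_t$). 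The key verification is then twofold: (i) each $\vw_{t+1}$ is a valid ERM solution on $\Sc\cup\Sc'_{1:t}$, and (ii) no $\vw_t$ achieves margin larger than $\epsilon$ on $\Sc$. Property (ii) is immediate from the construction. For (i), I would compute $y\bigl(\vw_{t+1} * \tilde{\vx}_{i,s}\bigr)$ for each past adversarial example $s<t+1$: this evaluates to $\sinh(\epsilon)\cosh(\alpha) - \cosh(\epsilon)\sinh(\alpha)\langle \vu_{t+1}, \vu_s\rangle$ (up to sign), which is positive as long as $\langle \vu_{t+1}, \vu_s\rangle < \tfrac{\tanh(\epsilon)}{\tanh(\alpha)}$, i.e., whenever the angular gap is at least $\theta^\star$ (the parameter $\rho\in(0,1)$ absorbs the strict inequality and any slack needed to turn ``linearly separates'' into ``is an ERM solution'' for the chosen loss). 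Hence the spherical code property of $\{\vu_t\}$ is exactly what makes the entire sequence admissible.

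Finally, I would conclude that the maximum such $T$ is $A(d,\theta^\star)$ by definition. The main obstacle is pinning down the geometric computation in step two: one must verify that the worst-case adversarial point produced by {\rm CERT} lies, up to rotation around the axis through $\pm \ve_0$, in the plane spanned by $\ve_0$ and $\vu_t$, and then check that the inner product $\vw_{t+1}*\tilde{\vx}_{i,s}$ reduces precisely to a linear function of $\langle \vu_{t+1},\vu_s\rangle$ with the coefficients $\sinh(\epsilon)\cosh(\alpha)$ and $\cosh(\epsilon)\sinh(\alpha)$. Given Theorem~\ref{thm:cert-appen} this is a bookkeeping exercise in the ambient Minkowski product, but it is where the specific form of the $\arccos$ argument (and the role of $\rho$) is pinned down. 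Once it is in hand, invoking the Shannon/Chabauty/Wyner lower bound $A(d,\theta^\star)=\Omega(\exp(d))$ from Theorem~\ref{thm:sh-bound} (with $\theta^\star$ as above) immediately yields Theorem~\ref{prop:erm}.
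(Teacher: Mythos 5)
Your proposal is correct and takes essentially the same approach as the paper: you use the same family of classifiers $\vw_t = (\sinh\epsilon,\,\cosh\epsilon\cdot\vu_t)$ (the paper writes this as $(\epsilon', \sqrt{1+\epsilon'^2}\,\vv_t)$), the same adversarial examples $(\cosh\alpha,\,\sinh\alpha\cdot\vu_t)$ (the paper's $(\sqrt{1+\delta^2},\,\delta\vv_t)$ with $\delta=\sinh\alpha$), and the same spherical-code separation condition $\langle\vu_s,\vu_t\rangle<\tanh\epsilon/\tanh\alpha$ to ensure each $\vw_t$ linearly separates the accumulated set. One small slip in your prose: you say the adversarial spatial projection "points opposite to $\vu_t$," but your own formula $\sinh\epsilon\cosh\alpha-\cosh\epsilon\sinh\alpha\langle\vu_{t+1},\vu_s\rangle$ only comes out right if the spatial component is aligned \emph{with} $\vu_s$ (the sign flip in the Minkowski product handles the "worst direction" automatically); also, the paper simply exhibits these adversarial examples and verifies they lie within budget and flip the prediction, without needing to route through CERT as you do.
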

\begin{proof}
First, note that $\vx_1 * \vx_1 = \vx_2 * \vx_2 = 1$, i.e., $\vx_1, \vx_2 \in \L^d$ as desired. Let $\epsilon' = \sinh(\epsilon)$ and $\ve_i \in \R^{d+1}$ denotes the standard basis vector that has its $i$-th coordinate equal to $1$. Now, consider classifiers of the form
\begin{align}
\label{eq:w_seq}
\vw_t = \begin{pmatrix}
\epsilon' \\
\sqrt{1 + \epsilon'^2} ~\vv_t
\end{pmatrix}
\quad
{\rm where}
\quad
\vv_t \in \mathcal{C}\left(d, \arccos\Big(\rho\cdot \frac{\epsilon' \sqrt{1 + \delta^2}}{\delta \sqrt{1 + \epsilon'^2}}\Big)\right) \quad \forall \; 1 \leq t \leq T \;,
\end{align}
where $\rho < 1$; and $\mathcal{C}\left(d, \arccos\Big(\rho\cdot\frac{\epsilon' \sqrt{1 + \delta^2}}{\delta \sqrt{1 + \epsilon'^2}}\Big)\right)$ be the spherical code with the minimum separation $\theta =\arccos\Big(\rho\cdot\frac{\epsilon' \sqrt{1 + \delta^2}}{\delta \sqrt{1 + \epsilon'^2}}\Big)$ and size $A\big(d, \theta\big)$. Since $\vw_t * \vw_t = (\epsilon')^2 - 1 - (\epsilon')^2 = -1$, we have $\vw_t * \vw_t < 0$ for all $t$. This guarantees that the intersections of the decision boundaries defined by $\{\vw_t\}_t$ and $\L^d$ are not empty. Moreover, $\{\vw_t\}$ is an admissible sequence of classifiers with margin $\leq \epsilon$. To see this, note that, for $t=1, \dots, T$,
\begin{align*}
\vw_t * \vx_1 &= \epsilon' > 0 \\
\vw_t * \vx_2 &= - \epsilon' < 0 \; ,
\end{align*}
i.e., $\{\vw_t\}$ correctly classifies $\Sc$. Furthermore, with $-\vw_t * \vw_t = 1$, we have
\begin{align*}
\asinh \left(	\frac{y_1 (\vw_t*\vx_1)}{\sqrt{-\vw_t*\vw_t}} \right)=\asinh \left(	\frac{y_2 (\vw_t*\vx_2)}{\sqrt{-\vw_t*\vw_t}} \right)=\epsilon \; ,
\end{align*}
which gives ${\rm margin}_{\Sc} (\vw_t)=\epsilon$.

Now we perturb $\vx_1, \vx_2$ on $\L^d$ such that the magnitude of the perturbation is at most $\alpha$, i.e., we want to find $\tilde{\vx}_1, \tilde{\vx}_2 \in \L^d$ such that both $d_{\L} (\vx_1, \tilde{\vx}_1)$ and $d_{\L} (\vx_2, \tilde{\vx}_2)$ are at most $\alpha$. For $1 \leq t \leq T$, consider adversarial examples of the form
\begin{align*}
\tilde{\vx}_{1t}= \begin{pmatrix}
\sqrt{1+\delta^2} \\
\delta \vv_t
\end{pmatrix} 
\quad \text{and} \quad
\tilde{\vx}_{2t}= -\begin{pmatrix}
\sqrt{1+\delta^2} \\
\delta \vv_t
\end{pmatrix}.
\end{align*}
Note that $\tilde{\vx}_{1t},\tilde{\vx}_{2t} \in \L^d$ as $\tilde{\vx}_{1i} * \tilde{\vx}_{1t} = \tilde{\vx}_{2t} * \tilde{\vx}_{2i} = 1$. Let us verify the two conditions that we require the valid adversarial examples to satisfy:
\begin{itemize}
    \item \textbf{Adversarial budget.}~Note that we have
\begin{align*}
d_\L (\vx_1, \tilde{\vx}_{1t}) = d_\L (\vx_2, \tilde{\vx}_{2t}) = \acosh(\sqrt{1+\delta^2}) \;.
\end{align*}
Thus, by choosing $\delta = \sqrt{\cosh^2 (\alpha) -1 }$, we achieve the maximal permitted perturbation $\alpha$. 
\item \textbf{Inconsistent prediction for the current classifier, i.e., $h_{\vw_t}(\tilde{\vx}_{1t/2t}) \neq h_{\vw}(\vx_{1/2})$.}~Note that we have $\delta \geq \alpha >\epsilon$, which further implies that $\delta > \epsilon \geq \epsilon'$. In round $t$,
\begin{align*}
\vw_t * \tilde{\vx}_{1t} = \epsilon' \sqrt{1 + \delta^2} - \delta \sqrt{1+\epsilon'^2} < 0 \\
\vw_t * \tilde{\vx}_{2t} = - \epsilon' \sqrt{1 + \delta^2} + \delta \sqrt{1+\epsilon'^2} > 0 \; ,
\end{align*}
which is a consequence of the relation  $\delta > \epsilon'$ as follows:
\begin{align*}
\delta^2 > \epsilon'^2 \Rightarrow \delta^2 + \epsilon'^2 \delta^2 > \epsilon'^2 + \epsilon'^2 \delta^2 &\Rightarrow \delta^2(1+\epsilon'^2) > \epsilon'^2 (1+\delta^2) \nonumber \\ &\Rightarrow \delta \sqrt{1+\epsilon'^2} > \epsilon' \sqrt{1+\delta^2} \; .
\end{align*}
\end{itemize}
Recall that, in each round of Algorithm~\ref{alg:adv-mf} with an ERM update, we create adversarial examples and add them to the training set, i.e., after round $t$ we have
\begin{align*}
\Sc_{<t} = \Sc \cup \bigcup_{i=0}^{t-1} \lbrace (\tilde{\vx}_{1i}, y_{1i}), (\tilde{\vx}_{2i}, y_{2i})	\rbrace \; .
\end{align*}

\noindent Now for each $t$ and any $i<t$, we have
\begin{align*}
\vw_t * \tilde{\vx}_{1i} &= \epsilon' \sqrt{1 + \delta^2} - \delta\sqrt{1 + \epsilon'^2}\cdot \cos(\theta) > 0 \\
\vw_t * \tilde{\vx}_{2i} &= -\epsilon' \sqrt{1 + \delta^2} + \delta\sqrt{1 + \epsilon'^2}\cdot \cos(\theta)< 0 \; ,
\end{align*}
i.e., $\vw_t$ linearly separates $\Sc_{<t}$. 

Therefore, $\{\vw_t\}$ in \eqref{eq:w_seq} form an admissible sequence of the classifiers, where $\vw_t$ linearly separates $\Sc_t$ while achieving the margin of at most $\epsilon$ on the original dataset $\Sc$. The length of the sequence is bounded by the size of the spherical code $\mathcal{C}\big(d, \epsilon'\cosh(\alpha)\big)$, which give us that
\begin{align*}
T= A\left(d, \arccos\Big(\rho\cdot \frac{\epsilon' \sqrt{1 + \delta^2}}{\delta \sqrt{1 + \epsilon'^2}}\Big)\right) \;= A\left(d, \arccos\Big(\rho\cdot \frac{\sinh(\epsilon) \cosh(\alpha)}{\sqrt{\cosh^2 (\alpha) -1 } \sqrt{1 + \sinh^2(\epsilon)}}\Big)\right) \; .
\end{align*}
\end{proof}
The following result (a restatement of Theorem~\ref{prop:erm} from the main text) then follows by applying a lower bound on the maximal size of spherical codes by Shannon.
\begin{theorem}[Theorem~\ref{prop:erm}]
\label{prop:erm-appen}
Suppose Algorithm~\ref{alg:adv-mf} (with an ERM update) outputs a linear seperator of $\Sc \cup \Sc'$. In the worst case, the number of iteration required to achieve the margin at least $\epsilon$ is $\Omega\left( \exp(d) \right)$.
\end{theorem}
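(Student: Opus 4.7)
The plan is to combine Lemma~\ref{lem:sph-codes} with the Chabauty--Shannon--Wyner lower bound on spherical codes (Theorem~\ref{thm:sh-bound}) in a direct way. Lemma~\ref{lem:sph-codes} constructs a pathological training set $\Sc = \{(\ve_1, 1), (-\ve_1, -1)\}$ together with an adversarial budget $\alpha$, and exhibits an admissible sequence of $T$ ERM-consistent classifiers $\{\vw_t\}_{t \leq T}$, all of which have margin exactly $\epsilon$ on $\Sc$, where
\begin{align*}
T \;=\; A\!\left(d,\; \arccos\!\Big(\rho\cdot \tfrac{\sinh(\epsilon)\cosh(\alpha)}{\sqrt{\cosh^2(\alpha)-1}\sqrt{1+\sinh^2(\epsilon)}}\Big)\right).
\end{align*}
Since each $\vw_t$ in this sequence is a valid ERM solution on $\Sc \cup \Sc'_{<t}$, an adversarial ERM algorithm may realize exactly this sequence in the worst case. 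Thus, to achieve strictly larger margin than $\epsilon$, the algorithm must be run for at least $T$ iterations.

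To convert this into an $\Omega(\exp(d))$ bound, I would plug the angle
\[
\theta = \arccos\!\Big(\rho\cdot \tfrac{\sinh(\epsilon)\cosh(\alpha)}{\sqrt{\cosh^2(\alpha)-1}\sqrt{1+\sinh^2(\epsilon)}}\Big)
\]
into Theorem~\ref{thm:sh-bound}, which gives
\[
A(d, \theta) \;\geq\; (1+o(1))\sqrt{2\pi d}\,\frac{\cos\theta}{\sin^{d-1}\theta}.
\]
By choosing $\rho < 1$ and treating $\epsilon, \alpha$ as constants (or at least bounded away from the degenerate regime), the argument of $\arccos(\cdot)$ stays strictly inside $(0,1)$, so $\sin\theta$ is a constant strictly less than $1$. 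The factor $\sin^{-(d-1)}\theta$ therefore grows as $\Omega(c^d)$ for some $c > 1$, yielding the desired $T = \Omega(\exp(d))$ lower bound on the number of iterations.

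The routine part is just composing the two quoted results; the only subtle point is confirming that the angle $\theta$ produced by the construction in Lemma~\ref{lem:sph-codes} actually lies in a range where $\sin\theta$ is bounded away from $1$, since otherwise the Shannon bound degenerates. The hypothesis $\epsilon < \alpha$ from Lemma~\ref{lem:sph-codes} ensures that $\sinh(\epsilon)\cosh(\alpha) < \sqrt{\cosh^2(\alpha)-1}\sqrt{1+\sinh^2(\epsilon)}$ (equivalently $\sinh^2(\epsilon) < \sinh^2(\alpha)$ after squaring and rearranging), so the $\arccos$ argument is in $(0,1)$ and $\theta \in (0,\pi/2)$ is bounded away from $\pi/2$, keeping $\sin\theta$ bounded away from $1$. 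The $\rho < 1$ slack absorbs any boundary issues. With this verified, the two ingredients combine immediately to give Theorem~\ref{prop:erm-appen}.
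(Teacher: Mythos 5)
Your proposal is correct and follows essentially the same route as the paper: both invoke Lemma~\ref{lem:sph-codes} to lower-bound the number of ERM iterations by the size of a spherical code with minimum separation $\theta = \arccos\bigl(\rho\sinh(\epsilon)\cosh(\alpha)/(\sqrt{\cosh^2(\alpha)-1}\sqrt{1+\sinh^2(\epsilon)})\bigr)$, and then apply the Chabauty--Shannon--Wyner bound (Theorem~\ref{thm:sh-bound}) to conclude $T = \Omega(\exp(d))$. The paper carries out the substitution into the trigonometric fraction explicitly (computing $B^2 - A^2$ and letting $\rho \to 1$), whereas you argue more directly that $\sin\theta$ is bounded away from $1$ and hence $\sin^{-(d-1)}\theta$ is exponential; your verification that $\epsilon < \alpha$ places the $\arccos$ argument in $(0,1)$ is the right sanity check and matches the paper's implicit reliance on the same hypothesis.
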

\begin{proof}
The statement of the theorem follows from combining Lemma~\ref{lem:sph-codes} with Shannon's lower bound (Theorem~\ref{thm:sh-bound}) on the maximal size of spherical codes, namely
\begin{align*}
T \geq (1+ o(1)) \sqrt{2 \pi d} \frac{\cos(\theta)}{\sin^{d-1} (\theta)} \; .
\end{align*}
We introduce the shorthand $\theta =: \arccos \left(\frac{A}{B}  \right)$, where $A = \rho \sinh(\epsilon) \cosh(\alpha)$ and \\ 
 $B= \sqrt{\cosh^2(\alpha) - 1}\sqrt{1+\sinh^2(\epsilon)}$, as given by Lemma~\ref{lem:sph-codes}. We then use two well-known trigonometric identities
\begin{align*}
    \cos (\arccos z) = z\quad\text{and}\quad
    \sin (\arccos z) = \sqrt{1-z^2} 
\end{align*}
to simplify the trignometric fraction in Shannon's bound:
\begin{align*}
    \frac{\cos \theta}{\sin^{d-1} \theta} = \frac{A}{B \left(
    1 - \frac{A^2}{B^2} \right)^{\frac{d-1}{2}} } 
    = \frac{AB^{d-2}}{(B^2 - A^2)^{\frac{d-1}{2}}} \; .
\end{align*}
For the denominator, note that
\begin{align*}
    B^2 - A^2 &= (\cosh^2(\alpha) - 1)(1+\sinh^2(\epsilon)) - \rho^2 \sinh^2(\epsilon) \cosh^2(\alpha) \\
    &= (1- \rho^2) \sinh^2(\epsilon) \cosh^2(\alpha) + \cosh^2(\alpha) -1 - \sinh^2(\epsilon) \\
    &\overset{(i)}{\simeq} \cosh^2(\alpha) - 1 - \sinh^2(\epsilon)
\end{align*}
where (i) follows from the fact that we can choose $\rho$ arbitrary close to 1. Putting everything together, we have the lower bound
\begin{align*}
    T \geq (1+o(1)) \sqrt{2d} \frac{\rho \sinh(\epsilon) \cosh(\alpha) \left( \sqrt{\cosh^2(\alpha) - 1} \sqrt{1+\sinh^2(\epsilon)}     \right)^{d-2}}{\left( \cosh^2(\alpha) - 1 - \sinh^2(\epsilon)      \right)^{\frac{d-1}{2}}} = \Omega (\exp d) \; ,
\end{align*}
which is exponential in $d$.
\end{proof}

\section{Dimension-distortion trade-off}
\subsection{Euclidean case}
\label{sec:D1}
In the Euclidean case, we relate the distance of the support vectors and the size of margin via side length - altitude relations. Let $\vx,\vy \in \R^{d}$ denote support vectors, such that $\ip{\vx}{\vw} > 0$ and $\ip{\vy}{\vw} <0$ and ${\rm margin}(\vw)=\epsilon$. We can rotate the decision boundary, such that the support vectors are not unique. Wlog, assume that $\vx_1, \vx_2$ are equidistant from the decision boundary and $\norm{\vw}=1$. In this setting, we show the following relation:
\begin{theorem}[Thm.~\ref{prop:e-dist}]
$\epsilon' \geq \frac{\epsilon}{c_E^3}$.
\end{theorem}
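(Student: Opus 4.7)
The plan is to exploit the explicit geometric interpretation of the margin as an altitude in the triangle formed by support vectors. Following the configuration in Figure~\ref{fig:margin}~(left), I would fix a decision hyperplane with two support vectors $\vx_1, \vx_2$ on one side (both at perpendicular distance $\epsilon'$) and one support vector $\vy$ on the other side at the same distance. Since $\vx_1$ and $\vx_2$ then lie on a common hyperplane parallel to $\partial \Hc_\vw$, the perpendicular distance from $\vy$ to the line through $\vx_1, \vx_2$ is exactly $2\epsilon'$, i.e.\ the altitude $h_\vy$ of the triangle $\vx_1\vx_2\vy$ from the vertex $\vy$. The classical side--altitude identity $h_\vy = 2A/\|\vx_1 - \vx_2\|$, combined with Heron's formula for the area $A$, expresses $\epsilon'$ as an explicit function of only the three pairwise Euclidean distances $\|\vx_1 - \vx_2\|, \|\vx_1 - \vy\|, \|\vx_2 - \vy\|$ between support vectors in the embedded space.

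The next step is to track how each of these three distances is perturbed by $\phi_E$. By the definition of multiplicative distortion, for all $u, v \in \Xc$,
\begin{equation*}
d_\Xc(u,v)/c_E \;\leq\; \|\phi_E(u) - \phi_E(v)\| \;\leq\; d_\Xc(u,v),
\end{equation*}
so each side of the triangle can contract by at most a factor $c_E$ relative to its ``undistorted'' value. The squared area from Heron's formula is a degree-four symmetric polynomial in the sides, so $A$ can shrink by at most $c_E^2$, while the base $\|\vx_1-\vx_2\|$ in the denominator may fail to shrink at all, in which case $1/\|\vx_1-\vx_2\|$ contributes an additional factor $c_E$ relative to the undistorted case. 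Multiplying these effects, the altitude $2\epsilon'$ shrinks by at most $c_E^3$ compared with $2\epsilon$, yielding $\epsilon' \geq \epsilon/c_E^3$. The main-text form then follows by substituting Bourgain's theorem (Theorem~\ref{thm:bourgain}): an $N$-point tree metric embeds into $\R^{O(\log^2 N)}$ with $c_E = O(\log N)$, giving $\epsilon' = \Omega(\epsilon/\log^3 |\Xc|)$.

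The main obstacle is making the $c_E^3$ accounting tight in the worst case. Heron's formula involves cancelling cross terms, and for sufficiently flat triangles a nonuniform perturbation of the three sides can drive $A$ to zero, which would invalidate any uniform lower bound on $\epsilon'$. I would handle this either by restricting attention to distortion regimes where the triangle inequality remains strict in the embedded configuration, or, more cleanly, by bypassing Heron's formula entirely: decompose each side along the direction of $\vw$ and its orthogonal complement, so that the altitude $h_\vy = (\vy - \vx_1) \cdot \hat{\vw}$ splits into a single signed difference of coordinates. In that coordinate-aligned form, one factor of $c_E$ appears for each pairwise distance that enters the expression, and the $c_E^3$ dependence emerges without the combinatorial overhead of Heron's formula. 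Either route, the Bourgain substitution at the end is routine.
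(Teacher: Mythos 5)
Your approach is the same as the paper's: form the triangle on the support vectors $\vx_1, \vx_2, \vy$, express twice the margin as the altitude from $\vy$ via Heron's formula, and push the factors of $c_E$ through the three pairwise side lengths.

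The ``main obstacle'' you flag in your third paragraph is the crux, and it is equally present in the paper's own proof. The appendix writes $s' = s/c_E$ and then substitutes $s' - d_i' = (s - d_i)/c_E$ into Heron's radical; both steps are valid only if every side contracts by exactly the factor $c_E$. The distortion definition guarantees only $d_i/c_E \leq d_i' \leq d_i$, so under a nonuniform contraction the half-perimeter terms $s' - d_i' = \tfrac{1}{2}(d_j' + d_k' - d_i')$ can fall well below $(s - d_i)/c_E$, even to zero, and the claimed bound $s'(s'-d_1')(s'-d_2')(s'-d_3') \geq c_E^{-4}\, s(s-d_1)(s-d_2)(s-d_3)$ does not follow. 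Your first-paragraph accounting (``the squared area is a degree-four polynomial in the sides, so $A$ shrinks by at most $c_E^2$'') implicitly invokes the same uniform contraction: Heron's product is not monotone in its arguments, and an admissible nonuniform distortion of a near-flat triangle can collapse the area entirely. The two fixes you sketch do not close the gap. Restricting to configurations where the embedded triangle stays strictly non-degenerate does not bound the area ratio. The coordinate-aligned form $h_\vy = (\vy - \vx_1)\cdot\hat{\vw}$ is an identity in the embedded space, but the original metric space $\Xc$ carries no normal direction $\hat{\vw}$, and the multiplicative distortion only constrains the total distance $\|\phi_E(u) - \phi_E(v)\|$, not its component along $\hat{\vw}$, so the per-side bounds do not transfer to the altitude in that decomposition. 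In short: you have correctly diagnosed the hole, which is shared by the paper, but neither your first-paragraph bookkeeping nor your proposed workarounds actually fill it.
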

\begin{proof}
Let $d_1 = d_\Xc (\phi_E^{-1}(\vx_1),\phi_E^{-1}(\vy))$, $d_2 = d_\Xc (\phi_E^{-1}(\vx_2),\phi_E^{-1}(\vy))$ and $d_3 = d_\Xc (\phi_E^{-1}(\vx_1), \phi_E^{-1}(\vx_2))$ the distances between the support vectors in the original space. In the Euclidean embedding space we have
\begin{align*}
d_1' &= d_E(\vx_1, \vy) \geq \frac{d_1}{c_E} \\
d_2' &= d_E(\vx_2, \vy) \geq \frac{d_2}{c_E} \\
d_3' &= d_E(\vx_1, \vx_2) \geq \frac{d_3}{c_E} \; .
\end{align*}
$d_1', d_2', d_3'$ are the side lengths of a triangle, whose altitude is given by the margin: $h=2 \epsilon'$. With Heron's equation we get
\begin{align*}
h = 2 \epsilon' = \frac{2}{d_3'} \sqrt{s'(s'-d_1')(s'-d_2')(s'-d_3')} \; ,
\end{align*}
where $s'=\frac{1}{2}(d_1' + d_2' + d_3')$. In $\Xc$ we have $s' = \frac{1}{2 c_E}(d_1 + d_2 + d_3) = \frac{s}{c_E}$. Then we have with respect to the actual distance relations
\begin{align*}
h = 2 \epsilon' \geq \frac{2 }{c_E d_3} \sqrt{c_E^{-4} s(s-d_1)(s-d_2)(s-d_3)} = 2  \frac{ \epsilon}{c_E^3} \; ,
\end{align*}
which gives the claim.
\end{proof}

\subsection{Hyperbolic case}
\label{sec:D2}
As in the Euclidean case, we want to relate the margin to the distance of the support vectors. Since the distortion can be expressed in terms of the distances of support vector in the original and the embedding space, this allows us to study the influence of distortion on the margin.

We will derive the relation in the half-space model ($\Pb^2$). However, since the theoretical guarantees above consider the upper sheet of the Lorentz model ($\L_{+}^{d'}$), we have to map between the two spaces.\\

\begin{ass}\label{ass:dist}
We make the following assumptions on the underlying data $\Xc$ and the embedding $\phi_H$:
\begin{enumerate}
    \item $\Xc$ is linearly separable;
    \item $\Xc$ is hierarchical, i.e., has a partial order relation;
    \item $\phi_H$ preserves the partial order relation and the root is mapped onto the origin of the embedding space.
\end{enumerate}
\end{ass}

\noindent Under these assumptions, the hyperbolic embedding $\phi_H$ has two sources of distortion:
\begin{enumerate}
    \item the (multiplicative) distortion of pairwise distances, measured by the factor $\frac{1}{c_H}$;
    \item the distortion of order relations, in most embedding models captured by the alignment of ranks with the Euclidean norm. 
\end{enumerate}
Under Ass.~\ref{ass:dist}, order relationships are preserved and the root is mapped to the origin. Therefore, the distortion on the Euclidean norms is given as follows:
\begin{align*}
    \norm{\phi_H(x)} = d_E(\phi_H(\vx),\phi_H(0)) = \frac{d_\Xc(\vx,0)}{c_H} \; ,
\end{align*}
i.e., the distortion on both pairwise distances and norms is given by a factor $\frac{1}{c_H}$.\\

\noindent \emph{Note on notation: }In the following, a bar over any symbol indicates the Euclidean expression.

\subsubsection{Mapping from $\L_{+}^{d'}$ to $\Pb^2$}
First, note that a transformation $\vv \mapsto B\vv$ with $B=\begin{pmatrix}
1 &0 \\
0 &A
\end{pmatrix}$ and an orthogonal matrix $A$ is isometric, i.e., it preserves the Minkowski product~\citep{cho}:
\begin{align*}
(B\vu)*(B\vv) = u_0 v_0 - \vu_{1:d'}^T A^T A \vv_{1:d'} = u_0 v_0 - \vu_{1:d'}^T \vv_{1:d'} = \vu*\vv \; .    
\end{align*}
Setting the first column of $A$ to $\frac{\vw_{1:d'}}{\norm{\vw_{1:d'}}}$ we can isometrically transform the decision hyperplane as 
             $\hat{\vw}=B \vw = (\hat{\vw}_0, \norm{\hat{\vw}_{1:d'}},0,\dots,0)$. Analogously, we can transform any point in $\L_{+}^{d'}$. In the following, we will use the shorthand $\lambda = \frac{\hat{w}_0}{\hat{w}_1}$. We can then use the maps defined in section~\ref{sec:equiv} to map $\hat{\vx}=Bx \in \L_{+}^2$ onto $\vz \in \Pb^2$, i.e. applying $(\pi_{BP} \circ (\pi_{LB} \circ B))$ to any $\vx \in \L_{+}^2$ gives $\vz \in \Pb^2$.

\begin{rmk}[Effect of hyperbolic distortion on Euclidean distances in the Poincare half plane]\normalfont
Note that the hyperbolic distance in the Poincare half plane can be written as follows:
\begin{align*}
    d_{\Pb}((x_0,x_1),(y_0,y_1)) &= 2 \asinh \left( \frac{1}{2} \sqrt{\frac{(x_0 - y_0)^2 + (x_1 - y_1)^2}{x_1 y_1}} \right) \\
    &= 2 \asinh \left( \frac{1}{2} \frac{d_E((x_0,x_1),(y_0,y_1))}{\sqrt{x_1 y_1}}  \right) \; .
\end{align*} 
If $c_H$ denotes the hyperbolic distortion, we get
\begin{align*}
    &d_{\Pb}' = \frac{d_{\Pb}}{c_H} = 2 \asinh \left( \frac{1}{2} \frac{d_E'}{\sqrt{x_1 y_1}}          \right) \\
    \Rightarrow \quad &\frac{1}{2} \frac{d_E'}{\sqrt{x_1 y_1}} = \sinh \left( \frac{2 \asinh \left(  \frac{1}{2} \frac{d_E}{\sqrt{x_1 y_1}} \right)}{2 c_H}\right) \gtrsim \frac{1}{2} \frac{d_E}{c_H \sqrt{x_1 y_1}} \; .
\end{align*}        
This suggests, that the effect of hyperbolic distortion on the Euclidean distances can be quantified by a comparable factor, i.e. $d_E' \gtrsim \frac{d_E}{c_H}$.
\end{rmk}
\begin{lem}[Relation between h-margin and E-margin]
\label{prop:e-h-margin}
Let $\gamma_H$ be the margin of a hyperbolic classifier $\vw \in \mathbb{R}^{d'+1}$. Then the Euclidean margin $\gamma_E$ of $\vw$ is bounded as follows: $\gamma_E \geq \sinh(\gamma_H)$.
\end{lem}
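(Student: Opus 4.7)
The plan is to derive the bound by a direct algebraic comparison of the two margin expressions, after an isometric reduction to canonical form. First, I would unwind the hyperbolic margin: by definition, at the support vector $(\vx,y)\in \mathcal{S}$ realising the infimum, $\gamma_H = \asinh\!\big(y(\vw*\vx)/\sqrt{-\vw*\vw}\big)$, so applying $\sinh$ yields the key identity $\sinh(\gamma_H) = y(\vw*\vx)/\sqrt{-\vw*\vw}$. Then I would recast the decision rule in Euclidean terms via $\vw*\vx = \hat{\vw}\cdot\vx$, with $\hat{\vw}=(w_0,-w_1,\dots,-w_{d'})$, which exhibits the hyperbolic decision boundary as a Euclidean hyperplane in $\R^{d'+1}$ with normal $\hat{\vw}$. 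The Euclidean margin of $\vw$ then takes the form $\gamma_E = y(\vw*\vx)/N_E(\vw)$ for the Euclidean normalisation $N_E(\vw)$ implicit in Section~5, so the claim $\gamma_E \ge \sinh(\gamma_H)$ reduces to the pure norm comparison $N_E(\vw)\le \sqrt{-\vw*\vw}$ (after accounting for sign via $y$ at the support vector).

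Second, I would invoke the isometric change of basis $B=\mathrm{diag}(1,A)$ from Appendix~A to bring $\vw$ into the canonical form $(w_0,\|\vw_{1:d'}\|,0,\ldots,0)$. Both $\vw*\vx$ and $-\vw*\vw$ are preserved by $B$, while Euclidean quantities transform in a controlled way (the spatial rotation is a Euclidean isometry, so only the temporal coordinate is singled out). The comparison then collapses to a one-variable algebraic identity in $(w_0,\|\vw_{1:d'}\|)$ subject to the constraint $\|\vw_{1:d'}\|^2-w_0^2 = -\vw*\vw$, which can be verified by direct substitution.

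The main obstacle, in my view, is pinning down the correct Euclidean-margin convention $N_E(\vw)$ so that the inequality runs in the claimed direction --- in particular, normalising by the ambient Euclidean norm $\|\hat{\vw}\|^2=w_0^2+\|\vw_{1:d'}\|^2$ would reverse the inequality, so the paper's convention must sharpen $\|\hat{\vw}\|$ down to something no larger than $\sqrt{-\vw*\vw}$. A sanity check on $\vw=(0,1)$, $\vx=(\cosh t,\sinh t)$ gives $\sinh(\gamma_H)=|\sinh t|$, with the Euclidean distance to the ambient hyperplane also equal to $|\sinh t|$, so the bound is attained with equality in this degenerate case; the general case should then follow from the canonical reduction above, with the gap $\gamma_E - \sinh(\gamma_H)$ controlled by the temporal component $w_0$.
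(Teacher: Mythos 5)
Your approach has a genuine gap, and you actually flag it yourself: the proposed reduction requires $N_E(\vw) \le \sqrt{-\vw*\vw}$ for some Euclidean normalization $N_E$, but for the ambient Euclidean norm $\|\hat{\vw}\| = \sqrt{w_0^2 + \|\vw_{1:d'}\|^2}$ the inequality runs the wrong way whenever $w_0 \neq 0$ (e.g.\ $\vw = (1,\sqrt{2},0,\dots,0)$ gives $\|\hat{\vw}\| = \sqrt{3} > 1 = \sqrt{-\vw*\vw}$). Your sanity check with $\vw=(0,1)$ hides this because $w_0 = 0$ there, which is exactly the degenerate boundary case. The unresolved question you raise --- what exactly is $\gamma_E$? --- is the crux, and the guess that it is a normalized ambient signed distance to the hyperplane $\{\hat{\vw}\cdot\vz = 0\}$ is not the paper's convention.

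The paper in fact defines $\gamma_E$ as the straight-line Euclidean distance $d_E(\vx,\vy)$, \emph{in the Poincar\'e half-plane model}, between the support vector $\vx$ and the closest point $\vy$ on the decision boundary $\partial\Hc_{\vw}$ (not the ambient hyperplane). The argument then exploits the exact relation
\begin{equation*}
d_{\Pb}\big((x_0,x_1),(y_0,y_1)\big) = 2\asinh\!\left(\tfrac{1}{2}\,\frac{d_E\big((x_0,x_1),(y_0,y_1)\big)}{\sqrt{x_1 y_1}}\right),
\end{equation*}
reduces the problem (via the isometry $B$ you correctly invoke) to a two-coordinate half-plane picture, and then shows $x_1 y_1 \ge 1$ using the hyperboloid constraints $\hat{x}_1 = \sqrt{\hat{x}_0^2-1}$, $\hat{y}_1 = \sqrt{\hat{y}_0^2-1}$, the boundary condition $\vw*\vy = 0$, the ordering $x_0 \ge y_0$, and a (somewhat informal) argument that $w_0 \ge 1$ throughout training. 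Plugging $x_1 y_1 \ge 1$ into the display yields $d_E \ge 2\sinh(\gamma_H/2) \ge \sinh(\gamma_H)$. Your algebraic normalization strategy and the paper's half-plane chord-length argument are genuinely different routes, but yours as written does not close: without the correct notion of $\gamma_E$ (a model-dependent chord length, not an ambient point-to-hyperplane distance) and without the geometric facts that force $x_1 y_1 \ge 1$, the one-variable comparison you propose has no way to produce the required direction of inequality.
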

\begin{proof}
We again write the hyperbolic distance in the Poincare half plane in terms of the Euclidean distance of the ambient space:
\begin{align*}
    d_{\Pb}((x_0,x_1),(y_0,y_1)) &= 2 \asinh \left( \frac{1}{2} \sqrt{\frac{(x_0 - y_0)^2 + (x_1 - y_1)^2}{x_1 y_1}} \right) \\
    &= 2 \asinh \left( \frac{1}{2} \frac{d_E((x_0,x_1),(y_0,y_1))}{\sqrt{x_1 y_1}}  \right) \; ,
\end{align*} 
where $\vy \in \mathcal{H}_w$ is the point closest to the support vector $\vx \in \L_{+}^{d'}$ on the decision boundary. Therefore, the hyperbolic margin is $d_\Pb(\vx,\vy)=\gamma_H$ and the Euclidean margin is $d_E(\vx,\vy)=\gamma_E$.

Since we mapped the feature space onto the Poincare half plane, $\vy$ has the coordinates $\vy=(\tilde{y}_0, \tilde{y}_1, 0, \dots, 0)$ where $\tilde{y}_0=y_0$ and $\tilde{y}_1=\frac{\vw'^T \vy'}{\norm{\vw'}}$. Similarly, $\vx$ has the coordinates $\vx=(\tilde{x}_0, \tilde{x}_1, 0, \dots, 0)$. The transformation preserves the Minkowski product. Therefore we have
\begin{align*}
    \vy*\vy &= y_0^2 - \vy'^2 = \hat{y}_0^2 - \Big( \underbrace{\frac{\vw'^T \vy'}{\norm{\vw'}}}_{=\hat{y}_1} \Big)^2 = 1
\end{align*}
and similarly $\vx*\vx=\hat{x}_0^2 - \hat{x}_1^2 = 1$. This implies
\begin{align*}
    \circled{1} \quad 
        \hat{y}_1 &= \sqrt{\hat{y}_0^2-1}, \; \hat{x}_1 = \sqrt{\hat{x}_0^2-1} \; ,
\end{align*}
and further
\begin{align*}
    \circled{2} \quad \hat{x}_0, \; \hat{y}_0 \geq 1 \; .
\end{align*}
We want to show that $\hat{x}_1 \hat{y}_1 \geq 1$. For this, first, note that since $\vy \in \mathcal{H}_w$ and the hyperbolic margin is $\gamma_H$, we have
\begin{align*}
    0 &= \vw*\vy = w_0 y_0 - \vw'^T \vy' \\
    \Rightarrow \vw'^T \vy' &= w_0 y_0 \; .
\end{align*}
This gives
\begin{align*}
    \vy*\vy &= y_0^2 - \frac{w_0^2 y_0^2}{\norm{\vw'}^2} = 1 \\
    \Rightarrow 0 &= y_0^2 - \frac{w_0^2 y_0^2}{\norm{\vw'}^2} - 1 \; ,
\end{align*}
and therefore
\begin{align*}
    \circled{3} \quad y_0 = \frac{1}{\sqrt{1 - \frac{w_0^2}{\norm{\vw'}}}} \; . 
\end{align*}
Since the hyperbolic margin is $\gamma_H$, we further have 
\begin{align*}
   d_{\Pb}(\vx,\vy)&=\acosh(\vx*\vy) \geq \gamma_H \quad
   \Rightarrow \; \vx*\vy \geq \cosh(\gamma_H) \geq 1 \; ,
\end{align*}
and therefore
\begin{align*}
    x_0 y_0 - x_1 y_1 &\geq 1 \\
    x_0 y_0 - \sqrt{x_0^2 - 1} \sqrt{y_0^2 -1 } &\geq 1 \\
    (x_0 y_0 -1)^2 &\geq (x_0^2-1)(y_0^2-1) \\
    x_0^2 y_0^2 - 2 x_0 y_0 +1 &\geq x_0^2 y_0^2 - x_0^2 - y_0^2 +1 \\
    \Rightarrow 0 &\leq (x_0 - y_0)^2 \; ,
\end{align*}
which implies
\begin{align*}
    \circled{4} \quad x_0 \geq y_0 \;.
\end{align*}
This gives for $x_1 y_1$ the following:
\begin{align*}
    x_1 y_1 &\overset{\circled{1}}{=} \sqrt{x_0^2 - 1} \sqrt{y_0^2 -1} \overset{\circled{4}}{\geq} y_0^2 -1 \overset{\circled{3}}{=} \frac{1}{1 - \frac{w_0^2}{\norm{\vw'}}} - 1 = \frac{w_0^2}{\norm{\vw'}^2 - w_0^2} \; .
\end{align*}
By assumption we have $\vw*\vw=w_0^2 - \norm{\vw'}^2=-1$, which gives for the denominator $-w_0^2 + \norm{\vw'}^2=1$. It remains to show that $w_0^2 \geq 1$. 

For this last step, we want to show that mass concentrates on $w_0$ as the classifier is updated, ensuring $w_0 \geq 1$. By construction, we have initially $\vw*\vw=-1$. Wlog, assume that initially $w_0 \geq 1$. An initialization of this form can always be found, e.g., by setting $\vw=(a, \sqrt{1+a^2},0, \dots, 0)$ for some $a \geq 0$. If the $i^{th}$ update is negative ($y^i x_0^i < 0$), then $\vert \vw \vert_*$ will initially decrease, but the normalization step will scale away the effect on $w_0$. However, if the $i^{th}$ update is non-negative ($y^i x_0^i \geq 0$), it will increase $w_0$. Over time, the positive updates concentrate the mass on $w_0$. Since we initialized to $w_0 \geq 1$, the condition will always stay valid. With the arguments above, this implies $x_1 y_1 \geq 1$. Inserting the latter in the expression above, we get
\begin{align*}
    d_H &= 2 \asinh \left( \frac{1}{2} \frac{d_E}{\sqrt{x_1 y_1}}  \right)
    \leq 2 \asinh \left( \frac{d_E}{2}  \right) \\
    \Rightarrow d_E &\geq 2 \sinh \left(\frac{d_H}{2}     \right) \geq \sinh(d_H) \; .
\end{align*}    
\end{proof}

\subsubsection{Characterizing the margin}
\begin{figure}[ht]
    \centering
    \includegraphics[width=7cm]{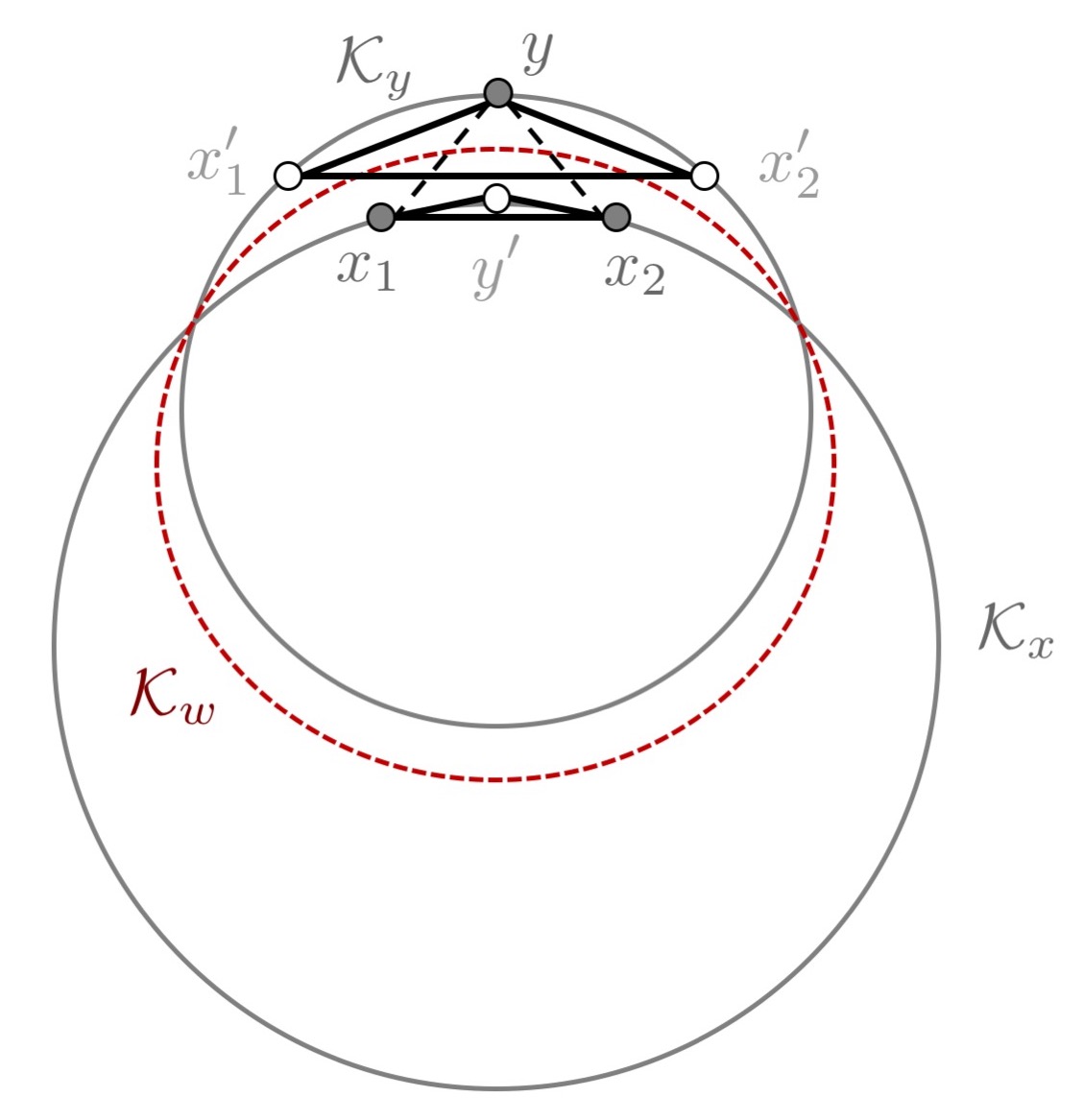}
    \caption{Support vectors on hypercircles $\Kc_x$ and $\Kc_y$ with decision hypercircle $\Kc_w$.}
    \label{fig:A-hyp-1}
\end{figure}
\begin{figure}[ht]
    \centering
    \includegraphics[width=8cm]{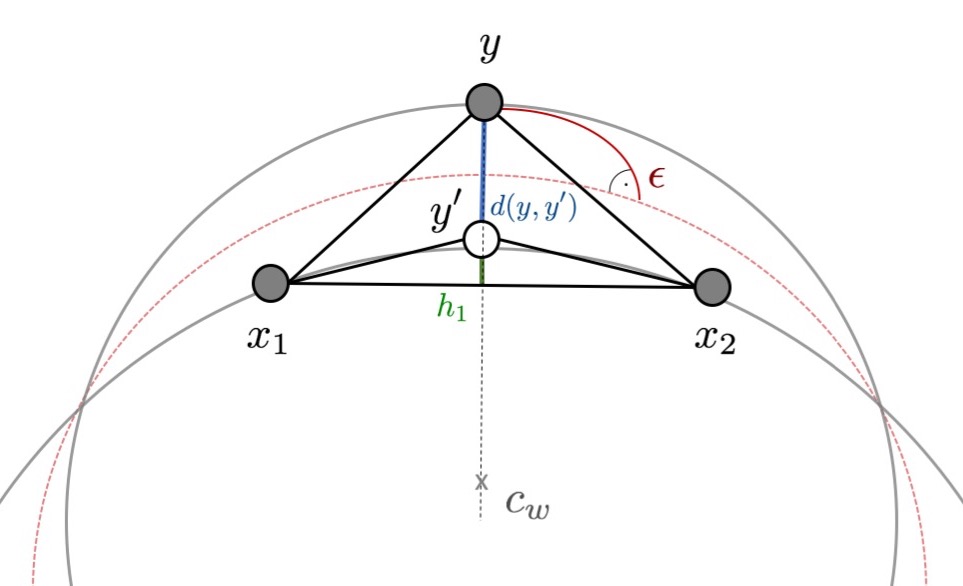}
    \caption{Margin as distance between hypercircles $\Kc_x$ and $\Kc_y$.}
    \label{fig:A-hyp-2}
\end{figure}
\begin{figure}
    \centering
    \includegraphics[width=7cm]{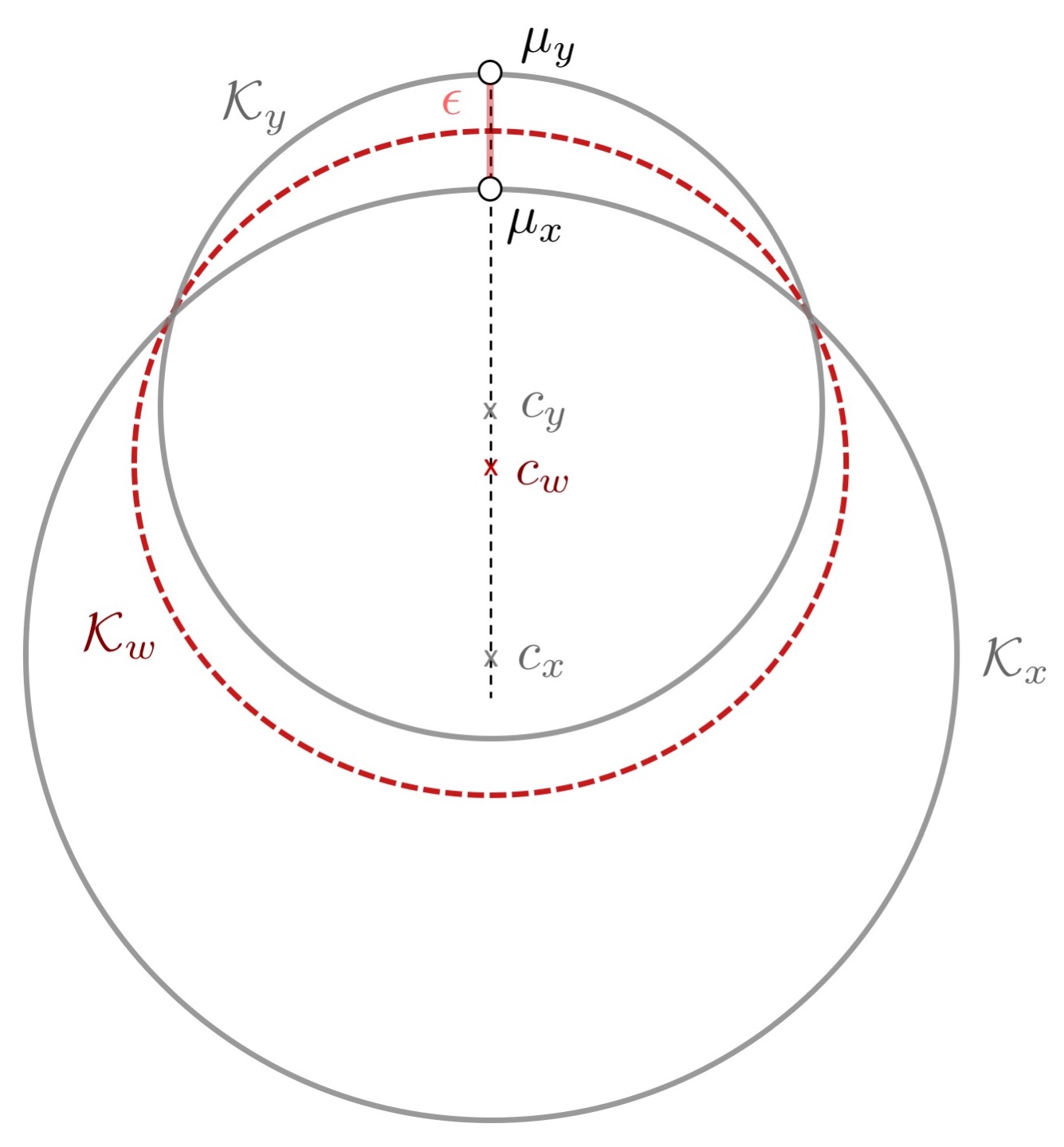}
    \caption{Geometric construction for computing the center and radius of the hypercircle $\Kc_x$.}
    \label{fig:A-hyp-3}
\end{figure}
In $\Pb^2$ the decision hyperplane corresponding to $\hat{w}=Bw$ corresponds to a hypercircle $\Kc_w$. One can show, that its radius is given by $r_w = \sqrt{\frac{1-\lambda}{1+\lambda}}$~\citep{cho}, by computing the hyperbolic distance between a point on the decision boundary and one of the hypercircle's ideal points. Further note, that the support vectors lie on hypercircles $\Kc_x$ and $\Kc_y$, which correspond to the set of points of hyperbolic distance $\epsilon$ (i.e., the margin) from the decision boundary. We again assume wlog that at least one support vector is not unique and let $x_1, x_2 \in \Kc_x$ and $y \in \Kc_y$ (see Fig.~\ref{fig:A-hyp-1}). \\

\begin{theorem}[Thm.~\ref{thm:h-dist-dim}]
$\epsilon' \approx \epsilon$.
\end{theorem}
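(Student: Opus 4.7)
The plan is to mirror the Euclidean argument of Thm.~\ref{prop:e-dist}, but carried out in the half-plane model $\mathbb{P}^2$ after a reduction to ordinary Euclidean geometry, and then exploit the quasi-isometric nature of tree embeddings into hyperbolic space (Thm.~\ref{thm:sarkar}).

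First, I would map the Lorentz configuration into $\mathbb{P}^2$ using the composition $\pi_{BP}\circ\pi_{LB}\circ B$ described in Appendix~\ref{sec:equiv} and at the start of \S\ref{sec:D2}. The isometry $B$ aligns the classifier so that the decision boundary becomes a standard hypercircle $\mathcal{K}_w$ of radius $r_w=\sqrt{(1-\lambda)/(1+\lambda)}$, and the support vectors lie on the level hypercircles $\mathcal{K}_x$ and $\mathcal{K}_y$ at hyperbolic distance $\epsilon$ from $\mathcal{K}_w$. By Assumption~\ref{ass:dist} (linear separability) and Assumption~\ref{ass:maps} (root mapped to the origin), we may assume $\mathbf{x}_1,\mathbf{x}_2\in\mathcal{K}_x$ and $\mathbf{y}\in\mathcal{K}_y$ as in Fig.~\ref{fig:margin}(right).

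Next, following the hint in the theorem statement, I would perform a circle inversion centered on (an ideal point of) $\mathcal{K}_w$. Since Möbius inversions are hyperbolic isometries of $\mathbb{P}^2$ and send hypercircles to hypercircles (or straight vertical lines), the decision hypercircle becomes a vertical line, while $\mathcal{K}_x$ and $\mathcal{K}_y$ become two parallel vertical lines at Euclidean distance determined by $\epsilon$. In this inverted picture, the triangle with vertices at the three support vectors is a genuine Euclidean triangle whose altitude to the base $\overline{\mathbf{x}_1\mathbf{x}_2}$ has length $2\epsilon'$ after distortion and $2\epsilon$ before, and one can express $\epsilon$ through Heron's formula in exactly the way done for Thm.~\ref{prop:e-dist}. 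Writing $d_i$ for the three side lengths in $\mathcal{X}$ and $d_i'$ for their post-embedding counterparts, multiplicative distortion $c_H$ gives $d_i/c_H \le d_i' \le d_i$; plugging into Heron's identity and the altitude formula, the factor $c_H^{-3}$ appears just as in the Euclidean proof.

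Finally, I would invoke Sarkar's construction (Thm.~\ref{thm:sarkar}): for a tree embedded into $\H^2=\L_+^2$, the multiplicative distortion can be taken as $c_H=1+\delta$ for any prescribed $\delta>0$, independent of $|\mathcal{X}|$. Thus the loss factor $c_H^{-3}=(1+\delta)^{-3}\approx 1$, so $\epsilon'\approx \epsilon$, establishing the claim. The translation between the hyperbolic distance on $\mathbb{P}^2$ and the Euclidean distance needed to apply Heron's formula uses the standard identity $d_{\mathbb{P}}(\mathbf{u},\mathbf{v})=2\operatorname{asinh}\!\big(\tfrac{1}{2}d_E(\mathbf{u},\mathbf{v})/\sqrt{u_1 v_1}\big)$, which for the small perturbations under distortion linearizes and yields $d_E'\gtrsim d_E/c_H$ (as noted in the remark preceding Lemma~\ref{prop:e-h-margin}). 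The main obstacle, and the step that requires the most care, is controlling this hyperbolic-to-Euclidean passage uniformly across all three sides of the support-vector triangle so that the Euclidean Heron computation faithfully reflects the hyperbolic margin; once this is done, the quasi-isometric bound of Sarkar absorbs the residual $(1+\delta)$ factors into a negligible perturbation.
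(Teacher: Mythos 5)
Your high-level strategy is on track: pass to the half-plane model $\Pb^2$ via $\pi_{BP}\circ\pi_{LB}\circ B$, use circle inversion on the decision hypercircle, reduce to Euclidean computations, and invoke Sarkar's quasi-isometric embedding to make the distortion factor negligible. Those ingredients are indeed what the paper uses. But the central mechanism you propose does not match how the hyperbolic margin is actually computed, and the gap is not cosmetic.

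You claim the hyperbolic margin becomes an altitude in a ``genuine Euclidean triangle'' after circle inversion, so Heron's formula transfers verbatim from Thm.~\ref{prop:e-dist} and produces a multiplicative loss factor $c_H^{-3}$. This is not what happens. In the paper's proof, Heron's formula is used only in \emph{Step 1}, as an auxiliary tool to compute the Euclidean \emph{radius} $\bar r_x$ of the circle $\bar{\Kc}_x$ circumscribing the mirror triangle $\Delta(\vx_1,\vx_2,\vy')$ — not the margin. The margin itself is computed in \emph{Step 2} via a separate Möbius transform $\theta$ that places the relevant points on the imaginary axis, yielding the logarithmic distance formula
\[
\epsilon \;=\; \tfrac12\Bigl|\log \tfrac{\bar c_y^{(2)}+\bar r_y}{\bar c_x^{(2)}+\bar r_x}\Bigr|.
\]
Because the margin is a logarithm of a ratio, and each of $\bar c_x^{(2)},\bar c_y^{(2)},\bar r_x,\bar r_y$ is shown (Step 3) to be distorted by a common factor $1/c_H$, the distortion does \emph{not} appear as a multiplicative $c_H^{-3}$ but as the additive term $\log(c_H^{-2})$ inside the absolute value, which vanishes as $c_H\to 1$. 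Your sketch never obtains this identity, and without it the conclusion $\epsilon'\approx\epsilon$ does not follow from the Euclidean Heron argument: the half-plane margin is not the Euclidean altitude of the support-vector triangle.

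A second, smaller issue: your picture of the circle inversion is off. After the inversion on $\bar{\Kc}_w$, the level hypercircles $\Kc_x,\Kc_y$ do \emph{not} become parallel vertical lines; they are equidistant curves that pass through the same two ideal points as the geodesic $\Kc_w$, so they remain circular arcs. The paper does not rely on straightening them — it only uses the inversion as a Euclidean construction relating the mirror point $\vy'\in\bar{\Kc}_x$ of $\vy$ through the identity $\bar d(\vy',\bar c_w)\,\bar d(\vy,\bar c_w)=\bar r_w^2$, which feeds into the Heron step for $\bar r_x$. To close the gap you would need to (i) recover the mirror-point construction and the radius computation, (ii) derive the logarithmic margin formula using the Möbius transform to the imaginary axis, and (iii) track the $1/c_H$ scaling through the centers and radii to see it cancel inside the $\log$.
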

\begin{proof}
Our proof consists of three steps:\\

\noindent \textbf{Step 1: Find Euclidean radii and centers of hypercircles.}
The hypercircles $\Kc_x, \Kc_y$ correspond to arcs of Euclidean circles $\bar{\Kc}_x, \bar{\Kc}_y$ in the full plane that are related through circle inversion on the decision circle $\bar{\Kc}_w$ (i.e., the Euclidean circle corresponding to $\Kc_w$); see Fig.~\ref{fig:A-hyp-1}.
We can construct a "mirror point" $y' \in \bar{\Kc}_x$ of $y$ by circle inversion on $\bar{\Kc}_w$. We have the following (Euclidean) distance relations: The circle inversion gives
\begin{align*}
    \bar{d}(y',\bar{c}_w) \; \bar{d}(y,\bar{c}_w) &= r_w^2 \; ,
\end{align*}
where $\bar{c}_w$ denotes the center of $\bar{\Kc}_w$. Furthermore, we have (see Fig.~\ref{fig:A-hyp-3})
\begin{align*}
\bar{d}(y,\bar{c}_w) = \bar{d}(y',\bar{c}_w) + \bar{d}(y,y') \; .
\end{align*}
 Putting both together, we get an expression for the Euclidean distance of $y$ and $y'$:
\begin{align*}
    \circled{1} \quad \bar{d}(y,y') &= \bar{d}(\bar{c}_w,y) - \frac{\bar{r}_w^2}{\bar{d}(\bar{c}_w,y)}
    \; .
\end{align*}
Here, we have by construction $\bar{c}_w=(0,a,0,\dots,0)$ with a free parameter $a$. Wlog, assume $\bar{c}_w=(0,-1,0,\dots,0)$. Next, consider the triangle $\Delta(x_1,x_2,y)$. We can express its altitude $h$ in terms of the side length $\bar{d}(x_1,x_2)=: d_1$, $\bar{d}(x_1,y)=:d_2$ and $\bar{d}(x_2,y)=: d_3$ via Heron's formula:
\begin{align*}
    h &= \frac{2}{d_1} \sqrt{s(s-d_1)(s-d_2)(s-d_3)} \; ,
\end{align*}
where $s=\frac{1}{2}(d_1 + d_2 + d_3)$. 
Now, consider the triangle $\Delta(x_1,x_2,y')$. Due to the relation between $y$ and $y'$ in $\circled{1}$, its altitude $h_x$ is related to $h$ as 
\begin{align*}
    \circled{2} \quad h_x = h - \bar{d}(y,y') \; .
\end{align*}
With the side length - altitude relations given in $\Delta(x_1,x_2,y)$ and $\circled{2}$, we can compute the length of the other sides $\bar{d}(x_1,y')$ and $\bar{d}(x_2,y')$ as follows (with Pythagoras theorem):
\begin{align*}
  \bar{d}(x_1,y') &= \left( h_x^2 + \bar{d}(x_1,y)^2 - h^2 \right)^{1/2} \\
  \bar{d}(x_2,y') &= \left( h_x^2 + \bar{d}(x_2,y)^2 - h^2 \right)^{1/2} \;.
\end{align*}
With that, we can compute the radius of $\bar{\Kc}_x$ as follows: $\bar{\Kc}_x$ circumscribes $\Delta(x_1,x_2,y')$, therefore its radius $\bar{r}_x$ can be computed via Heron's formula as
\begin{align*}
    \bar{r}_x &= \frac{\bar{d}(x_1,y')+\bar{d}(x_2,y')+\bar{d}(x_1,x_2)}{4A} \\
    A &= \sqrt{s(s-\bar{d}(x_1,y'))(s-\bar{d}(x_2,y'))(s-\bar{d}(x_1,x_2))}
\end{align*}
where $s=\frac{1}{2}(\bar{d}(x_1,y')+\bar{d}(x_2,y')+\bar{d}(x_1,x_2))$. With an analog construction, we can compute the radius $\bar{r}_y$ of $\Kc_y$ as function of $\bar{d}(x_1',x_2')$, $\bar{d}(x_1',y)$ and $\bar{d}(x_2',y)$ via relations in the triangle $\Delta(x_1',x_2',y)$.\\

\noindent\textbf{Step 2: Express h-margin as distance between hypercircles.}
As shown in Fig.~\ref{fig:A-hyp-2}, the margin is the hyperbolic distance from a point on $\Kc_x, \Kc_y$ to $\Kc_w$, corresponding to the length of a geodesic connecting the point with the closest point on $\Kc_w$. Let $v \in \Kc_x$ and $u \in \Kc_w$ the closest point on the decision circle. From the geometry of the Poincare half plane we know that there exists a M\"obius transform $\theta \in \text{M\"ob}(\Pb^2)$ such that the images $\theta(u)=i\mu$ and $\theta(v)=i\nu$ of $u,v$ lie on the positive imaginary axis. Since the hyperbolic distance is invariant under M\"obius transforms, we get
\begin{align*}
    d(u,v) = d(\theta(u),\theta(v)) = d(i\mu, i\nu) = \Big\vert \log \frac{\nu}{\mu} \Big\vert \; . 
\end{align*}
Similarly, we can express the distance between between support vectors $x \in \Kc_x$ and $y \in \Kc$, which is twice the hyperbolic margin: Let $\theta(x)=i\mu_x$ and $\theta(y)=i\mu_y$, where $\mu_x, \mu_y$ are given by the intersection points of $\Kc_x, \Kc_y$ with the imaginary axis. Then
\begin{align*}
    2 \epsilon = d(x,y) = \Big\vert \log \frac{\mu_y}{\mu_x} \Big\vert \; . 
\end{align*}
We can express $\mu_x, \mu_y$ in terms of the centers and radii of $\Kc_x,\Kc_y$ as follows (Fig.~\ref{fig:A-hyp-2})
\begin{align*}
    \mu_x &= \bar{c}_x^{(2)} + \bar{r}_x \\
    \mu_y &= \bar{c}_y^{(2)} + \bar{r}_y \; ,
\end{align*}
where $c^{(2)}$ denotes the second coordinate of the point $c \in \Pb^m$.
Putting everything together, we get the following expression for the margin:
\begin{align*}
    \circled{3} \quad \epsilon = \frac{1}{2} \Big\vert   \log \frac{\bar{c}_y^{(2)} + \bar{r}_y}{\bar{c}_x^{(2)} + \bar{r}_x}   \Big\vert \; . \\
\end{align*}

\noindent\textbf{Step 3: Evaluate Distortion.} 
As discussed above (Prop.~\ref{prop:e-dist}), the influence of distortion on the altitude $h$ in the triangle $\Delta(x_1,x_2,y)$ is given by the factor $\frac{1}{c_H}$. 
\begin{align*}
    \circled{4} \quad h' = \frac{h}{c_H} \; .
\end{align*}
$\bar{r}_x$ depends on pairwise distances between support vectors and $h$, which are distorted by a factor $\frac{1}{c_H}$ (by assumption on $\phi_H$ and $\circled{4}$). $\bar{r}_x$ depends further on $h_x$ which in turn depends on $\bar{d}(c_w,y)$. The latter depends on the Euclidean norm of the support vector $y$, i.e., $\norm{y}$. With Ass.~\ref{ass:dist} the total multiplicative distortion is then at most of a factor $\frac{1}{c_H}$. 
We can derive an analogue result for $\bar{r}_y$. 
For the center $\bar{c}_x$ note the following:
\begin{align*}
  \bar{c}_x^{(2)} = \frac{1}{2}\left[
    (1-\bar{r}_w^2)\tilde{x}_0 - (1+\bar{r}_w^2)\tilde{x}_1
  \right] \; ,
\end{align*}
where $(\tilde{x}_0, \tilde{x}_1, 0, \dots, 0)=\tilde{x}=(\pi_{BP}\circ(\pi_{LB} \circ B))$ and $\bar{r}_w = \sqrt{\frac{1-\lambda}{1+\lambda}}$. Rewriting
\begin{align*}
    (1-\bar{r}_w^2) \tilde{x}_0 &= \frac{2 \tilde{w}_0 \tilde{x}_0}{\tilde{w}_1 + \tilde{w}_0} \\
    (1+\bar{r}_w^2) \tilde{x}_1 &=\frac{2\tilde{w}_1 \tilde{x}_1}{\tilde{w}_1 + \tilde{w}_0} \; ,
\end{align*}
we get
\begin{align*}
    \bar{c}_x^{(2)} = \frac{\tilde{w}^T \tilde{x}}{\tilde{w}_0+ \tilde{w}_1} \; .
\end{align*}
Similarly, one can derive
\begin{align*}
    \bar{c}_y^{(2)} = \frac{\tilde{w}^T \tilde{y}}{\tilde{w}_0+ \tilde{w}_1} \; ,
\end{align*}
for $(\tilde{y}_0, \tilde{y}_1, 0, \dots, 0)=\tilde{y}=(\pi_{BP}\circ(\pi_{LB} \circ B))$. Both are only affected by distortion of the form (2), i.e. the multiplicative distortion is given by a factor $\frac{1}{c_H}$. Inserting this into the margin expression ($\circled{4}$) gives
\begin{align*}
    \epsilon' &= \frac{1}{2} \Big\vert \log \frac{c_y' + r_y'}{c_x' + r_x'} \Big\vert 
    \gtrsim \frac{1}{2} \Big\vert \log \frac{\frac{c_y}{c_H} + \frac{r_y}{c_H}}{c_H c_x + c_H r_x} \Big\vert
    = \frac{1}{2} \Big\vert \log \left(  \frac{1}{c_H^2} \frac{c_y + r_y}{c_x + r_x}  \right) \Big\vert \\
    &= \frac{1}{2} \Big\vert  \underbrace{\log  \frac{1}{c_H^2}}_{\approx 0} + \log \frac{c_y + r_y}{c_x + r_x} \Big\vert 
    \overset{\dagger}{\approx} \frac{1}{2} \Big\vert \log \frac{c_y + r_y}{c_x + r_x} \Big\vert = \epsilon
    \; ,
\end{align*}   
where ($\dagger$) follows from $c_H=O(1+\epsilon)$ with $\epsilon>0$ small, by Thm.~\ref{thm:sarkar}.
\end{proof}

\section{Additional Experimental Results}
\label{sec:apx-exp}

\subsection{Hyperbolic perceptron}
To validate the hyperbolic perceptron algorithm, we performed two simple classification experiments.
For the two-class data set (ImageNet n09246464 and n07831146), we
observe that hyperbolic perceptron can successfully classify the points into the two groups, i.e., it achieves zero test error. In a second experiment, we try hyperbolic perceptron on a linearly non-separable dataset. The algorithm was still able to classify reasonably well.

\subsection{Adversarial Gradient descent}
\subsubsection{Choice of loss function}
Following the large body of work on large-margin learning in Euclidean space, we tested our approach with the classic hinge (Eq.~\ref{eq:cho-hinge}) and least squares losses (Eq.~\ref{eq:square-loss}). 
While both algorithms work well in practise (see \S~\ref{sec:exp} and Section~\ref{sec:exp-apx}), they do not fulfill Ass.~\ref{ass1} on the whole domain. Therefore, our theoretical guarantees are not valid for those loss functions.

We derive theoretical results for the hyperbolic logistic loss (Eq.~\ref{eq:hyp-log-loss}) instead, which fulfills Ass.~\ref{ass1}. Unfortunately, the hyperparameter $R_x$ is difficult to determine in practice. We therefore decided to omit validation experiments with the hyperbolic logistic loss.

For choosing an \emph{adversarial budget} $\alpha$ in practice, note that Assumption~\ref{ass1}(2) imposes a norm constraint on the adversarial examples, relative to the maximal norm of the training points. Given the constant $R_x$, one can estimate an upper bound on $\alpha$. In addition, an upper bound on $\alpha$ depends on how separable the data set is, i.e., the maximal possible margin. Within these constraints, the choice of $\alpha$ is guided by a trade-off between better robustness and longer training time.

\subsubsection{Adversarial GD via least squares loss}
\label{sec:exp-apx}
Using the same data set as described in \S~\ref{sec:exp}, we also try classification in hyperbolic space with adversarial examples using the least squares losses (Eq.~\ref{eq:square-loss}).
We use the same procedure to find adversarial examples.
The results are plotted in Figure~\ref{fig:adv-gd-sq} with similar conclusions.

\begin{figure*}
   \vspace{-2mm}
    \centering
    \hfill
    \includegraphics[width=0.3\textwidth]{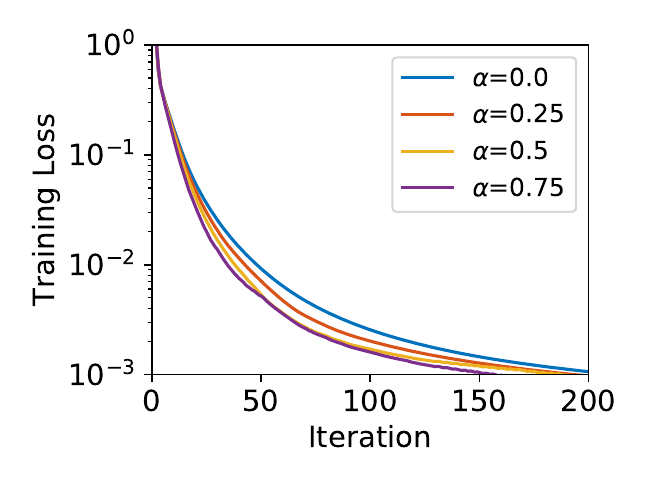}
    \hfill
    \includegraphics[width=0.3\textwidth]{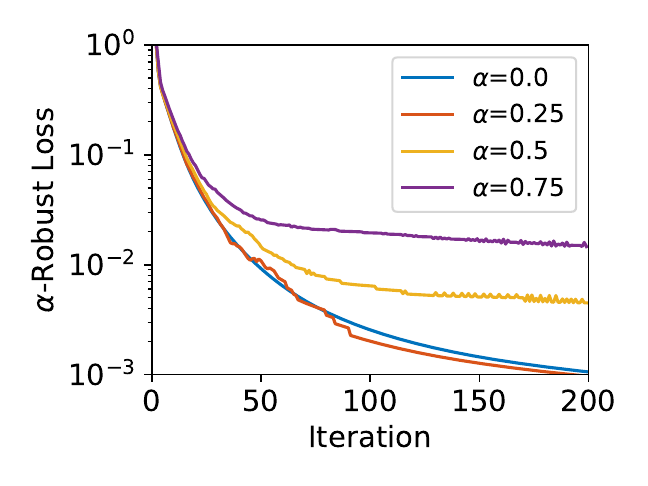}
    \hfill
    \includegraphics[width=0.3\textwidth]{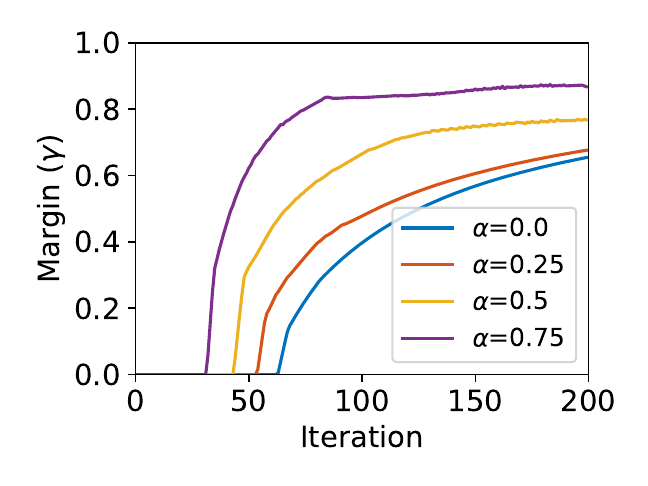}
    \hfill
    \vspace{-3mm}
    \caption{Performance of Adversarial GD using smoothed square loss (Eq.~\ref{eq:square-loss}). \textbf{Left:} Loss $L(\vw)$ on the original data. \textbf{Middle:} $\alpha$-robust loss $L_\alpha(\vw)$. \textbf{Right:} Hyperbolic margin $\gamma_H$.
We vary the adversarial budget $\alpha$ over $\{0, 0.25, 0.5, 0.75\}$. The case $\alpha=0$ corresponds to the setup in~\citep{cho}.}
    \label{fig:adv-gd-sq}
\end{figure*}

\subsection{Dimension-distortion trade-off}\label{apx:f-3}
Euclidean embeddings computed using implementation in~\citet{NK17} by Facebook Research\footnote{\url{https://github.com/facebookresearch/poincare-embeddings}}.
\begin{table}[H]
    \centering
    \begin{tabular}{@{}lcc@{}}
    \toprule
    $d$ & Euclidean & Hyperbolic \\
    \midrule
    4 & 0.54 & 0.51 \\
    8 & 0.53 & 1.00 \\
    16 & 0.68 & 1.00 \\ 
    \bottomrule
    \end{tabular}
    \caption{Classification performance (test error) in hyperbolic vs. Euclidean space of dimension $d$.}
    \label{tab:my_label}
\end{table}

\end{document}